\newcommand{\CR}{\text{RR}}
\newcommand{\ECR}{\widehat{\text{RR}}}
\newcommand{\err}{\mathrm{err}}
\newcommand{\Appendix}[1]{the full version for}
\newcommand{\reg}{\color{black}}
\newtheorem{theorem}{Theorem}[section]
\newtheorem{lemma}[theorem]{Lemma}
\newtheorem{corollary}[theorem]{Corollary}
\newtheorem{remark}{Remark}
\newtheorem{definition}{Definition}
\newcommand{\E}{\mathbf{E}}
\newcommand{\R}{\mathbb{R}}
\renewcommand{\comment}[1]{}
\newcommand{\cA}{\mathcal{A}}
\newcommand{\cB}{\mathcal{B}}
\newcommand{\cD}{\mathcal{D}}
\newcommand{\cF}{\mathcal{F}}
\newcommand{\cH}{\mathcal{H}}
\newcommand{\cL}{\mathcal{L}}
\newcommand{\cY}{\mathcal{Y}}
\newcommand{\cX}{\mathcal{X}}
\newcommand{\bbE}{\mathbb{E}}
\newcommand{\Agree}{\mathsf{Agree}}
\definecolor{colorY}{rgb}{0.7 , 0.7 , 0.2}
\newcommand{\changed}[1]{#1}
\DeclareMathOperator*{\argmin}{argmin}
\title{Robustly-reliable learners under poisoning attacks}
\author{\begin{tabular}{cc}{Maria-Florina Balcan} & {Avrim Blum}\\{Carnegie Mellon University} & {Toyota Technological Institute at Chicago}\\{\tt ninamf@cs.cmu.edu} & {\tt avrim@ttic.edu} \\ & \\ { Steve Hanneke} &   {Dravyansh Sharma}\\{Purdue University} & {Carnegie Mellon University}\\
{\tt steve.hanneke@gmail.com} & {\tt dravyans@cs.cmu.edu}\end{tabular}}
\begin{document}

 \maketitle

\begin{abstract}%
{Data poisoning} attacks, in which an adversary corrupts a training set with the goal of inducing specific desired mistakes, have raised substantial concern: even just the possibility of such an attack can make a user no longer trust the results of a learning system. In this work, we show how to achieve strong robustness guarantees in the face of such attacks across multiple axes.

We provide {\em robustly-reliable predictions}, in which the predicted label is guaranteed to be correct so long as the adversary has not exceeded a given corruption budget, even in the presence of {\em instance targeted attacks}, where the adversary knows the test example in advance and aims to cause a specific failure on that example.
 Our guarantees are substantially stronger than those in prior approaches, which were only able to provide certificates that the prediction of the learning algorithm does not change,  as opposed to certifying that the prediction is correct, as we are able to achieve in our work.
Remarkably, we provide a complete characterization of learnability in this setting, in particular,  nearly-tight matching upper and lower bounds on the region that can be certified, as well as efficient algorithms for computing this region given an ERM oracle. Moreover, for the case of linear separators over logconcave distributions, we provide efficient truly polynomial time algorithms (i.e., non-oracle algorithms) for such robustly-reliable predictions.

We also extend these results to the active setting where the algorithm adaptively asks for labels of specific informative examples, and the difficulty is that the adversary might even be adaptive to this interaction, as well as to the  agnostic learning setting where there is no perfect classifier even over the uncorrupted data.
\end{abstract}

\section{Introduction}
{\bf Overview:}
There has been significant interest in machine learning in recent years in building robust learning systems that are resilient to adversarial attacks, either test time attacks~\citep{goodfellow2014explaining,carlini2017towards,madry2018towards,zhang2019theoretically} or training time attacks \citep{steinhardt2017certified,shafahi2018poison} also known as {\em data poisoning}.
A lot of the effort on providing provable guarantees for such systems has focused on test-time attacks~\citep[e.g.,][]{DBLP:conf/icml/YinRB19,DBLP:conf/alt/AttiasKM19,montasser2019vc,DBLP:conf/nips/MontasserHS20,montasser2021adversarially,goldwasser2020beyond}, and the question of understanding what is fundamentally possible or not under training time attacks or data poisoning is wide open \citep{barreno2006can,levine2020deep}.

In {data poisoning} attacks,  an adversary corrupts a training set used to train a learning algorithm in order to induce some desired behavior. Even just the possibility of such an attack can make a user no longer trust the results of a learning system. Particularly challenging is the prospect of providing formal guarantees for \emph{instance targeted attacks}, where
the adversary has a goal of inducing a mistake on \emph{specific instances}; the difficulty is that the learner does not know which instance the adversary is targeting, so it must try to guard against attacks for essentially every possible instance. In this work, we devolop a general understanding of what robustness guarantees are possible in such cases while simultaneouly analyzing multiple important axes:
\begin{itemize}[leftmargin=*] \itemsep -1pt
\item  {\em Instance targeted attacks}: The adversary can know our test example in advance, applying its full corruption budget with the goal of making the learner fail on this example, and even potentially change its corruptions from one test example to another.
\item {\em Robustly-reliable predictions}: When our algorithm outputs a prediction $y$ with robustness level $\eta$, this is a guarantee that $y$ is correct so long as the target function belongs to the given class and the adversary has corrupted at most an $\eta$ fraction of the training data. For any value $\eta$, we analyze for which points it is possible to provide a prediction with such a strong robustness level; we provide both sample and distribution dependent nearly matching upper and lower bounds on the size of this set, as well as efficient algorithms for constructing it given access to an ERM oracle.
We note that our guarantees are substantially stronger than those in prior work, which were only able to provide certificates of stability (meaning that the prediction of the algorithm does not change), rather than correctness. This is much more desirable because it  implies that our predictions  can truly be trusted.
\item {\em Active learning against adaptive adversaries:} We also address the challenging active learning setting  where the labeled data is expensive to obtain, but the learning algorithm has the power to adaptively ask for labels of select examples from a large pool of unlabaled examples in order to learn accurate classifiers with fewer labeled examples. While this interaction could save labels, it also raises the concern that the adversary could generate a lot of harm by adaptively deciding which points to corrupt based on choices of the algorithm. We provide algorithms that not only learn with fewer labeled examples, but are able to operate against such adaptive adversaries that can make corruptions on the fly.
\item {\em Agnostic learning:} Providing a reasonable extension to the agnostic case where there is no perfect classifier even over the uncorrupted data.
\end{itemize}
While prior work has considered some of these aspects separately, we provide the first results that bring these together, in particular combining reliable predictions with instance-targeting poisoning attacks.
We also provide  nearly-tight upper and lower bounds on guarantees achievable, as well as efficient algorithms.

\smallskip

\noindent {\bf Our Results:}
Instance targeted data poisoning attacks have been of growing concern as learned classifiers are increasingly used in society.
Classic work on data poisoning attacks
(e.g., \cite{valiant1985learning,kearns1993learning,bshouty2002pac,awasthi2017power})
only considers \emph{non}-instance-targeted attacks,
where the goal of the adversary is only to increase
the overall error rate, rather than to cause errors
on specific test points it wishes to target.
Instance targeted poisoning attacks, on the other hand, are more challenging but they are particularly
relevant for modern applications like recommendation engines, fake review detectors and spam filters where training data is likely to include user-generated data, and the results of the learning algorithm could have financial consequences. For example, a company depending on a recommendation engine for advertisement of its product has an incentive to cause a positive classification on the product it produces or to harm a specific competitor.
To defend against such adversaries, we would like to provide predictions with instance-specific correctness guarantees, even when the adversary can use its entire corruption budget to target those instances. 
This has been of significant concern and interest in recent years in machine learning.



In this work we consider algorithms that provide {\em robustly-reliable} predictions, which are guaranteed to be correct under well-specified assumptions even in the face of targeted attacks.  In particular, given a test input $x$, a robustly-reliable classifier outputs both a prediction $y$ and a robustness level $\eta$, with a guarantee that $y$ is correct unless one of two bad events has occurred: (a) the true target function does not belong to its given hypothesis class $\cH$ or (b) an adversary has corrupted more than an $\eta$ fraction of the training data.  Such a guarantee intrinsically has in it a notion of targeting, because the prediction is guaranteed to be correct even if the adversarial corruptions in (b) were designed specifically to target $x$.
Note that it is possible to produce a trivial (and useless) robustly-reliable classifier that always outputs $\eta < 0$ (call this  ``abstaining'' or an ``unconfident prediction'').   We will want to produce classifiers that, as much as possible, instead output {\em confident} predictions, that is, predictions with large values of $\eta$.
This leads to several kinds of guarantees one might hope for, because while the adversary cannot cause the algorithm to be incorrect  with a high confidence, it could potentially cause the classifier to produce low-confidence predictions in a targeted way.

In this work, we demonstrate an optimal robustly-reliable learner $\cL$, and precisely identify the guarantees that can be acheived in this setting, with nearly matching  upper and lower bounds. Specifically,

\begin{itemize}[leftmargin=*] \itemsep -1pt
\item We prove guarantees on the set of test points for which the learner will provide confident predictions, for any given adversarial corruption of the training data (Theorem \ref{thm:empirical-ub}), or even more strongly, for all bounded instance-targeted corruptions of the training set (Theorem \ref{thm:theta-ub}). Intuitively speaking, we show that the set of points on which our learner should be confident are those that belong to the region of agreement of low error hypotheses.   Our learner shows more confidence for points in the agreement regions of larger radii around the target hypothesis. We also show how $\cL$ may be implemented efficiently given an ERM oracle (Theorem \ref{thm:erm}) by running the oracle on datasets where multiple copies of the test point are added with different possible labels. We further provide empirical estimates on the set of such test points (Theorem \ref{thm:theta-ub-empirical}), which could help determine if the learner is ready to be fielded in a highly adversarial environment.

\item We provide fundamental lower bounds on which test points a robustly-reliable learner could possibly be confident on (Theorems \ref{thm:strongly_robustly-reliable_lower_bound} and \ref{thm:probably_robustly-reliable_lower_bound_distributional}). We do this through characterizing the set of points an adversary could successfully attack for any learner, which roughly speaking is the region of disagreement of low error classifiers. Intuitively speaking, ambiguity about which the original dataset is leads to ambiguity about which low error classifier is the target concept so we cannot predict confidently on any point in the region of disagreement among these low error concepts.
Our upper bound in Theorem \ref{thm:theta-ub} and lower bound in Theorem \ref{thm:strongly_robustly-reliable_lower_bound} exactly match, and imply that our learner $\cL$ is optimal in its confident set. 

\item For learning linear separators under logconcave marginal  distributions, we show a polynomial time robustly-reliable learning algorithm  under  instance-targeted attacks produced by  a malicious adversary \citep{valiant1985learning} (see Theorem \ref{lem:abl}). The key idea is to first use the learner from the seminal work of \cite{awasthi2017power} 
(that was designed for non-targeted attacks) to first find a low error classifier and then to use that and the geoemetry of the underlying data distribution in order to efficiently find a good approximation of the region of agreement of the low error classifiers.
We also show that the robust-reliability region for this algorithm is near-optimal for this problem, even ignoring  computational efficiency issues (Theorem \ref{thm:probably_robustly-reliable_lower_bound_distributional_mal}).

\item We also study {\em active learning}, extending the classic
disagreement-based active learning techniques, to achieve comparable robust reliability guarantees as above, but with reduced number of labeled examples (Theorems~\ref{thm:acl-ub}, \ref{thm:active-nasty-noise}), even when an adversary can choose which points to corrupt in an adaptive manner (Theorem~\ref{thm:active-nasty-noise}).
This is particularly challenging because the adversary can use its entire corruption budget on only those points whose labels are specifically requested by the learner.

\item Finally, we generalize our results to the agnostic case where there is no perfect classifier even over the uncorrupted data (Theorems \ref{thm:empirical-ub-nr}, \ref{thm:theta-ub-nr}, \ref{thm:strongly_robustly-reliable_lower_bound-nr}, and \ref{thm:distrib-nr}). In this case, the adversary could cause a robustly-reliable learner to make a mistake, but only if {\em every} low-error hypothesis in the class would have also made a mistake on that point as well.
\end{itemize}

\subsection*{Related Work}
\noindent {\bf  Non-instance targeted poisoning attacks.} The classic malicious noise model introduced in~\citep{valiant1985learning} and subsequently analyzed in \citep{kearns1993learning,bshouty2002pac,klivans2009learning,awasthi2017power}
provides one approach to modeling poisoning attacks. However the malicious noise model only captures untargeted attacks --- formally the adversary's goal is to maximize the learner's overall error rate. 

\smallskip

\noindent {\bf Instance targeted poisoning attacks.}~Instance-targeted poisoning attacks were first considered by \cite{barreno2006can}.  \cite{shafahi2018poison} and \cite{suciu2018does} showed empirically that such targeted attacks can be powerful even if the adversary adds only {\em correctly-labeled data} to the training set (called ``clean-label attacks''). Targeted poisoning attacks have generated significant interest in recent years due to the damage they can cause to the trustworthiness of a learning system \citep{mozaffari2014systematic,chen2017targeted,geiping2020witches}.

Past theoretical work on defenses against instance-targeted poisoning attacks has generally focused on producing certificates of stability, indicating when an adversary with a limited budget could not have changed the prediction that was made.  For example, \cite{levine2020deep} propose partitioning training data into $k$ portions, training separate classifiers on each portion, and then using the strength of the majority-vote over those classifiers as such a certificate (since any given poisoned point can corrupt at most one portion).
\cite{gao2021learning} formalize a wide variety of different kinds of adversarial poisoning attacks, and analyze the problem of providing certificates of stability against them in both distribution-independent and distribution-specific settings.
In contrast to those results that certify when a budget-limited adversary could not {\em change} the learner's prediction, our focus is on certifying that the prediction made is {\em correct}.
For example, a learner that always outputs the ``all-negative'' classifier regardless of the training data would be a certifying learner in the sense of \cite{gao2021learning}  (for an arbitrarily large attack budget) and its correctness region would be the probability mass of true negative examples.  In contrast, in our model, outputting a prediction $(y,\eta)$ means that $y$ is guaranteed to be a correct prediction so long as the adversary corrupted at most an $\eta$ fraction of the training data and the target belongs to the given class; so, a learner that always outputs $(y,\eta)$ for $y=-1$ and $\eta\geq 0$ would not be robustly-reliable in our model (unless the given class only had the all-negative function).  We are the first to consider such strong correctness guarantees in the presence of adversarial data poisoning. 

Interestingly, for learning linear separators, our results improve over those of \cite{gao2021learning} even for producing certificates of stability, in that our algorithms in run polynomial time and apply to a much broader class of data distributions (any isotropic log-concave distribution and not only uniform over the unit ball).

\cite{blum2021robust} provide a theoretical analysis for the special case of clean-label poisoning attacks \citep{shafahi2018poison,suciu2018does}. They  analyze the probability mass of attackable instances for various algorithms and hypothesis classes.  However, they do not consider any form of certification or reliability guarantees.

\smallskip
\noindent{\bf Reliable useful learners.}~
Our model can be viewed as a broad generalization of the perfect selective classification model of \cite{el2012active} and the reliable-useful learning model of \cite{rivest1988learning}, which only consider the much simpler setting of learning from noiseless data, to the setting of noisy data and adversarial poisoning attacks.

\section{Formal Setup}

\noindent{\bf Setup}.  Let $\cD$ denote a data distribution over $\cX\times\cY$, where $\cX$ is the instance space and $\cY=\{0,1\}$ is the label space. Let $\cH\subset \cY^{\cX}$ be the concept space. We will primarily assume the realizable case, that is, for some $h^*\in\cH$, for any $(x,y)$ in the support $\texttt{supp}(\cD)$ of the data distribution $\cD$, we have $y=h^*(x)$ (we extend to the non-realizable case in Section \ref{sec:non-realizable}). We will use $\cD_\cX$ to denote the marginal distribution over $\cX$ of unlabeled examples. The learner $\cL$ has access to a corruption $S'$ of a sample $S\sim \cD^m$,
and is expected to output a hypothesis $h_{\cL(S')}\in \cY^{\cX}$ ({\it proper} with respect to $\cH$ if $h_{\cL(S')}\in\cH$).  We will use the 0-1 loss, i.e. $\ell(h,(x,y))=\mathbbm{1}[h(x)\ne y]$.  For a fixed (possibly corrupted) sample $S'$, let $\err_{S'}(h)$ denote the average empirical loss for hypothesis $h$, i.e. $\err_{S'}(h)=\frac{1}{|S'|}\sum_{(x,y)\in S'}\ell(h,(x,y))$. Similarly define $\err_\cD(h)=\bbE_{(x,y)\sim \cD}[\ell(h,(x,h^*(x)))]$. For a sample $S$, let  $\cH_\eta(S)=\{h \in \cH\mid \err_{S}(h)\le\eta\}$ be the set of hypotheses in $\cH$ with at most $\eta$ error on $S$. Similarly for any distribution $\cD$, let $\cH_\eta(\cD)=\{h \in \cH\mid \err_{\cD}(h)\le\eta\}$.
We will consider a class of attacks where the adversary can make arbitrary corruptions to up to an $\eta$ fraction of the training sample $S$. If the adversary can also choose which examples to attack, it corresponds to the {\it nasty} attack model of \cite{bshouty2002pac}.
We formalize the adversary below.

\noindent {\bf Adversary}. Let $d(S,S')=1-\frac{|S\cap S'|}{m}\in[0,1]$ denote the normalized Hamming distance between two samples $S,S'$ with $m=|S|=|S'|$. Let $A(S)$ denote the sample corrupted by adversary $A$. For $\eta\in[0,1]$, let $\cA_\eta$ be the set of adversaries with corruption budget $\eta$ and $\cA_\eta(S)=\{S'\mid d(S,S')\le \eta\}$ denotes the possible corrupted training samples under an attack from an adversary in $\cA_\eta$. Intuitively, if the given sample is $S'$, we would like to give guarantees for learning when $S'\in\cA_\eta(S)$ for some (realizable) uncorrupted sample $S$. Also we will use the convention that $\cA_\eta(S)=\{\}$ for $\eta<0$ to allow the learner to sometimes predict without any guarantees (cf. Definition \ref{def:robustly-reliable learner}). Note that the adversary can change both $x$ and $y$ in an example $(x,y)$ it chooses to corrupt, and can arbitrarily select which $\eta$ fraction to corrupt as in \cite{bshouty2002pac}.

We now define the notion of a {\em robustly-reliable} learner in the face of instance-targeted attacks.  This learner, for any given test example $x$, outputs both a prediction $y$ and a robust reliability level $\eta_x$, such that $y$ is guaranteed to be correct so long as $h^* \in \cH$ and the adversary's corruption budget is $\leq \eta_x$.  This learner then only gets credit for predictions guaranteed to at least a desired value $\eta$.

\begin{definition}[Robustly-reliable learner] A learner $\cL$ is {\bf robustly-reliable} for sample $S'$ w.r.t. concept space $\cH$ if, given $S'$, the learner outputs a function $\cL_{S'}:\cX\rightarrow\cY\times\R$ such that for all $x\in\cX$ if $\cL_{S'}(x)=(y,\eta)$ and if $S'\in \cA_\eta(S)$ for some sample $S$ labeled by concept $h^*\in\cH$, then $y=h^*(x)$. Note that if $\eta<0$, then  $\cA_\eta(S)=\{\}$ and the above condition imposes no requirement on the learner's prediction.
If $\cL_{S'}(x)=(y,\eta)$ then let $h_{\cL(S')}(x)=y$.

Given sample $S$ labeled by $h^*$, the {\bf $\eta$-robustly-reliable region} $\CR^\cL(S,h^*,\eta)$ for learner $\cL$ is the set of points $x\in\cX$ for which given any $S'\in\cA_\eta(S)$ we have that $\cL_{S'}(x)=(y,\eta')$ with $\eta'\ge\eta$. More generally, for a class of adversaries $\cA$ with budget $\eta$, $\CR^\cL_\cA(S,h^*,\eta)$ is the set of points $x\in\cX$ for which given any $S'\in\cA(S)$ we have that $\cL_{S'}(x)=(y,\eta')$ with $\eta'\ge\eta$.
We also define the {\bf empirical $\eta$-robustly-reliable region} $\ECR^\cL(S',\eta) = \{x \in \cX : \cL_{S'}(x) = (y,\eta') \mbox{ for some }\eta'\geq \eta\}$.  So, $\CR^\cL_\cA(S,h^*,\eta) = \cap_{S'\in\cA(S)} \ECR^\cL(S',\eta)$.
\label{def:robustly-reliable learner}
\end{definition}

\begin{remark}
The requirement of security even to targeted attacks appears in two places in Definition \ref{def:robustly-reliable learner}.  First, if a robustly-reliable learner outputs $(y,\eta)$ on input $x$, then $y$ must be correct even if an $\eta$ fraction of the training data had been corrupted specifically to target $x$.  Second, for a point $x$ to be in the $\eta$-robustly-reliable region, it must be the case that for any $S'\in\cA_\eta(S)$ (even if $S'$ is a targeted attack on $x$) we have $\cL_{S'}(x)=(y,\eta')$ for some $\eta'\ge\eta$.  So, points in the $\eta$-robustly-reliable region are points an adversary cannot successfully target with a budget of $\eta$ or less.
\end{remark}

Definition \ref{def:robustly-reliable learner} describes a robustly-reliable learner for a particular (corrupted) sample. We now extend this to the notion of a learner being robustly reliable with high probability for an adversarially-corrupted sample drawn from a given distribution.

\begin{definition}
A learner $\cL$ is a {\bf $(1-\gamma)$-probably robustly-reliable learner} for concept space $\cH$
under marginal $\cD_X$
if for any target function $h^*\in\cH$,
with probability at least $1-\gamma$ over the draw of $S\sim\cD^m$ (where $\cD$ is the distribution over examples labeled by $h^*$ with marginal $\cD_X$), for all $S'\in\cA_\eta(S)$, and for all $x\in\cX$, if $\cL_{S'}(x)=(y,\eta')$ for $\eta'\geq \eta$ then $y= h^*(x)$.
If $\cL$ is a $(1-\gamma)$-probably robustly-reliable learner with $\gamma=0$ for all marginal distributions $\cD_X$, then we say
$\cL$ is {\bf strongly robustly-reliable} for $\cH$.
Note that
a strongly robustly-reliable learner is a robustly-reliable learner in the sense of Definition \ref{def:robustly-reliable learner} for every sample $S'$.

The $\eta$-robustly-reliable correctness for learner $\cL$ for sample $S$ labeled by $h^*$ is given by the probability mass of the robustly-reliable region, $\text{RobC}^\cL(\cD,\eta,S)=\Pr_{x\sim\cD_\cX}[x\in\CR^\cL(S,h^*,\eta)]$.

\label{def:probably robustly-reliable learner}
\end{definition}

In our analysis we will also need several key quanntities -- agreement/disagreement regions and the disagreement coefficient ---  that have been previously used in the disagreement-based active learning literature \citep{balcan2006agnostic,hanneke2007bound,balcan2009agnostic}.

\begin{definition}[Agreement/Disagreement Regions and Disagreement Coefficient] For hypotheses $h,h'$ define disagreement over distribution $\cD$ as $d_{\cD}(h,h')=\Pr_{x\sim \cD_\cX}[h(x) \ne h'(x)]$ and over sample $S$ as $d_{S}(h,h')=\frac{1}{|S|}\sum_{(x,y)\in S}\mathbbm{1}[h(x) \ne h'(x)]$.  For a hypothesis $h\in\cH$ and a value $\epsilon\ge 0$, the ball $\cB^{\cH}_{\cD}(h,\epsilon)$ around $h$
of radius $\epsilon$ is defined as the set $\{h' \in \cH : \Pr_{x\sim \cD_\cX}[h(x) \ne h'(x)] \le \epsilon\}$. For a sample $S$, the ball $\cB^{\cH}_{S}(h,\epsilon)$ is defined similarly with probability taken over $x\in S$.

For a set $H \subseteq \cH$ of hypotheses, let $\text{DIS}(H)= \{x \in \cX : \exists h_1, h_2 \in \cH \text{ s.t. } h_1(x) \ne h_2(x)\} $ be the disagreement region and $\Agree(H)=\cX\setminus\text{DIS}(H)$ be the agreement region of the hypotheses in $H$.

The disagreement coefficient, $\theta_\epsilon$,  of $\cH$ with respect to $h^*$ over $\cD$ is given by
\[\theta_\epsilon=\sup_{r> \epsilon}\frac{\Pr_{\cD_\cX}
[\text{DIS}(\cB_\cD^\cH(h^*, r))]}{r}.\]
\vspace*{-0.3in}

\label{def:dis}
\end{definition}

\section{Robustly Reliable Learners for Instance-Targeted Adversaries} \label{sec:nastynoise}

We now provide a general strongly robustly-reliable learner using the notion of agreement regions (Theorem \ref{thm:empirical-ub}) and show how it can be implemented efficiently given access to an ERM oracle for $\cH$ (Theorem \ref{thm:erm}). We then prove that its robustly reliable region is pointwise optimal (contains the robustly reliable regions of all other such learners), for all values of the adversarial budget $\eta$ (Theorems \ref{thm:theta-ub} and  \ref{thm:strongly_robustly-reliable_lower_bound}).  Recall that a strongly robustly-reliable learner $\cL$ is given a possibly-corrupted sample $S'$ and outputs a function $\cL_{S'}$ such that if $\cL_{S'} = (y,\eta)$ and $S' = \cA_\eta(S)$ for some (unknown) uncorrupted sample $S$ labeled by some (unknown) target concept $h^* \in \cH$, then $y=h^*(x)$.

\begin{theorem}\label{thm:empirical-ub}
{Let $\cH_\eta(S')=\{h \in \cH\mid \err_{S'}(h)\le\eta\}$.} For any hypothesis class $\cH$, there exists a strongly robustly-reliable learner $\cL$ that given $S'$ outputs a function $\cL_{S'}$ such that $$\ECR^{\cL}(S',\eta) \supseteq \Agree(\cH_\eta(S')).$$
\end{theorem}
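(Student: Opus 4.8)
The plan is to construct the learner $\cL$ explicitly and then verify it satisfies the two requirements: that it is strongly robustly-reliable (so the label it outputs is correct whenever the stated budget assumption holds) and that its empirical $\eta$-robustly-reliable region contains $\Agree(\cH_\eta(S'))$. On input $S'$ and a test point $x$, the learner $\cL_{S'}$ considers, for each candidate budget value $\eta \ge 0$, the version space $\cH_\eta(S')$ of hypotheses with empirical error at most $\eta$ on $S'$. I would define $\eta_x$ to be the largest $\eta$ such that all hypotheses in $\cH_\eta(S')$ agree on $x$ (i.e.\ $x \in \Agree(\cH_\eta(S'))$), and then output $\cL_{S'}(x) = (y, \eta_x)$ where $y$ is that common label; if even $\cH_0(S')$ does not agree on $x$ (or is empty), output some dummy label with $\eta_x < 0$. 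Since $\err_{S'}$ takes values in a finite set for a fixed finite sample, the supremum defining $\eta_x$ is attained and the construction is well-defined.

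For the containment $\ECR^{\cL}(S',\eta) \supseteq \Agree(\cH_\eta(S'))$: if $x \in \Agree(\cH_\eta(S'))$, then by definition $\eta_x \ge \eta$, so $\cL_{S'}(x) = (y,\eta')$ with $\eta' = \eta_x \ge \eta$, which is exactly the membership condition for $\ECR^{\cL}(S',\eta)$. This direction is essentially immediate from how $\eta_x$ is defined.

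The substantive part is showing $\cL$ is strongly robustly-reliable, i.e.\ verifying Definition~\ref{def:robustly-reliable learner}. Suppose $\cL_{S'}(x) = (y,\eta)$ with $\eta \ge 0$ and that $S' \in \cA_\eta(S)$ for some sample $S$ labeled by $h^* \in \cH$. The key observation is that $d(S,S') \le \eta$ means $S$ and $S'$ differ in at most an $\eta$ fraction of their $m$ entries, so $\err_{S'}(h^*) \le d(S,S') \le \eta$ because $h^*$ has zero error on the uncorrupted sample $S$ (realizability) and can disagree with the labels in $S'$ only on the corrupted coordinates. Hence $h^* \in \cH_\eta(S')$. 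But $\eta = \eta_x$ was chosen so that all hypotheses in $\cH_{\eta_x}(S')$ agree on $x$ with common value $y$; in particular $h^*(x) = y$. This is the crux: the budget bound $d(S,S') \le \eta$ forces the true target into the empirical version space $\cH_\eta(S')$, and agreement of that version space on $x$ then pins down the correct label. The case $\eta < 0$ is vacuous since $\cA_\eta(S) = \{\}$ by convention.

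The only mild subtlety — and the step I would be most careful about — is the monotonicity/attainment argument used to define $\eta_x$ and to argue that $h^* \in \cH_{\eta}(S')$ with $\eta = \eta_x$ implies $h^*$ lies in the set whose agreement we invoked: since $\cH_{\eta'}(S') \subseteq \cH_{\eta''}(S')$ for $\eta' \le \eta''$, the property "$\cH_\eta(S')$ agrees on $x$" is downward closed in $\eta$, so the sup over such $\eta$ is attained at $\eta_x$ and $\cH_{\eta_x}(S')$ itself agrees on $x$. Everything else is bookkeeping with the definitions of $d(S,S')$, $\err_{S'}$, and $\Agree(\cdot)$.
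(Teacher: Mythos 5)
Your proposal is correct and follows essentially the same argument as the paper: the same learner (output the largest $\eta$ with $x \in \Agree(\cH_\eta(S'))$ together with the agreed-upon label, else $(\bot,-1)$), the same observation that $S' \in \cA_\eta(S)$ forces $h^* \in \cH_\eta(S')$ so the agreed label equals $h^*(x)$, and the same immediate containment of $\Agree(\cH_\eta(S'))$ in the empirical region. Your extra remarks on monotonicity and attainment of the supremum are fine detail the paper leaves implicit.
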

\begin{proof}
Given sample $S'$, the learner $\cL$ outputs the function $\cL_{S'}(x) =(y,\eta)$ where $\eta$ is the largest value such that $x \in \Agree(\cH_{\eta}(S'))$, and $y$ is the common prediction in that agreement region; if  $x \not\in \Agree(\cH_{\eta}(S'))$ for all $\eta\geq 0$, then $\cL_{S'}(x) =(\bot,-1)$. This is a strongly robustly-reliable learner because if $\cL_{S'}(x) =(y,\eta)$ and $S' \in \cA_\eta(S)$, then $h^* \in \cH_\eta(S')$, so $y=h^*(x)$. Also,
by design, all points in $\Agree(\cH_\eta(S'))$ are given a robust reliability level at least $\eta$.
\end{proof}

We now show how the strongly robustly-reliable learner $\cL$ from Theorem \ref{thm:empirical-ub} can be implemented efficiently given access to an ERM oracle for class $\cH$.

\begin{theorem}
Learner $\cL$ from Theorem \ref{thm:empirical-ub} can be implemented efficiently given an ERM oracle for $\cH$.
\label{thm:erm}
\end{theorem}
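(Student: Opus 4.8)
The plan is to show that the robustly-reliable level $\eta$ assigned to a test point $x$ by the learner $\cL$ of Theorem~\ref{thm:empirical-ub} can be computed with polynomially many calls to an ERM oracle, together with the associated label $y$. Recall $\cL_{S'}(x) = (y,\eta)$ where $\eta$ is the largest value such that $x \in \Agree(\cH_\eta(S'))$. The key observation is that $x \notin \Agree(\cH_\eta(S'))$ iff there exist $h_0, h_1 \in \cH$ with $\err_{S'}(h_0) \le \eta$, $\err_{S'}(h_1) \le \eta$, and $h_0(x) = 0$, $h_1(x) = 1$. So for each candidate label $b \in \{0,1\}$, define $\mathrm{opt}_b(x) = \min\{\err_{S'}(h) : h \in \cH,\ h(x) = b\}$; then the largest $\eta$ for which $x \in \Agree(\cH_\eta(S'))$ is exactly $\eta = \max(\mathrm{opt}_0(x), \mathrm{opt}_1(x)) - 1/m$ if $\mathrm{opt}_0(x) \ne \mathrm{opt}_1(x)$ (and one should be slightly careful about the exact boundary, since empirical errors are multiples of $1/m$), with $y$ being the label $b$ achieving the \emph{smaller} value $\mathrm{opt}_b(x)$; if $\mathrm{opt}_0(x) = \mathrm{opt}_1(x)$ then $x$ is in the disagreement region of $\cH_\eta(S')$ for every $\eta \ge \mathrm{opt}_0(x)$, and also not in any nonempty agreement region below that, so the learner outputs $(\bot, -1)$ unless $\mathrm{opt}_0(x) = \mathrm{opt}_1(x) > 0$ allows a confident call at a strictly smaller threshold — here one takes $\eta = \min(\mathrm{opt}_0(x),\mathrm{opt}_1(x)) - 1/m$ with $y$ determined by whichever of $\cH_\eta(S')$ is nonempty and agrees, which by realizability-style arguments is the all-agreeing label. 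I would state the exact cutoff carefully in the write-up, but the point is that $\eta$ is a simple function of the two quantities $\mathrm{opt}_0(x)$ and $\mathrm{opt}_1(x)$.

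The second step is to compute $\mathrm{opt}_b(x)$ using the ERM oracle. Here is the trick the paper hints at: form the augmented dataset $S'_b = S' \cup \{k \text{ copies of } (x,b)\}$ for a suitably large weight $k$, or more cleanly, run a sequence of ERM calls on $S'$ with $j$ extra copies of $(x,b)$ appended, for $j = 0, 1, \dots, m$. An ERM hypothesis $\hat h_j$ on this augmented set of size $m + j$ balances its empirical error on $S'$ against misclassifying the forced point. For $j$ large enough (indeed $j \ge m$ already suffices, since then misclassifying $(x,b)$ even once costs more than getting all of $S'$ wrong in the relevant comparison after rescaling — one needs to be a little careful, but $j$ polynomial in $m$ works), any ERM hypothesis on $S'_b$ must satisfy $\hat h_j(x) = b$ provided \emph{some} consistent-on-$x$ hypothesis exists, and among such hypotheses it minimizes $\err_{S'}$; thus $\mathrm{opt}_b(x) = \err_{S'}(\hat h_j)$. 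If every ERM hypothesis still has $\hat h_j(x) \ne b$ even with $j = m$ copies forced, that certifies no $h \in \cH$ has $h(x) = b$, so $\mathrm{opt}_b(x) = +\infty$ (in which case $x \in \Agree(\cH_\eta(S'))$ for all $\eta \ge 0$ with $y = 1-b$). Counting the oracle calls: two labels, $O(m)$ values of $j$, and computing $\err_{S'}(\hat h_j)$ is direct — so $O(m)$ oracle calls plus $O(m^2)$ additional bookkeeping, which is polynomial.

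The main obstacle I expect is pinning down the precise threshold behavior at the granularity of $1/m$: the agreement region $\Agree(\cH_\eta(S'))$ changes only at the finitely many values $\eta \in \{0, 1/m, 2/m, \dots\}$, so "the largest $\eta$ such that $x \in \Agree(\cH_\eta(S'))$" is a max over a finite set and I must verify that the value read off from $\mathrm{opt}_0(x), \mathrm{opt}_1(x)$ is exactly the one the learner in Theorem~\ref{thm:empirical-ub} would output, including the edge cases where one of the $\mathrm{opt}_b$ is $0$ (realizable, so at least one always is when $S'$ is uncorrupted, but $S'$ may be corrupted), where they are equal, and where $x$ is forced into the disagreement region at every nonnegative level (output $(\bot,-1)$). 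A secondary, more mundane issue is arguing rigorously that a polynomial number of forced copies suffices to drive the ERM solution to respect the constraint $\hat h(x) = b$ whenever that constraint is satisfiable — this is an averaging argument comparing $\frac{1}{m+j}(\err_{S'}(h)\cdot m + (\text{1 if } h(x)\ne b))$ across $h$, and the clean statement is that $j \ge m$ makes the penalty term dominate any difference in the $S'$-error term, since two distinct empirical error rates on $S'$ differ by at least $1/m$. I would present the argument for a concrete choice such as $j = m$ (or $j = m+1$ to be safe) and note that fractional/weighted ERM, if available, lets one do it in a single call.
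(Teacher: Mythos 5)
Your proposal is correct and takes essentially the same route as the paper's proof: compute $\epsilon_b=\min\{\err_{S'}(h): h\in\cH,\ h(x)=b\}$ for each label $b$ by running the ERM oracle on $S'$ augmented with forced copies of $(x,b)$, then output the label achieving the smaller value together with $\eta=\max\{\epsilon_0,\epsilon_1\}-1/m$, and $(\bot,-1)$ when $\epsilon_0=\epsilon_1$. The only differences are cosmetic: the paper uses a single oracle call per label with exactly $m+1$ copies (note that $m$ copies can tie when the best $b$-consistent hypothesis errs on all of $S'$, so your ``to be safe'' choice $j=m+1$ is the right one), rather than a sweep over $j$, and your hedged aside about a ``confident call at a strictly smaller threshold'' when $\epsilon_0=\epsilon_1$ is unnecessary since $\cH_\eta(S')$ is empty below that level.
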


\begin{proof}
Given sample $S'$ and test point $x$, for each possible label $y \in \cY$ the learner first computes the minimum value $\epsilon_y$ of the empirical error on $S'$ achievable using $h \in \cH$ subject to $h(x)=y$; that is, $\epsilon_y = \min \{\err_{S'}(h) :  h \in \cH, h(x)=y\}$.  This can be computed efficiently using an ERM oracle by simply running the oracle on a training set consisting of $S'$ and $m+1$ copies of the labeled example $(x,y)$ where $m=|S'|$; this will force ERM to produce a hypothesis $h$ such that $h(x)=y$. Now if $\epsilon_0=\epsilon_1$, then this means that  $x \not\in \Agree(\cH_{\eta}(S'))$ for any value $\eta$ for which $\cH_{\eta}(S')$ is nonempty, so the algorithm can output $(\bot,-1)$.  Otherwise, the algorithm can output the label $y = \argmin_{y'} \{\epsilon_{y'}\}$ and $\eta = \max\{\epsilon_0,\epsilon_1\} - 1/m$.  Specifically, by definition of $\epsilon_y$, this is the largest value of $\eta$ such that $x \in \Agree(\cH_{\eta}(S'))$.
\end{proof}

We now analyze the $\eta$-robustly-reliable region for the algorithm above (Theorems \ref{thm:theta-ub} and \ref{thm:theta-ub-empirical}) and prove that it is optimal over all strongly robustly-reliable learners (Theorem \ref{thm:strongly_robustly-reliable_lower_bound}).
Specifically,
Theorem \ref{thm:theta-ub} provides a guarantee on the size of the $\eta$-robustly-reliable region in terms of properties of $h^*$ and $S$, and then Theorem \ref{thm:theta-ub-empirical} gives an empirically-computable bound.  Thus, these indicate when such an algorithm can be fielded with confidence even in the presence of adversaries that control an $\eta$ fraction of the training data and can modify them at will.

\begin{theorem}\label{thm:theta-ub}
For any hypothesis class $\cH$, the strongly robustly-reliable learner $\cL$ from Theorem \ref{thm:empirical-ub} satisfies the property that for all $S$ and for all $\eta\geq 0$,
$$\CR^\cL(S,h^*,\eta) \supseteq \Agree\left(\cB_S^\cH(h^*,2\eta)\right).$$
Moreover, if  $S\sim \cD^m$ then with probability $1-\delta$, $\Agree(\cB_S^\cH(h^*,2\eta)) \supseteq \Agree(\cB_\cD^\cH(h^*,2\eta+\epsilon))$ for $m=O(\frac{1}{\epsilon^2}(d+\ln\frac{1}{\delta}))$.  So, whp, $\CR^\cL(S,h^*,\eta) \supseteq \Agree(\cB_\cD^\cH(h^*,2\eta+\epsilon)).$
Here $d= {\rm VCdim}(\cH)$.

\end{theorem}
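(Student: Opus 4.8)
The plan is to prove the two set-containments separately and then chain them. \textbf{First containment.} Fix $S$ labeled by $h^*$ and $\eta\ge 0$, and let $x\in\Agree(\cB_S^\cH(h^*,2\eta))$; I will show $x\in\CR^\cL(S,h^*,\eta)$. By the description of $\cL$ in the proof of Theorem \ref{thm:empirical-ub}, for any $S'$ the learner outputs $\cL_{S'}(x)=(y,\eta')$ where $\eta'$ is the largest value with $x\in\Agree(\cH_{\eta'}(S'))$, so it suffices to show $x\in\Agree(\cH_\eta(S'))$ for every $S'\in\cA_\eta(S)$. Since $\Agree(\cdot)$ is anti-monotone under set inclusion, this follows once I establish $\cH_\eta(S')\subseteq\cB_S^\cH(h^*,2\eta)$ (here $\cH_\eta(S')$ is nonempty since $h^*\in\cH_\eta(S')$, because $\err_S(h^*)=0$ and $|\err_{S'}(h^*)-\err_S(h^*)|\le d(S,S')\le\eta$).

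\textbf{Where the factor $2\eta$ comes from.} For any fixed $h\in\cH$, since $S$ and $S'$ have the same size $m$ and $d(S,S')\le\eta$, a direct calculation shows $|\err_{S'}(h)-\err_S(h)|\le\eta$: the unnormalized errors $\sum_{z\in S}\ell(h,z)$ and $\sum_{z\in S'}\ell(h,z)$ agree on $S\cap S'$ and differ only on $S\setminus S'$ and $S'\setminus S$, each of size $m\,d(S,S')\le\eta m$, each term lying in $\{0,1\}$. Hence if $h\in\cH_\eta(S')$ then $\err_S(h)\le\err_{S'}(h)+\eta\le 2\eta$, and since every example of $S$ is labeled by $h^*$ we have $\err_S(h)=d_S(h,h^*)$, so $h\in\cB_S^\cH(h^*,2\eta)$. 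This gives $\cH_\eta(S')\subseteq\cB_S^\cH(h^*,2\eta)$, hence $x\in\Agree(\cH_\eta(S'))$, hence $\eta'\ge\eta$, so $x\in\CR^\cL(S,h^*,\eta)$. (The common label on the agreement region is $h^*(x)$ since $h^*\in\cH_\eta(S')$, consistent with strong robust-reliability.)

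\textbf{Second containment (uniform convergence).} I invoke standard VC theory applied to the disagreement class $\cF=\{\,x\mapsto\mathbbm{1}[h(x)\ne h^*(x)]:h\in\cH\,\}$; relabeling by the fixed $h^*$ does not change which point sets can be shattered, so $\mathrm{VCdim}(\cF)=d=\mathrm{VCdim}(\cH)$. Therefore for $m=O(\frac{1}{\epsilon^2}(d+\ln\frac{1}{\delta}))$, with probability at least $1-\delta$ over $S\sim\cD^m$ we have $\sup_{h\in\cH}|d_S(h,h^*)-d_\cD(h,h^*)|\le\epsilon$. On that event, any $h$ with $d_S(h,h^*)\le 2\eta$ has $d_\cD(h,h^*)\le 2\eta+\epsilon$, i.e.\ $\cB_S^\cH(h^*,2\eta)\subseteq\cB_\cD^\cH(h^*,2\eta+\epsilon)$, and anti-monotonicity of $\Agree(\cdot)$ gives $\Agree(\cB_S^\cH(h^*,2\eta))\supseteq\Agree(\cB_\cD^\cH(h^*,2\eta+\epsilon))$. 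Chaining with the first containment yields $\CR^\cL(S,h^*,\eta)\supseteq\Agree(\cB_\cD^\cH(h^*,2\eta+\epsilon))$ with probability $\ge 1-\delta$.

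I do not anticipate a genuine obstacle; the argument is essentially bookkeeping. The one place to be careful is tracking the two independent contributions to the radius $2\eta$ — one $\eta$ from the adversary's budget perturbing every hypothesis's empirical error between $S$ and $S'$, and one $\eta$ from the threshold defining $\cH_\eta(S')$ — and remembering that $\Agree(\cdot)$ \emph{reverses} inclusions, so the chain of containments runs in the opposite direction to the corresponding chain for hypothesis sets.
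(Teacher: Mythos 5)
Your proof is correct and follows essentially the same route as the paper: reduce to the empirical guarantee of Theorem \ref{thm:empirical-ub}, show $\cH_\eta(S')\subseteq\cB_S^\cH(h^*,2\eta)$ for every $S'\in\cA_\eta(S)$ via the $\pm\eta$ shift in empirical error, and finish with uniform convergence to pass from $\cB_S^\cH$ to $\cB_\cD^\cH$. The only cosmetic difference is that the paper records the stronger identity $\bigcap_{S'\in\cA_\eta(S)}\Agree(\cH_\eta(S'))=\Agree\bigl(\cB_S^\cH(h^*,2\eta)\bigr)$, while you prove just the containment direction actually needed for the theorem.
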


\begin{proof}
By Theorem \ref{thm:empirical-ub}, the empirical $\eta$-robustly-reliable region satisfies $\ECR^{\cL}(S',\eta) \supseteq \Agree(\cH_\eta(S'))$.
The set $\CR^\cL(S,h^*,\eta) = \cap_{S'\in\cA_\eta(S)} \ECR^\cL(S',\eta)$
therefore contains
$$\bigcap_{S' \in \cA_\eta(S)} \Agree(\cH_\eta(S')) = \Agree\left(\cB_S^\cH(h^*,2\eta)\right),$$
where the above equality holds because if $h\in \cB_S^\cH(h^*,2\eta)$ then there exists $S' \in \cA_\eta(S)$ such that $h\in \cH_\eta(S')$, and if $h\in \cH_\eta(S')$ for some $S' \in \cA_\eta(S)$ then $h\in \cB_S^\cH(h^*,2\eta)$.
Finally, by uniform convergence (\cite{anthony2009neural} Theorem 4.10), if $S\sim \cD^m$ for $m=O(\frac{1}{\epsilon^2}(d+\ln\frac{1}{\delta}))$ then with probability at least $1-\delta$ we have $d_\cD(h,h^*)\le d_S(h,h^*)+\epsilon$ for all $h \in \cH$.  {Thus,
$\Agree(\cB_S^\cH(h^*,2\eta)) \supseteq \Agree(\cB_\cD^\cH(h^*,2\eta+\epsilon)$, and $\CR^\cL(S,h^*,\eta) \supseteq \Agree(\cB_\cD^\cH(h^*,2\eta+\epsilon)$.}
\end{proof}

\begin{wrapfigure}{r}{0.4\textwidth}
\centering
\vspace*{-0.3in}

\includegraphics[width=0.35\textwidth]{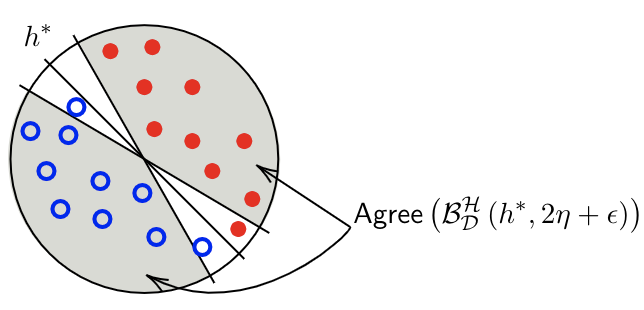}

\vspace*{-0.1in}

\caption{The agreement region of linear separators that
can be given robustly-reliable predictions.}
\vspace*{-0.1in}
\label{figure:agreement region}
\end{wrapfigure}

\begin{remark}[Linear separators for uniform distribution over the unit ball]
The above agreement region can be fairly large for the well-studied setting of learning linear separators under the uniform distribution $\cD$ over the unit ball in $\R^d$, or more generally when the disagreement coefficient (\cite{hanneke2007bound}) is bounded. Since the disagreement coefficient is known to be at most $\pi\sqrt{d}$ in this setting (\cite{hanneke2007bound}), we have that $\Pr[\text{DIS}(\cB_\cD^\cH(h^*,2\eta+\epsilon))]\le \pi\sqrt{d}(2\eta+\epsilon)$ or $\Pr[\Agree(\cB_\cD^\cH(h^*,2\eta+\epsilon))]\ge 1-\pi\sqrt{d}(2\eta+\epsilon)$. For example, if $1\%$ of our data is poisoned and our dataset is large enough so that $\epsilon=0.1\%$, even for $d=9$ we are confident on over $80\%$ of the points. We illustrate the relevant agreement region for linear separators in Figure \ref{figure:agreement region}.
\end{remark}

We can also obtain (slightly weaker) empirical estimates for the size of the robustly-reliable region for the learner $\cL$ from Theorem \ref{thm:empirical-ub}.
Such estimates could help determine when the algorithm can be safely fielded in an adversarial environment.
For example, in federated learning, if an adversary controls processors that together hold an $\eta$ fraction of the training data, and can also cause the learning algorithm to re-run itself at any time, then this would allow one to decide if a low abstention rate can be guaranteed or if additional steps need to be taken (like collecting more training data, or adding more security to processors).

\begin{theorem}\label{thm:theta-ub-empirical}
For any hypothesis class $\cH$, the strongly robustly-reliable learner $\cL$ from Theorem \ref{thm:empirical-ub} satisfies the property that for all $S$, for all $\eta\geq 0$ and for all $S'\in\cA_\eta(S)$,
$$\CR^\cL(S,h^*,\eta) \supseteq \Agree(\cH_{3\eta}(S')).$$
Furthermore, $\cL$ outputs a hypothesis $\hat{h}$ such that
$\CR^\cL(S,h^*,\eta) \supseteq \Agree\left(\cB_{S'}^\cH(\hat{h},4\eta)\right).$
\end{theorem}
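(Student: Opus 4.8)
The plan is to use the identity $\CR^\cL(S,h^*,\eta)=\bigcap_{S''\in\cA_\eta(S)}\ECR^\cL(S'',\eta)$ from Definition~\ref{def:robustly-reliable learner}, together with $\ECR^\cL(S'',\eta)\supseteq\Agree(\cH_\eta(S''))$ from Theorem~\ref{thm:empirical-ub}, and to compare the empirical-error sublevel sets at two corrupted samples $S'$ and $S''$ that are both within budget $\eta$ of the same clean $S$. The guiding fact throughout is anti-monotonicity of agreement regions: $H_1\subseteq H_2$ implies $\Agree(H_1)\supseteq\Agree(H_2)$.

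For the first containment I would fix an arbitrary $S''\in\cA_\eta(S)$ and note that, since $d(S',S)\le\eta$ and $d(S'',S)\le\eta$, the triangle inequality for the normalized Hamming distance gives $d(S',S'')\le 2\eta$; and since flipping a single coordinate of a sample changes the empirical $0$-$1$ loss of any fixed hypothesis by at most $1/m$, we get $|\err_{S'}(h)-\err_{S''}(h)|\le d(S',S'')\le 2\eta$ for every $h\in\cH$. Hence $\cH_\eta(S'')\subseteq\cH_{3\eta}(S')$, so $\Agree(\cH_{3\eta}(S'))\subseteq\Agree(\cH_\eta(S''))\subseteq\ECR^\cL(S'',\eta)$. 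Since this holds for every $S''\in\cA_\eta(S)$, intersecting over all such $S''$ gives $\Agree(\cH_{3\eta}(S'))\subseteq\CR^\cL(S,h^*,\eta)$. (A more hands-on alternative: any $h\in\cB_S^\cH(h^*,2\eta)$ disagrees with $h^*$ on at most $2\eta m$ coordinates of $S$; on the $\ge(1-\eta)m$ coordinates where $S$ and $S'$ coincide these carry the true labels, and on the remaining $\le\eta m$ coordinates $h$ can err arbitrarily, so $\err_{S'}(h)\le 3\eta$, i.e.\ $\cB_S^\cH(h^*,2\eta)\subseteq\cH_{3\eta}(S')$, and then combine with Theorem~\ref{thm:theta-ub}.)

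For the second claim I would have $\cL$ additionally run the ERM oracle on $S'$ alone to produce $\hat h\in\argmin_{h\in\cH}\err_{S'}(h)$, at the cost of one extra oracle call. Because $h^*$ labels $S$ perfectly and $S'$ differs from $S$ in at most $\eta m$ positions, $\err_{S'}(h^*)\le\eta$, so $\err_{S'}(\hat h)\le\eta$ by optimality. Now for any $h\in\cH_{3\eta}(S')$, whenever $h(x)\ne\hat h(x)$ on a point $(x,y)\in S'$ at least one of $h,\hat h$ must disagree with $y$, so $d_{S'}(h,\hat h)\le\err_{S'}(h)+\err_{S'}(\hat h)\le 3\eta+\eta=4\eta$; that is, $\cH_{3\eta}(S')\subseteq\cB_{S'}^\cH(\hat h,4\eta)$. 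Applying anti-monotonicity of $\Agree$ once more and chaining with the first claim yields $\Agree(\cB_{S'}^\cH(\hat h,4\eta))\subseteq\Agree(\cH_{3\eta}(S'))\subseteq\CR^\cL(S,h^*,\eta)$.

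The substantive content is light: everything reduces to the triangle inequality for $d$ and the perturbation bound $|\err_{S'}(h)-\err_{S''}(h)|\le d(S',S'')$, plus the elementary $d_{S'}(h,\hat h)\le\err_{S'}(h)+\err_{S'}(\hat h)$. The one thing to watch is that each step converts a containment of hypothesis sets into the \emph{reverse} containment of agreement regions, so it would be easy to flip a direction by accident; I would also double-check that $d(S,S')=1-|S\cap S'|/m$ genuinely satisfies the triangle inequality (it does, being the fraction of disagreeing coordinates when the samples are read as length-$m$ sequences), since the whole argument rests on that step.
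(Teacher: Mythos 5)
Your proof is correct, and for the second containment it coincides with the paper's argument exactly: take $\hat h$ with $\err_{S'}(\hat h)\le\eta$ (ERM on $S'$ works since $\err_{S'}(h^*)\le\eta$), observe $\cH_{3\eta}(S')\subseteq\cB_{S'}^{\cH}(\hat h,4\eta)$ by the triangle inequality, and flip the containment through $\Agree$. For the first containment your primary route is a mild variant of the paper's: the paper shows $\cB_S^{\cH}(h^*,2\eta)\subseteq\cH_{3\eta}(S')$ (using that $S$ is labeled by $h^*$, so $\err_S(h)\le 2\eta$ and hence $\err_{S'}(h)\le 3\eta$) and then simply invokes Theorem \ref{thm:theta-ub}, which is precisely your parenthetical alternative; your main route instead goes back to the identity $\CR^{\cL}(S,h^*,\eta)=\bigcap_{S''\in\cA_\eta(S)}\ECR^{\cL}(S'',\eta)$ and uses the triangle inequality on the sample distance together with $|\err_{S'}(h)-\err_{S''}(h)|\le d(S',S'')$ to get $\cH_\eta(S'')\subseteq\cH_{3\eta}(S')$ for every $S''\in\cA_\eta(S)$. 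Both are sound and of the same elementary flavor; your main route has the small advantage of being self-contained (it re-derives what is needed without routing through the ball around $h^*$, and does not use realizability for that step), while the paper's route is shorter because Theorem \ref{thm:theta-ub} already packages the intersection argument. Your cautionary notes are also well placed: the containment directions under $\Agree$ do reverse at each step, and the triangle inequality for $d(S,S')=1-|S\cap S'|/m$ holds under the paper's intended reading of samples as length-$m$ sequences.
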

\begin{proof} We use the fact that $S'\in\cA_\eta(S)$ to conclude $\Agree(\cH_{3\eta}(S'))\subseteq \Agree\left(\cB_S^\cH(h^*,2\eta)\right)$, which together with Theorem \ref{thm:theta-ub} implies the first claim. Indeed $h\in\cB_S^\cH(h^*,2\eta)$ implies that $\err_{S}(h)\le 2\eta$ and therefore $\err_{S'}(h)\le 3\eta$. Also, if $\err_{S'}(h)\le 3\eta$, then $h\in\cB_{S'}^\cH(\hat{h},4\eta)$ since $\err_{S'}(\hat{h})\le \eta$ again using the triangle inequality, which implies the second claim.
\end{proof}

It turns out that the bound on the robustly-reliable region from Theorem \ref{thm:theta-ub} is essentially optimal. We can show the following lower bound on the ability of any robustly-reliable learner for any hypothesis class $\cH$ to be confident on any point in $\Agree(\cB_S^{\cH}(h^*,2\eta))$. Our upper and lower bounds extend the results of \cite{gao2021learning} for learning halfspaces over the uniform distribution to general hypothesis classes and any distribution with bounded region of disagreement.

\begin{theorem}
Let $\cL$ be a strongly robustly-reliable learner for hypothesis class $\cH$.  Then for any $h^* \in \cH$ and any sample $S$, any point in the $\eta$-robustly-reliable region must lie in the agreement region of $\cB_S^\cH(h^*,2\eta)$. That is,
$$\CR^\cL(S,h^*,\eta) \subseteq \Agree\left(\cB_S^\cH(h^*,2\eta)\right).$$
Moreover, if $S\sim \cD^m$ then with probability $1-\delta$, $\Agree(\cB_S^\cH(h^*,2\eta)) \subseteq \Agree(\cB_\cD^\cH(h^*,2\eta-\epsilon))$ for $m=O(\frac{1}{\epsilon^2}(d+\ln\frac{1}{\delta}))$, where $d = {\rm VCdim}(\cH)$. So, whp, $\CR^\cL(S,h^*,\eta) \subseteq \Agree(\cB_\cD^\cH(h^*,2\eta-\epsilon)).$
\label{thm:strongly_robustly-reliable_lower_bound}
\end{theorem}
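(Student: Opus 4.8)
The plan is to prove the contrapositive at the sample level: if $x \notin \Agree(\cB_S^\cH(h^*,2\eta))$, then $x \notin \CR^\cL(S,h^*,\eta)$, i.e.\ there is some corrupted sample $S' \in \cA_\eta(S)$ on which $\cL_{S'}(x)$ fails to certify with confidence $\ge \eta$. The hypothesis $x \notin \Agree(\cB_S^\cH(h^*,2\eta))$ means there exist $h_1, h_2 \in \cH$ with $d_S(h_i,h^*) \le 2\eta$ for $i=1,2$ but $h_1(x) \ne h_2(x)$. The key observation, mirroring the equality used in the proof of Theorem~\ref{thm:theta-ub}, is that any such $h$ with $d_S(h,h^*) \le 2\eta$ can be made consistent with some $S' \in \cA_\eta(S)$: since $h$ and $h^*$ disagree on at most a $2\eta$ fraction of $S$, we can partition those disagreement points into two halves of size at most $\eta m$ each; correcting one half to agree with $h$ yields a sample $S'$ labeled (consistently with) $h$ on which $\err_{S'}(h) \le \eta$ — wait, more carefully: we want a single $S'$ that is $\eta$-close to $S$ and on which \emph{both} a small-error certificate exists forcing label $h_1(x)$ and (via a different $S''$) forcing $h_2(x)$.

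Concretely, I would argue as follows. First I would show that for each $i$, there is a sample $S_i' \in \cA_\eta(S)$ and a target $h_i^* = h_i$ such that $S_i'$ is labeled by $h_i$ and $d(S_i', S) \le \eta$: indeed $d_S(h_i, h^*)\le 2\eta$, so relabeling a suitable $\le \eta$-fraction of $S$ already brings us to a sample on which $h_i$ has empirical error $\le \eta$; but to get $S_i'$ \emph{realizable} by $h_i$ we may need up to $2\eta$ relabelings, which is too many. The right move is the symmetric one: pick $S'$ to be a sample at distance $\le \eta$ from $S$ such that \emph{both} $h_1$ and $h_2$ lie in $\cH_\eta(S')$. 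This is possible precisely when there is a common $S'$ within Hamming distance $\eta$ of $S$ on which each $h_i$ errs on $\le \eta m$ points; taking $S' = S$ itself works if $\eta \ge 2\eta$ fails, so instead: let $S'$ agree with $S$ except we relabel the points where $h_1$ and $h_2$ \emph{both} disagree with $h^*$... I need to handle the general case. The cleanest route: since $d_S(h_1,h^*)\le 2\eta$ and $d_S(h_2,h^*)\le 2\eta$, by an averaging/pigeonhole argument there is a set $T\subseteq S$ with $|T|\le \eta m$ whose removal-and-relabel makes $h_1$ have error $\le\eta$ on the result, and \emph{simultaneously} we exhibit a \emph{different} $S''\in\cA_\eta(S)$ for $h_2$. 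Then on input $x$: if $\cL$ is strongly robustly-reliable, then on $S'$ it must output $(h_1(x), \eta')$ with the guarantee valid when the true target is $h_1$ and budget is $\eta$; if it outputs $\eta' \ge \eta$ this forces its label to be $h_1(x)$. Symmetrically on $S''$ it is forced to label $h_2(x) \ne h_1(x)$. But wait — these are different corrupted samples, so there's no contradiction yet. The actual contradiction: we need a \emph{single} $S^\dagger \in \cA_\eta(S)$ that is simultaneously $\eta$-reachable from a clean sample labeled by $h_1$ \emph{and} from one labeled by $h_2$; then $\cL_{S^\dagger}(x) = (y,\eta')$ with $\eta'\ge\eta$ would force both $y = h_1(x)$ and $y = h_2(x)$, impossible, so $\eta' < \eta$ and $x\notin\CR^\cL(S,h^*,\eta)$.

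So the core combinatorial lemma is: \emph{if $d_S(h_1,h^*)\le 2\eta$ and $d_S(h_2,h^*)\le 2\eta$ then there exists $S^\dagger$ with samples $S_1$ labeled by $h_1$ and $S_2$ labeled by $h_2$ such that $d(S^\dagger,S_1)\le\eta$, $d(S^\dagger,S_2)\le\eta$, and $d(S_i,S)$ is unconstrained} — actually we don't need $S_i$ near $S$; we only need $S^\dagger \in \cA_\eta(S)$, i.e.\ $d(S^\dagger, S)\le\eta$, together with $d(S^\dagger, S_1)\le\eta$ for some $S_1$ labeled by $h_1$ and likewise $S_2$ labeled by $h_2$. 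Given the disagreement sets $D_i = \{p\in S : h_i(p)\ne h^*(p)\}$ with $|D_i|\le 2\eta m$, I would construct $S^\dagger$ by relabeling, on the points of $D_1\cup D_2$, to an appropriate choice, showing $d(S^\dagger,S)\le\eta$ by a careful split, and then $S_1$ (resp.\ $S_2$) obtained by relabeling $S^\dagger$ to be fully $h_1$-consistent (resp.\ $h_2$-consistent) stays within $\eta$; this is where the factor-$2$ in $2\eta$ is exactly consumed and is the crux. The final, distributional part is routine: apply the same uniform-convergence statement (Anthony–Bartlett Theorem 4.10) as in Theorem~\ref{thm:theta-ub} in the reverse direction to pass from $\cB_S^\cH(h^*,2\eta)$ to $\cB_\cD^\cH(h^*,2\eta-\epsilon)$ with $m = O(\frac{1}{\epsilon^2}(d+\ln\frac1\delta))$, noting that $\Agree$ is inclusion-reversing in its argument. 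The main obstacle is the combinatorial lemma — pinning down the exact reachability geometry so that a single $S^\dagger$ serves as an $\eta$-corruption witnessed by \emph{two} distinct clean-and-realizable explanations $h_1,h_2$ — and making sure the bookkeeping gives $\eta$ and not $2\eta$ or $\eta/2$.
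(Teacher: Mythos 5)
Your overall skeleton is the right one and is the paper's: argue the contrapositive by exhibiting a single ambiguous corrupted sample $S^\dagger\in\cA_\eta(S)$ that is simultaneously an $\eta$-corruption of clean samples realizable by two hypotheses that disagree at $x$, so that a reported level $\eta'\ge\eta$ would force two different labels at once; the distributional part via uniform convergence is also routine and matches the paper. However, the step you yourself flag as ``the crux'' is not just missing -- in the generality in which you state it, it is false. Your combinatorial lemma asks, for \emph{arbitrary} $h_1,h_2\in\cB_S^\cH(h^*,2\eta)$ with $h_1(x)\ne h_2(x)$, for one $S^\dagger$ with $d(S^\dagger,S)\le\eta$, $d(S^\dagger,S_1)\le\eta$ and $d(S^\dagger,S_2)\le\eta$, where $S_i$ is realizable by $h_i$. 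Take $h_1,h_2$ whose disagreement sets with $h^*$ on $S$ are disjoint and each of size exactly $2\eta m$; nothing in your hypotheses rules this out. If $S_1,S_2$ are relabelings of the same unlabeled points (as in your sketch), then $d(S_1,S_2)=4\eta$, while $d$ obeys the triangle inequality, so $d(S_1,S_2)\le d(S_1,S^\dagger)+d(S^\dagger,S_2)\le 2\eta$, a contradiction; and allowing $S_1,S_2$ to replace points rather than relabel does not rescue it, since a point of $S^\dagger$ can be matched simultaneously to $S$, $S_1$ and $S_2$ only if it lies in $S$ off both disagreement sets, giving at most $3m-4\eta m$ total matches versus the $3(1-\eta)m$ required. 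So no ``careful split'' exists at this level of generality.

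The missing idea that makes the bookkeeping come out to $\eta$ rather than $2\eta$ is that you never need two arbitrary hypotheses from the ball: the ball is centered at $h^*$, so $h^*$ itself is available as one of the pair. Since labels are binary, $x\notin\Agree(\cB_S^\cH(h^*,2\eta))$ already yields some $h'\in\cB_S^\cH(h^*,2\eta)$ with $h'(x)\ne h^*(x)$, and you should work with the pair $(h^*,h')$. Then the clean sample for $h^*$ is $S$ itself, so the conditions ``$S^\dagger\in\cA_\eta(S)$'' and ``$S^\dagger$ is an $\eta$-corruption of an $h^*$-realizable sample'' coincide, and there is only \emph{one} disagreement set, of size at most $2\eta m$: labeling half of it by $h^*$ and half by $h'$ (keeping all other labels from $h^*$, on which $h'$ agrees) produces $S^\dagger$ within distance $\eta$ of both $S$ and the $h'$-labeled copy of $S_X$. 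This is exactly the paper's construction, and with it your intended contradiction goes through: strong robust reliability applied to the two explanations forces $\cL_{S^\dagger}(x)$ to equal both $h^*(x)$ and $h'(x)$ whenever its reported level is at least $\eta$, which is impossible, so $x\notin\CR^\cL(S,h^*,\eta)$.
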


\begin{proof}
Let $x \not\in \Agree(\cB_{S}^\cH(h^*,2\eta))$.  We will show that $x$ cannot be in the $\eta$-robustly-reliable region.  First, since $x \not\in \Agree(\cB_{S}^\cH(h^*,2\eta))$, there must exist some $h' \in \cB_{S}^\cH(h^*,2\eta)$ such that $h'(x) \neq h^*(x)$. Next, let $S_X$ be the points in $S$ with labels removed, and let $S_\cA$ denote a labeling of $S_X$ such that exactly half the points in $\Delta_S=\{x\in S_X\mid h'(x)\ne h^*(x)\}$ are labeled according to $h^*$ (the remaining half using $h'$, for convenience assume $|\Delta_S|$ is even). Notice that sample $S=\{(x_i,h^*(x_i)) \mid x_i\in S_X\}$ which labels points in $S_X$ using $h^*$ satisfies $S_\cA\in\cA_\eta(S)$ since $h'\in \cB_{S}(h^*,2\eta)$. Also for $S'=\{(x_i,h'(x_i))\mid x_i\in S_X\}$ labeled by $h'$, we have $S_\cA\in\cA_\eta(S')$. Now, assume for contradiction that $x \in \CR^\cL(S,h^*,\eta)$.  This means that $\cL_{S_\cA}(x)=(y,\eta')$ for some $\eta'\geq \eta$.  However, if $y\ne h^*(x)$, the learner is incorrectly confident for (true) dataset $S$ since $S_\cA\in\cA_\eta(S)$.
Similarly, if $y=h^*(x)$, the learner is incorrectly confident for sample $S'$ since $h'(x)\ne h^*(x)$. Thus, $\cL$ is not a strongly robustly-reliable learner and we have a contradiction.

Finally, {by uniform convergence},
if $S\sim \cD^m$ for $m=O(\frac{1}{\epsilon^2}(d+\ln\frac{1}{\delta}))$ then with probability at least $1-\delta$ we have $d_\cD(h,h^*)\ge d_S(h,h^*)-\epsilon$ for all $h \in \cH$.  This implies that
$\Agree(\cB_S^\cH(h^*,2\eta)) \subseteq \Agree(\cB_\cD^\cH(h^*,2\eta-\epsilon)$
and so $\CR^\cL(S,h^*,\eta) \subseteq \Agree(\cB_\cD^\cH(h^*,2\eta-\epsilon)$ as desired.
\end{proof}

We can also extend the above result to $(1-\gamma)$-probably robustly-reliable learners, as follows.

\begin{theorem}
Let $\cL$ be a $(1-\gamma)$-probably robustly-reliable learner for hypothesis class $\cH$ under marginal $\cD_X$. For any $h^*\in\cH$, given a large enough sample size $m=|S|\ge\frac{c}{\epsilon^2}\ln\frac{1}{\delta}$, we have
$$\bbE_{S\sim\cD^m}[\text{RobC}^\cL(\cD,\eta,S)]\le \Pr[\Agree(\cB_\cD^\cH(h^*,2\eta-\epsilon))]+2\gamma+\delta,$$
where $c$ is an absolute constant and $\cD$ is the distribution with marginal $\cD_X$ consistent with $h^*$.
\label{thm:probably_robustly-reliable_lower_bound_distributional}
\end{theorem}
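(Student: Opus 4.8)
The plan is to lift the pointwise lower bound of Theorem~\ref{thm:strongly_robustly-reliable_lower_bound} to the probabilistic setting by exploiting the fact that a $(1-\gamma)$-probably robustly-reliable learner must, on a $(1-\gamma)$-fraction of samples $S$, behave like a genuinely robustly-reliable learner for that fixed $S$, and then applying the deterministic bound on that good event. First I would fix $h^*\in\cH$ and the marginal $\cD_X$, and introduce the ``good event'' $E$ over the draw $S\sim\cD^m$ on which the high-probability conclusion of Definition~\ref{def:probably robustly-reliable learner} holds; by definition $\Pr[E]\ge 1-\gamma$. On $E$, for the drawn $S$, the learner satisfies: for all $S'\in\cA_\eta(S)$ and all $x$, if $\cL_{S'}(x)=(y,\eta')$ with $\eta'\ge\eta$ then $y=h^*(x)$. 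This is exactly the property used in the proof of Theorem~\ref{thm:strongly_robustly-reliable_lower_bound}, \emph{except} that there we also needed correctness with respect to the alternative sample $S'=\{(x_i,h'(x_i))\}$ labeled by $h'\ne h^*$. So the second step is to handle the asymmetry: the deterministic argument played $h^*$ against $h'$, and in the probabilistic world we only control the draw under $h^*$, not under $h'$.

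To handle this, I would condition on a second good event $E'$, namely the event that the same drawn unlabeled points $S_X$, relabeled by $h'$, also lie in the good set for target $h'$. Since $\cL$ is $(1-\gamma)$-probably robustly-reliable for \emph{every} target in $\cH$, for each fixed $h'$ the relabeled sample is good with probability $\ge 1-\gamma$ over the draw of $S_X\sim\cD_X^m$; a union bound gives $\Pr[E\cap E']\ge 1-2\gamma$. On $E\cap E'$, the contradiction argument from Theorem~\ref{thm:strongly_robustly-reliable_lower_bound} goes through verbatim for the drawn $S$ and any $h'\in\cB_S^\cH(h^*,2\eta)$ witnessing disagreement at $x$, so $\CR^\cL(S,h^*,\eta)\subseteq\Agree(\cB_S^\cH(h^*,2\eta))$ on that event, hence $\text{RobC}^\cL(\cD,\eta,S)\le\Pr_{x\sim\cD_X}[x\in\Agree(\cB_S^\cH(h^*,2\eta))]$. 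The subtlety to get right is that $h'$ may depend on $S$, so the union bound must be phrased so that $E'$ covers all relevant $h'$ simultaneously — this is cleanest by noting that the bad outcome for $x$ reduces to a statement about the single relabeled sample actually constructed, and taking a union over $S$-measurable choices, or alternatively by a VC-type uniform argument over $\cH$.

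The third step converts the empirical agreement region to the distributional one and absorbs the failure probabilities. By standard uniform convergence (Theorem~4.10 of \cite{anthony2009neural}), for $m\ge\frac{c}{\epsilon^2}\ln\frac{1}{\delta}$ — more precisely one wants the $d$-dependent bound, but the statement as written suppresses $d$ — with probability $\ge 1-\delta$ we have $d_S(h,h^*)\le d_\cD(h,h^*)+\epsilon$ for all $h$, so $\cB_S^\cH(h^*,2\eta)\supseteq\cB_\cD^\cH(h^*,2\eta-\epsilon)$ (taking care of the direction of the inequality as in the proof of Theorem~\ref{thm:strongly_robustly-reliable_lower_bound}), whence $\Agree(\cB_S^\cH(h^*,2\eta))\subseteq\Agree(\cB_\cD^\cH(h^*,2\eta-\epsilon))$. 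Taking expectations over $S$: on the event $E\cap E'\cap\{\text{uniform convergence holds}\}$, which has probability $\ge 1-2\gamma-\delta$, we get $\text{RobC}^\cL(\cD,\eta,S)\le\Pr[\Agree(\cB_\cD^\cH(h^*,2\eta-\epsilon))]$; on the complement we bound $\text{RobC}^\cL$ trivially by $1$, contributing at most $2\gamma+\delta$. Summing gives $\bbE_S[\text{RobC}^\cL(\cD,\eta,S)]\le\Pr[\Agree(\cB_\cD^\cH(h^*,2\eta-\epsilon))]+2\gamma+\delta$, as claimed.

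The main obstacle I anticipate is the second step — making the union bound over the $S$-dependent alternative hypothesis $h'$ rigorous. The honest fix is either (i) to observe that for each fixed unlabeled configuration there are only finitely many (or VC-controllably many) labelings of $S_X$ induced by $\cH$, so one can union-bound the ``$h'$-relabeling is good'' event over all such labelings and pay only a $\mathrm{poly}(m)^d$ factor inside the $\ln(1/\delta)$, or (ii) to note that we only ever need the good event for the \emph{one} relabeled sample $S_\cA$ that the adversary constructs, and argue measurability carefully. Everything else (the contradiction argument and the uniform-convergence bookkeeping) is a direct transcription of the proof of Theorem~\ref{thm:strongly_robustly-reliable_lower_bound} and should be routine.
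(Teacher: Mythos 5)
There is a genuine gap in your second step, and it is exactly the point where the paper's proof takes a different (and necessary) route. Your plan is to first prove the per-sample containment $\CR^\cL(S,h^*,\eta)\subseteq\Agree(\cB_S^\cH(h^*,2\eta))$ on a good event and only then pass to the distribution; this forces the disagreement witness $h'$ to be drawn from the \emph{sample} ball $\cB_S^\cH(h^*,2\eta)$, hence to depend on $S$. But the definition of a $(1-\gamma)$-probably robustly-reliable learner only says: for each \emph{fixed} target $h'$, with probability $\ge 1-\gamma$ over a sample drawn from the distribution labeled by $h'$, the learner is reliable. Neither of your proposed fixes repairs this. Fix (i), union-bounding over the effective labelings of $S_X$ induced by $\cH$, multiplies the failure probability by the growth function, giving a term of order $\Pi_\cH(m)\,\gamma = O(m^d)\gamma$ rather than $2\gamma$; unlike $\delta$, the quantity $\gamma$ is a fixed property of the learner, not a tail parameter you can shrink, so there is no ``$\ln(1/\delta)$'' into which this factor can be absorbed. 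Fix (ii), ``arguing measurability carefully,'' does not address the real problem, which is distributional: for an $S$-dependent $h'$ the relabeled sample is not distributed as $\cD'^m$ for any fixed $\cD'$, so the definition simply cannot be invoked. The per-sample containment you are after is in fact only proved in the paper under the strictly stronger notion of a $(1-\gamma)$-\emph{uniformly} robustly-reliable learner (Theorem \ref{thm:probably_robustly-reliable_lower_bound_sample} in Appendix \ref{sec:additional}), whose definition quantifies over all targets simultaneously on a single good unlabeled sample.

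The paper's proof avoids the issue by reversing the order of quantifiers: it fixes a test point $x\notin\Agree(\cB_\cD^\cH(h^*,2\eta-\epsilon))$ \emph{first} and chooses the witness $h'\in\cB_\cD^\cH(h^*,2\eta-\epsilon)$ with $h'(x)\ne h^*(x)$ from the \emph{distributional} ball, so $h'$ is independent of $S$. Then $S'=\{(x_i,h'(x_i))\}$ is an honest draw from $\cD'^m$ with $\cD'$ labeled by the fixed $h'$, so bad events (A) and (B) each cost exactly $\gamma$; a single Hoeffding bound for this one hypothesis controls the event $d_S(h',h^*)>2\eta$ at cost $\delta$ (this is also why the sample size is $\frac{c}{\epsilon^2}\ln\frac{1}{\delta}$ with no VC term, whereas your route would genuinely need $m=\Omega(\frac{1}{\epsilon^2}(d+\ln\frac{1}{\delta}))$ for the uniform-convergence transfer you invoke, i.e.\ it would prove a weaker statement). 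The midpoint-relabeling contradiction then shows $\Pr_{S}[x\in\CR^\cL(S,h^*,\eta)]\le 2\gamma+\delta$ pointwise in $x$, and the theorem follows by integrating over $x\sim\cD_X$ (Fubini), rather than by bounding $\text{RobC}$ on a global good event as you propose. Your third step (uniform convergence plus trivially bounding $\text{RobC}\le 1$ off the good event) is fine as bookkeeping, but the core of the argument has to be restructured as above for the stated $2\gamma+\delta$ and $d$-free sample size to come out.
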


\begin{proof}
Let  $\cL$ be a $(1-\gamma)$-probably robustly-reliable learner for $\cH$ under marginal $\cD_X$ and let $h^*\in\cH$.  Let $x\not\in \Agree(\cB_{\cD}^\cH(h^*,2\eta-\epsilon))$. To prove the theorem, it suffices to prove that $\Pr_{S \sim \cD^m}[x \in \CR^\cL(S,h^*,\eta) ] \leq 2\gamma+\delta$.

Select some $h' \in \cB_{\cD}^\cH(h^*,2\eta-\epsilon)$ such that $h'(x)\neq h^*(x)$; such an $h'$ exists by definition of the disagreement region.  Let $S \sim \cD^m$ and define $S'=\{(x,h'(x))\mid (x,h^*(x))\in S\}$. Note that $S'\sim\cD'^m$, where $\cD'$ is a data distribution with the same marginal as $\cD$ but consistent with $h'$.  We now consider three bad events of total probability at most $2\gamma+\delta$: (A) $\cL$ is not robustly-reliable for all datasets in $\cA_\eta(S)$, (B) $\cL$ is not robustly-reliable for all datasets in $\cA_\eta(S')$, and (C) $d_S(h',h^*) > 2\eta$.  Indeed events (A) and (B) occur with probability at most $\gamma$ each since $\cL$ is given to be a $(1-\gamma)$-probably robustly-reliable learner. Since $h' \in \cB_{\cD}^\cH(h^*,2\eta-\epsilon)$, for any $x\sim\cD_X$, the probability that $h'(x)\ne h(x)$ is at most $2\eta-\epsilon$. By Hoeffding's inequality, if $m\ge \frac{1}{2\epsilon^2}\ln\frac{1}{\delta}$, we have that $d_S(h',h^*) > 2\eta$ (i.e., event (C)) occurs with probability at most $\delta$.

We claim that if none of these bad events occur, then $x \not\in \CR^\cL(S,h^*,\eta)$.
To prove this, assume for contradiction that none of the bad events occur and $x \in \CR^\cL(S,h^*,\eta)$. Let $\Tilde{S}_{h',h^*}$ denote a relabeling of $S$ such that exactly half the points in $\Delta_S=\{x\mid (x,y)\in S, h'(x)\ne h^*(x)\}$ are labeled according to $h^*$ (the remaining half using $h'$, for convenience assume $|\Delta_S|$ is even). Note that since bad event (C) did not occur, this means that $\Tilde{S}_{h',h^*} \in \cA_\eta(S)$ and $\Tilde{S}_{h',h^*} \in \cA_\eta(S')$,  Now, since $x \in \CR^\cL(S,h^*,\eta)$ and $\Tilde{S}_{h',h^*} \in \cA_\eta(S)$, it must be the case that $\cL_{\Tilde{S}_{h',h^*}}(x) = (y,\eta')$ for some $\eta'\geq \eta$. But, if $y \neq h^*(x)$ then this implies bad event (A), and if $y \neq h'(x)$ then this implies bad event (B).  So, $x$ cannot be in $\CR^\cL(S,h^*,\eta)$ as desired.
\end{proof}

We can strengthen Theorem \ref{thm:probably_robustly-reliable_lower_bound_distributional} for learners that satisfy a somewhat stronger condition that with probability at least $1-\gamma$ over the draw of the {\em unlabeled} sample $S_X$, for all targets $h^* \in \cH$, the sample $S$ produced by labeling $S_X$ by $h^*$ should be a good sample. See Theorem \ref{thm:probably_robustly-reliable_lower_bound_sample} in Appendix \ref{sec:additional}.

\section{Robustly-Reliable Learners for Instance-Targeted Malicious Noise}\label{sec:malicious}
So far, our noise model has allowed the adversary to corrupt an {\em arbitrary} $\eta$ fraction of the training examples.  We now turn to the classic malicious noise model \citep{valiant1985learning,kearns1993learning} in which the points an adversary may corrupt are selected {\em at random}: each point independently with probability $1-\eta$ is drawn from $\cD$ and with probability $\eta$ is chosen adversarially.  Roughly, the malicious adversary model corresponds to a data poisoner that can {\em add} poisoned points to the training set (because the ``clean'' points are a true random sample from $\cD$) whereas the $\cA_\eta$ noise model corresponds to an adversary that can both add {\em and remove} points from the training set.

Traditionally, the malicious noise model has been examined in a non-instance-targeted setting. That is, the concern has been on the overall error-rate of classifiers learned in this model.
Here, we consider instance-targeted malicious noise, and provide efficient robustly-reliable learners building on the seminal work of \cite{awasthi2017power}.
To discuss the set of possible adversarial corruptions in the malicious noise model with respect to particular random draws, for training set $S$ and indicator vector $v \in \{0,1\}^{|S|}$, let $\cA^{mal}(S,v)$ denote the set of all possible $S'$ achievable by replacing the points in $S$ indicated by $v$ with arbitrary points in $\cX \times \cY$.
We will be especially interested in $\cA^{mal}(S,v)$ for $v \sim \mbox{Bernoulli}(\eta)^m$.  Formally,

\begin{definition}
Given sample $S$ and indicator vector $v \in \{0,1\}^{|S|}$, let $\cA^{mal}(S,v)$ denote the collection of corrupted samples $S'$ achievable by replacing the points in $S$ indicated by $v$ with arbitrary points in $\cX \times \cY$.  We use $\cA^{mal}_\eta$ to denote $\cA^{mal}(S,v)$ for $v \sim \mbox{Bernoulli}(\eta)^{|S|}$.
\end{definition}

We now define the notion of probably robustly-reliable learners and their robustly-reliable region for the malicious adversary model.
\begin{definition}
A learner $\cL$ is a {\bf $(1-\gamma)$-probably robustly-reliable learner against $\cA^{mal}_\eta$} for class $\cH$
under marginal $\cD_X$
if for any target function $h^*\in\cH$, with probability at least $1-\gamma$ over the draw of $S\sim\cD^m$ and $v \sim \mbox{Bernoulli}(\eta)^m$, for all $S' \in \cA^{mal}(S,v)$ and all $x\in\cX$, if $\cL_{S'}(x)=(y,\eta')$ for $\eta'\geq \eta$ then $y= h^*(x)$.  The {\bf $\eta$-robustly-reliable region} $\CR_{mal}^{\cL}(S,h^*,\eta,v) = \cap_{S' \in \cA^{mal}(S,v)} \ECR^{\cL}(S',\eta)$, that is,
the set of points $x$ given robust-reliability level $\eta'\geq \eta$ for all $S' \in \cA^{mal}(S,v)$. The $\eta$-robustly-reliable correctness is defined as the probability mass of the $\eta$-robustly-reliable region, $\text{RobC}_{mal}^\cL(\cD,\eta,S,v)=\Pr_{\cD_\cX}[\CR_{mal}^{\cL}(S,h^*,\eta,v)]$.
\label{def:pcl_mal_strong}
\end{definition}

In Appendix \ref{sec:relation} we show how this model relates to the $\cA_\eta$ adversary from Section \ref{sec:nastynoise}, and to other forms of the malicious noise model of \cite{kearns1993learning}.

\subsection{Efficient algorithm for linear separators with malicious noise}

For learning linear separators, the above approaches and prior work \cite{gao2021learning} provide inefficient upper bounds, since we do not generically have an efficient ERM oracle for that class.
In the following we show how we can build on an algorithm of \cite{awasthi2017power} for noise-tolerant learning over log-concave distributions to give strong per-point reliability guarantees that scale well with adversarial budget $\eta$. The algorithm of \cite{awasthi2017power} operates by solving an adaptive series of convex optimization problems, focusing on data within a narrow band around its current classifier and using an inner optimization to perform a soft outlier-removal within that band. We will need to modify the algorithm  to fit our setting, and then build on it to produce per-point reliability guarantees.

\begin{theorem}\label{lem:abl}
Let $\cD_X$ be isotropic log-concave over $\R^d$ and $\cH$ be the class of linear separators.
There is a polynomial-time $(1-\delta)$-probably robustly-reliable learner $\cL$ against $\cA_\eta^{mal}$ for class $\cH$ under $\cD_X$ based on Algorithm 2 of \cite{awasthi2017power}, which uses a sample of size $m=\text{poly}(d, \frac{1}{\eta}
, \log(1/\delta))$.
Furthermore, for any $h^* \in \cH$, with probability at least $1-\delta$ over $S\sim\cD^m$ and $v \sim \mbox{Bernoulli}(\eta)^m$, we have
$\Pr_{\cD_X}[\CR^{\cL}_{mal}(S,h^*,\eta,v)] \geq 1-\Tilde{O}(\sqrt{d}\eta)$.
The $\Tilde{O}$-notation suppresses dependence on logarithmic factors and distribution-specific constants.
\end{theorem}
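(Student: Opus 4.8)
The plan is a two-stage learner. In stage one I would run a lightly modified version of Algorithm~2 of \cite{awasthi2017power} on the corrupted sample $S'$ to obtain a linear separator $\hat{h}=\hat{h}(S')$; in stage two I would use the geometry of isotropic log-concave marginals to attach a per-point robust-reliability certificate to $\hat h$. Concretely, on test point $x$ the learner outputs $(\hat h(x),\eta)$ if $x \notin \text{DIS}(\cB_\cD^\cH(\hat h, r_0))$ and $(\hat h(x),-1)$ otherwise, where $r_0 = \Tilde{O}(\eta)$ is the error guarantee obtained for $\hat h$. The reason this is polynomial time despite the lack of an ERM oracle for halfspaces is that, for linear separators under isotropic log-concave $\cD_X$, $\text{DIS}(\cB_\cD^\cH(\hat h, r_0))$ is a thin slab $\{x : |\langle \hat h, x\rangle| < \tau\|x\|\}$ whose half-width $\tau = \Theta(r_0) = \Tilde{O}(\eta)$ is pinned down by $r_0$ alone, up to the (distribution-independent) constants relating angular distance to disagreement probability for isotropic log-concave measures; membership is then an $O(d)$ check, and the learner may use a slightly conservative (wider) slab so it need not know $\cD_X$ exactly, which only increases abstention. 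The sample size and running time are $\text{poly}(d, 1/\eta, \log(1/\delta))$, inherited from \cite{awasthi2017power} run at target error $\Tilde{O}(\eta)$.

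The key input I need is a ``for all $S'$'' strengthening of the Awasthi et al.\ guarantee: with probability at least $1-\delta$ over $S\sim\cD^m$ and $v\sim\mbox{Bernoulli}(\eta)^m$, \emph{every} $S'\in\cA^{mal}(S,v)$ produces $\hat h(S')$ with $d_\cD(\hat h(S'),h^*) \le r_0 = \Tilde{O}(\eta)$. This should follow from the structure of their analysis: the event under which their algorithm succeeds is a uniform-concentration property of the \emph{clean} $(1-\eta)$-fraction of the sample (which is i.i.d.\ from $\cD$, since $v$ marks exactly the corruptible coordinates), and, conditioned on that event, the localization and soft-outlier-removal steps return a low-error classifier no matter how the remaining $\eta m$ points are chosen. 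Thus one good event of probability $\ge 1-\delta$ over $(S,v)$ simultaneously certifies all $S'\in\cA^{mal}(S,v)$.

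Given this, robust-reliability is immediate: on the good event, for any $S'\in\cA^{mal}(S,v)$ we have $h^*\in\cB_\cD^\cH(\hat h(S'),r_0)$, so whenever $\cL_{S'}(x)=(y,\eta')$ with $\eta'\ge\eta$ we have $x\notin\text{DIS}(\cB_\cD^\cH(\hat h(S'),r_0))$, whence every hypothesis in that ball --- including $\hat h(S')$ and $h^*$ --- agrees on $x$, so $y=\hat h(S')(x)=h^*(x)$. For the size of the robustly-reliable region, on the good event each $\hat h(S')$ is within $d_\cD$-distance $r_0$ of $h^*$, so by the triangle inequality $\cB_\cD^\cH(\hat h(S'),r_0)\subseteq\cB_\cD^\cH(h^*,2r_0)$ and hence $\text{DIS}(\cB_\cD^\cH(\hat h(S'),r_0))\subseteq\text{DIS}(\cB_\cD^\cH(h^*,2r_0))$ for every $S'$; therefore every $x\notin\text{DIS}(\cB_\cD^\cH(h^*,2r_0))$ gets reliability level $\eta$ under all $S'\in\cA^{mal}(S,v)$, i.e.\ $\CR_{mal}^\cL(S,h^*,\eta,v)\supseteq\Agree(\cB_\cD^\cH(h^*,2r_0))$. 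Since the disagreement coefficient of linear separators under isotropic log-concave marginals is $\theta_\epsilon=\Tilde{O}(\sqrt{d})$ (generalizing the $O(\sqrt d)$ bound for the uniform distribution noted earlier), $\Pr_{\cD_X}[\text{DIS}(\cB_\cD^\cH(h^*,2r_0))]\le 2r_0\cdot\theta_{2r_0}=\Tilde{O}(\eta)\cdot\Tilde{O}(\sqrt d)=\Tilde{O}(\sqrt d\,\eta)$, which yields $\Pr_{\cD_X}[\CR_{mal}^\cL(S,h^*,\eta,v)]\ge 1-\Tilde{O}(\sqrt d\,\eta)$.

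The hard part will be the ``for all $S'$'' claim: one must re-derive the guarantee of (the modified) Algorithm~2 of \cite{awasthi2017power} so that it holds against \emph{every} corruption in $\cA^{mal}(S,v)$ --- not just the one the adversary actually commits to --- under a single clean-sample event, and in particular check that the band restriction still works when the adversary places all $\eta m$ corrupted points inside a narrow band, where the local corruption rate blows up to $\Omega(\eta/b)$ for a band of mass $b$; this is exactly what the inner soft-outlier-removal optimization is meant to absorb, but it needs to be verified in the present passive, instance-targeted formulation. A secondary point is justifying that the slab threshold $\tau$ can be set using only the distribution-independent isotropic-log-concave comparisons $d_\cD(h_u,h_w)=\Theta(\theta(u,w))$ and $\Pr_{x\sim\cD_X}[|\langle u,x\rangle|<\beta\|x\|]=\Tilde{O}(\sqrt d\,\beta)$, so $\cL$ stays oblivious to the particular $\cD_X$.
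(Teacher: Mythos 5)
Your two-stage plan is, at its core, the same as the paper's proof: run a modified Algorithm~2 of \cite{awasthi2017power} on $S'$ to get $h(S')$ with $\err_\cD(h(S'))=\Tilde{O}(\eta)$ on a single good event over $(S,v)$, certify at level $\eta$ exactly those $x$ with enough margin with respect to $h(S')$ (your proportional slab $|\langle w_{h(S')},x\rangle|<\tau\|x\|$ plays the role of the paper's norm-cutoff-plus-band condition, and both are efficiently checkable surrogates for membership in $\Agree(\cB^{\cH}_{\cD}(h(S'),C_0\eta))$), and then intersect over $S'\in\cA^{mal}(S,v)$ by moving from a margin region around $h(S')$ to one around $h^*$ via the angle/disagreement comparison for isotropic log-concave marginals, bounding its mass by $\Tilde{O}(\sqrt{d}\eta)$ (you via the disagreement coefficient, the paper via the tail bound $\Pr(\|x\|\ge\alpha\sqrt d)\le e^{-\alpha+1}$ and the unit bound on one-dimensional log-concave densities --- these are equivalent computations).

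The genuine gap is precisely the step you flag and then assume: the ``for all $S'\in\cA^{mal}(S,v)$'' strengthening of the Awasthi et al.\ guarantee. Your argument --- the success event is a uniform-concentration property of the clean $(1-\eta)$-fraction, and conditioned on it the localization and soft-outlier-removal succeed against any placement of the corrupted points --- is only valid if the learner is \emph{deterministic}. Algorithm~2 of \cite{awasthi2017power} is randomized (it subsamples inside the band to estimate and minimize the weighted hinge loss), and in this model the corruption $S'$ is effectively quantified \emph{after} the algorithm's internal coins, since $\CR^{\cL}_{mal}(S,h^*,\eta,v)=\cap_{S'\in\cA^{mal}(S,v)}\ECR^{\cL}(S',\eta)$; a per-$S'$ high-probability guarantee over the algorithm's randomness cannot be union-bounded over the continuum of possible $S'$. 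The paper resolves this by derandomizing step 2(b) (computing the $p$-weighted hinge loss over the entire working set), at a polynomial cost, and then re-verifying that the feasibility of the ideal outlier-removal solution, the bound on the number of dirty points landing in the band, and the hinge-loss comparison all hold on one clean-data event even when the adversary dumps all $\eta m$ corruptions inside the band --- this re-verification is the substantive content missing from your sketch. A second, minor slip: your learner abstains on a conservative slab that is strictly larger than $\text{DIS}(\cB^{\cH}_{\cD}(h(S'),r_0))$, so the claim that every $x\notin\text{DIS}(\cB^{\cH}_{\cD}(h^*,2r_0))$ is certified under all $S'$ does not follow as stated; you should instead observe that each conservative slab has half-angle $O(r_0)$ and each $h(S')$ is within angle $O(r_0)$ of $h^*$, so the union of slabs lies in a single slab of half-angle $O(r_0)$ around $h^*$, which costs only constants and preserves the $1-\Tilde{O}(\sqrt{d}\eta)$ bound.
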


\begin{proof} We will run a deterministic version of Algorithm 2 of \cite{awasthi2017power} (see Appendix \ref{appendix:ABL}) and let $h = h(S')$ be the halfspace output by this procedure. By Theorem \ref{t:malicious.detailed} (an analog of Theorem 4.1 of \cite{awasthi2017power} for the $\cA_\eta^{mal}$ noise model), there is a constant $C_0$ such that $\text{err}_\cD(h)\le C_0\eta$ with probability at least $1-\delta$ over the draw of $S\sim\cD^m$ and $v \sim \mbox{Bernoulli}(\eta)^m$.\footnote{While \cite{awasthi2017power} describe the adversary as making its malicious choices in a sequential order, the results all hold if the adversary makes its choices after the sample $S$ has been fully drawn; that is, if the adversary selects an arbitrary $S' \in \cA^{mal}(S,v)$. What requires additional care is that the results of \cite{awasthi2017power} imply that for any $S' \in \cA^{mal}(S,v)$ the algorithm succeeds with probability $1-\delta$, whereas we want that with probability $1-\delta$ the algorithm succeeds for any $S' \in \cA^{mal}(S,v)$; in particular, the adversary can make its choice after observing any internal randomness in the algorithm. We address this by making the algorithm deterministic, increasing its label complexity. For more discussion, see Appendix \ref{appendix:ABL}.} In the following, we will assume the occurrence of this $1-\delta$ probability event where the learner outputs a low-error hypothesis for any $S' \in \cA^{mal}(S,v)$. We now show how we extend this result to {\em robustly-reliable} learning.

We will first show we can in principle output a robust-reliability value $\eta$ on all points in $\Agree(\cB^{\cH}_{\cD}(h,C_0\eta))$. (Algorithmically, we will do something slightly different because we do not have an ERM oracle and so cannot efficiently test membership in the agreement region). Indeed by Theorem \ref{t:malicious.detailed}, $h^*$ is guaranteed to be in $\cB^{\cH}_{\cD}(h,C_0\eta)$. By the definition of the disagreement region, if $h(x)\ne h^*(x)$ for some $x$, then $x\in \text{DIS}(\cB^{\cH}_{\cD}(h,C_0\eta))$. Therefore every point in $\Agree(\cB^{\cH}_{\cD}(h,C_0\eta))$ can be confidently classified with robust reliability level $\eta$.

Since we cannot efficiently determine if a point lies in $\Agree(\cB^{\cH}_{\cD}(h,C_0\eta))$, algorithmically we instead do the following.  First, for any halfspace $h'$, let $w_{h'}$ denote the unit-length vector such that $h'(x)=sign(\langle w_{h'}, x\rangle)$.  Next, following the argument in the proof of Theorem 14 of \cite{balcan2013active}, we show that for some constant $C_1$, for all values $\alpha$, we have:
\begin{equation}
\Agree\left(\cB^{\cH}_{\cD}(h,C_0\eta)\right)\supseteq \left\{x : ||x|| < \alpha\sqrt{d}\right\}\cap \left\{x :|\langle w_h, x\rangle|\ge C_1\alpha\eta\sqrt{d}\right\}.
\label{eq:proof1}
\end{equation}
Algorithmically, we will provide robustness level $\eta$ for all points satisfying the right-hand-side above (which we can do efficiently), and the containment above implies this is legal. To complete the proof of the theorem, we must (a) give the proof of containment (\ref{eq:proof1}) and (b) prove that the intersection of the right-hand-sides of  (\ref{eq:proof1}) over all $h=h(S')$ for $S' \in \cA^{mal}(S,v)$ has probability mass
$1-\Tilde{O}(\sqrt{d}\eta)$.

We begin with (a) giving the proof of the containment in formula (\ref{eq:proof1}).
First, by Lemma \ref{lem:ilc-angle}  (due to \cite{balcan2013active}), $\cB^{\cH}_{\cD}(h,C_0\eta)$ consists of hypotheses having angle at most $C_1\eta$ to $h$ for some constant $C_1$. So, if
$||x|| < \alpha\sqrt{d}$, then for any $h' \in \cB^{\cH}_{\cD}(h,C_0\eta)$ we have:
$$
    |\langle w_{h'}, x\rangle - \langle w_h, x\rangle| \; \le \; ||w_{h'}-w_h||\cdot||x|| \; < \; C_1\alpha\eta\sqrt{d}.
$$
Thus, if $x$ also satisfies $|\langle w_h, x\rangle| \ge C_1\alpha\eta\sqrt{d}$, we have $\langle w_h, x\rangle\langle w_{h'}, x\rangle > 0$. This implies that $x \in \Agree\left(\cB^{\cH}_{\cD}(h,C_0\eta)\right)$, completing the proof of containment (\ref{eq:proof1}).

Now we prove (b) that the intersection of the right-hand-sides of  (\ref{eq:proof1}) over all $h=h(S')$ for $S' \in \cA^{mal}(S,v)$ has probability mass
$1-\Tilde{O}(\sqrt{d}\eta)$.  To do this, we first show that for all $h$ such that $\err_\cD(h) \leq C_0\eta$ we have:
\begin{equation}
\{x : ||x|| < \alpha\sqrt{d}\}\cap\{x :|\langle w_h, x\rangle|\ge C_1\alpha\eta\sqrt{d}\}
\supseteq
\{x : ||x|| < \alpha\sqrt{d}\}\cap\{x :|\langle w_{h^*}, x\rangle|\ge 2C_1\alpha\eta\sqrt{d}\}.
\label{eq:proof2}
\end{equation}
This follows from the same argument used to prove (a) above.  Specifically, by Lemma \ref{lem:ilc-angle}, $\cB^{\cH}_{\cD}(h^*,C_0\eta)$ consists of hypotheses having angle at most $C_1\eta$ to $w_{h^*}$ for the same constant $C_1$ as above. So, if
$||x|| < \alpha\sqrt{d}$, then for any $h \in \cB^{\cH}_{\cD}(h^*,C_0\eta)$ we have
$$    |\langle w_h, x\rangle - \langle w_{h^*}, x\rangle| \; \le \; ||w_h-w_{h^*}||\cdot||x|| \; < \; C_1\alpha\eta\sqrt{d}.$$  
This means that if $|\langle w_{h^*}, x\rangle|\ge 2C_1\alpha\eta\sqrt{d}$ then $|\langle w_h, x\rangle|\ge C_1\alpha\eta\sqrt{d}$.

To complete the proof of (b), we now just need to show that the probability mass of the right-hand-side of (\ref{eq:proof2}) is at least $1-\Tilde{O}(\sqrt{d}\eta)$. This follows using the argument in the proof of Theorem 14 in \cite{balcan2013active}.
First we use the fact that for an isotropic log-concave distribution over $\R^d$, we have $\Pr_{\cD_\cX}(||x|| \geq \alpha \sqrt{d}) \leq e^{-\alpha+1}$ (Lemma \ref{lem:isotropic_facts}). This means we can choose $\alpha = \ln(1/(\sqrt{d}\eta))$ and ensure that at most an $O(\sqrt{d}\eta)$ probability mass of points $x$ fail to satisfy the first term in the right-hand-side of (\ref{eq:proof2}). For the second term, using the fact that marginals of isotropic log-concave distributions are also isotropic log-concave (in particular, the 1-dimensional marginal given by taking inner-product with $w_{h^*}$) and that the density of a 1-dimensional isotropic log-concave distribution is at most 1 (Lemma \ref{lem:isotropic_facts}), at most an $O(\alpha\eta\sqrt{d})$ probability mass of points $x$ fail to satisfy the second term in the right-hand-side of (\ref{eq:proof2}).  Putting these together yields the theorem.
\end{proof}

\noindent {\bf Remark.} Note that the above result improves over a related distribution-specific result of \cite{gao2021learning} in three ways. First, we are able to certify that the prediction of our algorithm is  correct rather than only certifying that  the prediction would not change due to the adversary intervention. Second, we are able  to handle much more general distribution --- any isotropic logconcave distribution, as opposed to just the uniform distribution. We essentially achieve the same $\Tilde{O}(\sqrt{d}\eta)$ upper bound on the uncertified region but for a much larger class of distributions. Finally, unlike~\cite{gao2021learning}, our algorithm is  {\em polynomial time}.

\smallskip

\noindent  {\bf Lower bounds.} We also have a near-matching lower bound on robust reliability in the $\cA^{mal}_\eta$ model. This lower bound states that points given robust-reliability level $\eta$ must be in $\Agree(\cB_\cD^\cH(h^*,\frac{\eta}{1-\eta}-\epsilon))$ with a high probability, which differs from our upper bound above by a constant factor in the agreement radius.

\begin{theorem}
Let $\cL$ be a $(1-\gamma)$-probably robustly-reliable learner against $\cA^{mal}_\eta$ for hypothesis class $\cH$ under marginal $\cD_X$. For any $h^*\in\cH$, given a large enough sample size $m=|S|\ge\frac{c}{\epsilon^2}\ln\frac{1}{\delta}$ and any $\eta\le\frac{1}{2}$, we have
$$\bbE_{S\sim\cD^m,v \sim \mbox{Bernoulli}(\eta)^m}\text{RobC}^\cL_{mal}(D,\eta,S,v)]\le \Pr\left[\Agree\left(\cB_\cD^\cH\left(h^*,\frac{\eta}{1-\eta}-\epsilon\right)\right)\right]+2\gamma+2\delta,$$
where $c$ is an absolute constant and $\cD$ is the distribution with marginal $\cD_X$ consistent with $h^*$.
\label{thm:probably_robustly-reliable_lower_bound_distributional_mal}
\end{theorem}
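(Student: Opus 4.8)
The plan is to adapt the proof of Theorem~\ref{thm:probably_robustly-reliable_lower_bound_distributional} to the malicious model, the new ingredient being that a malicious adversary may only overwrite the \emph{randomly chosen} coordinates $\mathrm{supp}(v)$ rather than an adversarially chosen $\eta$-fraction, so we cannot use a single ambiguous relabeled sample; instead we engineer an ambiguous \emph{distribution} over corrupted samples. First, by exchanging expectation and probability, $\bbE_{S,v}\bigl[\text{RobC}_{mal}^\cL(\cD,\eta,S,v)\bigr]=\Pr_{x\sim\cD_X,\,S,\,v}\bigl[x\in\CR_{mal}^\cL(S,h^*,\eta,v)\bigr]$, so it suffices to show $\Pr_{S\sim\cD^m,\,v}\bigl[x\in\CR_{mal}^\cL(S,h^*,\eta,v)\bigr]\le 2\gamma+2\delta$ for every fixed $x\notin\Agree\bigl(\cB_\cD^\cH(h^*,\tfrac{\eta}{1-\eta}-\epsilon)\bigr)$; integrating over $x$ and using the trivial bound $1$ on the agreement region then gives the theorem. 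For such an $x$, exactly as in Theorem~\ref{thm:probably_robustly-reliable_lower_bound_distributional}, pick $h'\in\cB_\cD^\cH(h^*,\tfrac{\eta}{1-\eta}-\epsilon)$ with $h'(x)\ne h^*(x)$, let $\cD'$ have marginal $\cD_X$ and be consistent with $h'$, write $\Delta=\{z\in\cX:h^*(z)\ne h'(z)\}$, and set $d:=\Pr_{\cD_X}[\Delta]=d_\cD(h^*,h')\le\tfrac{\eta}{1-\eta}-\epsilon$.

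The technical heart is a coupling between the corrupted-sample distribution and the true draw distribution of each ``world''. In the $h^*$-world I draw $(S,v)\sim\cD^m\times\mbox{Bernoulli}(\eta)^m$ and have the adversary, working only inside $\mathrm{supp}(v)$, (a) relabel $f$ of the clean disagreement points of $S$ (those with $z_i\in\Delta$) lying in $\mathrm{supp}(v)$ from $h^*$ to $h'$, and (b) overwrite $c$ of the clean agreement points lying in $\mathrm{supp}(v)$ with fresh i.i.d.\ draws from $\cD_X$ restricted to $\Delta$, labeled by $h'$ (followed by a uniformly random relabeling of coordinates, which leaves the product law of the clean-sample/corruption pair unchanged). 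The corrupted sample $\tilde S$ then lies in $\cA^{mal}(S,v)$ and consists of three mutually independent i.i.d.\ blocks --- agreement points from $\cD_X|_{\Delta^c}$, disagreement points from $\cD_X|_{\Delta}$ labeled $h^*$, and disagreement points from $\cD_X|_{\Delta}$ labeled $h'$ --- of sizes $m-k-c$, $k-f$, $f+c$, where $k$ is the number of clean disagreement points of $S$. The mirror-image strategy in the $h'$-world (flip $f'$ disagreement points from $h'$ to $h^*$, overwrite $c'$ agreement points with $\cD_X|_\Delta$-points labeled $h^*$) produces the same three-block form; taking $c'=c$ and $f'=k-f-c$ makes the block sizes, as a function of $k$, identical in the two worlds, so the induced corrupted-sample distributions $\nu_1$ (from the $h^*$-world) and $\nu_2$ (from the $h'$-world) coincide. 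The only constraints are that the numbers flipped/overwritten fit the random budget: $f\lesssim\eta k$, $c\lesssim\eta(m-k)$, $f+c\le|\mathrm{supp}(v)|$, and the mirror inequalities $f'=k-f-c\lesssim\eta k$ and $f'+c'=k-f\le|\mathrm{supp}(v'')|$. Choosing $f\approx\eta d m$ and $c\approx(1-2\eta)dm$ (both nonnegative since $\eta\le\tfrac12$) these reduce to $d(1-\eta)\le\eta$, i.e.\ $d\le\tfrac{\eta}{1-\eta}$; since $d\le\tfrac{\eta}{1-\eta}-\epsilon$ leaves $\Omega(\epsilon m)$ slack in every inequality, Chernoff/Hoeffding bounds on the relevant Binomial counts ($k$, $|\mathrm{supp}(v)|$, and the disagreement/agreement counts inside $\mathrm{supp}(v)$) show that for $m\ge\tfrac{c}{\epsilon^2}\ln\tfrac1\delta$ the strategy is executable, in each world, except on an event of probability at most $\delta$; hence $\mathrm{TV}(\nu_1,\nu_2)\le 2\delta$.

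With this coupling the conclusion follows as in Theorem~\ref{thm:probably_robustly-reliable_lower_bound_distributional}. If $x\in\CR_{mal}^\cL(S,h^*,\eta,v)$ then, since $\tilde S\in\cA^{mal}(S,v)$, we have $\cL_{\tilde S}(x)=(y,\eta')$ with $\eta'\ge\eta$; and by Definition~\ref{def:pcl_mal_strong}, with probability $\ge 1-\gamma$ over the $h^*$-world draw all such confident predictions are correct for $h^*$, so $\Pr_{S,v}[x\in\CR_{mal}^\cL(S,h^*,\eta,v)]\le\Pr_{\tilde S\sim\nu_1}[\cL_{\tilde S}(x)=(h^*(x),\eta'),\,\eta'\ge\eta]+\gamma$. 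Switching from $\nu_1$ to $\nu_2$ costs $\mathrm{TV}(\nu_1,\nu_2)\le 2\delta$; and since $h^*(x)\ne h'(x)$, the event $\{\cL_{\tilde S}(x)=(h^*(x),\eta'),\eta'\ge\eta\}$ is contained in $\{\cL_{\tilde S}(x)=(y,\eta'),\eta'\ge\eta,\,y\ne h'(x)\}$, whose $\nu_2$-probability is at most $\gamma$ by applying Definition~\ref{def:pcl_mal_strong} with target $h'$ to the mirror coupling. Chaining these inequalities yields $\Pr_{S,v}[x\in\CR_{mal}^\cL(S,h^*,\eta,v)]\le 2\gamma+2\delta$, as required.

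I expect the main obstacle to be the construction and verification of the coupling in the second paragraph: showing that the adversary's restricted power --- it may touch only the random coordinates $\mathrm{supp}(v)$, and the unchanged part is a genuine i.i.d.\ sample --- still suffices to make the two corrupted-sample distributions agree, and tracking the several Binomial fluctuations carefully enough that the total-variation error is only $2\delta$. This is exactly where the radius $\tfrac{\eta}{1-\eta}$ (in place of the $2\eta$ of the $\cA_\eta$ model) and the hypothesis $\eta\le\tfrac12$ arise; everything else is a routine modification of Theorem~\ref{thm:probably_robustly-reliable_lower_bound_distributional}.
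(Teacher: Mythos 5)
Your proposal is correct in outline, but it proves the theorem by a genuinely different mechanism than the paper. The paper's argument stays entirely at the level of a single realization: it sets $S'=\{(z,h'(z)) : (z,h^*(z))\in S\}$, keeps the \emph{same} corruption pattern $v$, and builds one explicit corrupted sample $\tilde S$ by using the corruptible coordinates to plant, for every uncorruptible disagreement point $(z,h^*(z))$, the opposite-labeled copy $(z,h'(z))$, filling the rest of the budget with copies of a fixed agreement point. Because every surviving disagreement point then occurs paired with both labels, $\tilde S\in\cA^{mal}(S,v)\cap\cA^{mal}(S',v)$ for the very same draw, and the contradiction is reached through four bad events (reliability failure in each world, too small a corruptible set, too many uncorruptible disagreement points); the last event is exactly where $\eta\le\tfrac12$ and the radius $\tfrac{\eta}{1-\eta}-\epsilon$ enter, via the product $\left(\tfrac{\eta}{1-\eta}-\epsilon\right)(1-\eta)\le\eta-\tfrac{\epsilon}{2}$. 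You instead make the \emph{distributions} of corrupted samples agree: a randomized adversary in each world flips and overwrites so that the corrupted sample is three i.i.d.\ blocks whose sizes match across worlds, and you pay $2\delta$ in total variation for the event that the random budget does not permit the plan. This is a legitimate alternative; your corrupted samples look like genuine mixed samples rather than the paper's duplicated-point construction, at the cost of heavier coupling bookkeeping (you do need the exchangeability/symmetrization step if samples are read as ordered tuples, and you should fix a fallback output on the non-executable event, e.g.\ the uncorrupted sample itself, so that each per-world reliability bound remains exactly $\gamma$ and the whole slack sits in the $2\delta$ TV term).

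One quantitative slip to fix: with your stated choice $f\approx\eta d m$, $c\approx(1-2\eta)dm$ (your $d=\Pr_{\cD_X}[\Delta]$), the constraints $f\le$ (number of disagreement points in $\mathrm{supp}(v)$) and $f'\le$ (number of disagreement points in $\mathrm{supp}(v'')$) are \emph{exactly} tight, since both counts concentrate at $\eta k\approx\eta d m$; they would fail with probability about $1/2$, not $\delta$, so it is not true that every inequality has $\Omega(\epsilon m)$ slack. The slack created by $d\le\tfrac{\eta}{1-\eta}-\epsilon$ (together with $\eta\le\tfrac12$) lives only in the overwrite constraint $c\le\eta(m-k)$. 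The repair is easy and keeps your architecture intact: define $f$ and $c$ as functions of $k$, reduce $f$ by $\Theta(\epsilon m)$ and enlarge $c$ by $\Theta(\epsilon m)$, so that $f'=k-f-c$ is also $\Theta(\epsilon m)$ below $\eta k$ while the overwrite constraint still holds with slack; then all the Hoeffding bounds go through and the $2\gamma+2\delta$ accounting is as you state.
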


\begin{proof}
Let  $\cL$ be a $(1-\gamma)$-probably robustly-reliable learner for $\cH$ under marginal $\cD_X$ and let $h^*\in\cH$.  Let $x\not\in \Agree(\cB_{\cD}^\cH(h^*,\frac{\eta}{1-\eta}-\epsilon))$. It is sufficient to prove that $\Pr_{S \sim \cD^m,v \sim \mbox{Bernoulli}(\eta)^m}[x \in \CR^\cL(S,h^*,\eta,v) ] \leq 2\gamma+2\delta$.

Select some $h' \in \cB_{\cD}^\cH(h^*,\frac{\eta}{1-\eta}-\epsilon)$ such that $h'(x)\neq h^*(x)$; such an $h'$ exists by definition of the disagreement region.  Let $S \sim \cD^m$ and define $S'=\{(x,h'(x))\mid (x,h^*(x))\in S\}$. Note that $S'\sim\cD'^m$, where $\cD'$ is a data distribution with the same marginal as $\cD$ but consistent with $h'$. Let $\Delta_S=\{(x,y)\in S\mid h'(x)\ne h^*(x)\}$ denote the disagreement of $h^*,h'$ on sample $S$ and  $S^v=\{(x_i,y_i)\in S\mid v_i=1\}$ be the points with indicator $v$ set to $1$ which the malicious adversary gets to corrupt.

We now consider four bad events of total probability at most $2\gamma+2\delta$: (A) $\cL$ is not robustly-reliable for all datasets in $\cA^{mal}(S,v)$, (B) $\cL$ is not robustly-reliable for all datasets in $\cA^{mal}(S',v)$, (C) $d(S,S^v) < \eta - \frac{\epsilon}{3}$, and (D) $|\Delta_S\setminus S^v|>(\eta - \frac{\epsilon}{3})|S|$. Indeed events (A) and (B) occur with probability at most $\gamma$ each since $\cL$ is given to be a $(1-\gamma)$-probably robustly-reliable learner. An application of Hoeffding's inequality implies $(C)$ occurs with probability at most $\delta$ for sufficiently large $m\ge \frac{c}{\epsilon^2}\ln\frac{1}{\delta}$ for some absolute constant $c$ since $v \sim \mbox{Bernoulli}(\eta)^m$. Finally, observe that the event $(x,y)\in\Delta_S\setminus S^v$ for any $(x,y)$ in the sample $S$ is a Bernoulli event with probability of occurrence at most $\left(\frac{\eta}{1-\eta}-\epsilon\right)(1-\eta)=\eta-(1-\eta)\epsilon\le \eta-\frac{\epsilon}{2}$, since the events $(x,y)\in\Delta_S$ and $(x,y)\notin S^v$ are independent and $\eta\le \frac{1}{2}$. Another application of Hoeffding's inequality implies a $\delta$-probability bound on $(D)$.

We claim that if none of these bad events occur, then $x \not\in \CR^\cL_{mal}(S,h^*,\eta,v)$.
Assume for contradiction that none of the bad events occur and $x \in \CR^\cL_{mal}(S,h^*,\eta,v)$. We will now proceed to describe an adversarial corruption $\Tilde{S}^{v}_{h',h^*}$ of $S$ which guarantees at least one of the bad event occurs if $x$ is in the stipulated $\eta$-robustly-reliable region. We select a uniformly random sequence of $|\Delta_S\setminus S^v|$ corruptible points from $S^v$ and replace them $(x,1-y)$ for each $(x,y)\in \Delta_S\setminus S^v$. We substitute the remaining points in $S^v$ by $(x,h^*(x))$ for some fixed $x\in \Agree(\cB_{\cD}^\cH(h^*,\frac{\eta}{1-\eta}-\epsilon))$.
We specify this substitution to ensure that all remaining points in the disagreement region occur in pairs with opposite labels, and we do not reveal the corruptible points. The first claim follows by noting that $|\Delta_S\setminus S^v|\le|S|(\eta - \frac{\epsilon}{3})\le|S^v|$ since bad events (C) and (D) did not occur. Clearly $\Tilde{S}^{v}_{h',h^*} \in \cA^{mal}(S,v)$, the above observation further implies that $\Tilde{S}^{v}_{h',h^*} \in \cA^{mal}(S',v)$. Now, since $x \in \CR^\cL_{mal}(S,h^*,\eta,v)$ and $\Tilde{S}^{v}_{h',h^*} \in \cA^{mal}(S,v)$, it must be the case that $\cL_{\Tilde{S}^{v}_{h',h^*}}(x) = (y,\eta')$ for some $\eta'\geq \eta$. But, if $y \neq h^*(x)$ then this implies bad event (A), and if $y \neq h'(x)$ then this implies bad event (B).  So, $x$ cannot be in $\CR^\cL(S,h^*,\eta,v)$ as desired.
\end{proof}

\section{Active Robustly Reliable Learners}

Instance-targeted attacks may also occur in situations where the labeled data is expensive to obtain and one would typically use active learners to learn accurate classifiers with fewer labeled examples. For example, consider translation to a rare language for which an expert can be expensive to obtain. We might have access to proprietary labeled data, but may be charged per label and would like to minimize the cost of obtaining the labels. It is possible that an attacker with access to the data corrupts it in an instance-targeted way.

We consider a pool-based active learning setting (\cite{settles2009active}), where the learner draws (labeled and unlabeled) examples from a large pool of examples which may have been corrupted by an instance-targeted adversary. That is, a dataset $S$ is drawn according to some true distribution $\cD$ and the instance-targeted adversary may corrupt an $\eta$ fraction of the dataset arbitrarily (change both data points and labels). The learner has access to unlabeled examples from the corrupted dataset $S'$, and is allowed to make {\it label} queries for examples in the dataset.
The goal of the learner is to produce a robustly-reliable predictor for the marginal distribution $\cD_X$, and using as few labels as possible. Our (passive) robustly-reliable learner in Theorem \ref{thm:theta-ub} needs $\Tilde{O}(\frac{d}{\epsilon^2})$ examples; in this section we explore if we can \changed{actively robustly-reliably learn with fewer labeled examples.}

To formalize our results on active learning,
we have the following definition.

\begin{definition}[Active Robustly-Reliable Learner]
A learner $\cL$ is said to be an {\bf active robustly-reliable} learner w.r.t. concept space $\cH$ if for any training sample $S'$, the learner has access to unlabeled examples from $S'$ and the ability to request labels, and must output a function $\cL_{S'}:\cX \rightarrow\cY\times\R$. The {\bf label complexity} of $\cL$ is the number of labels the learner requests before outputting $\cL_{S'}$.

An active robustly-reliable learner $\cL$ is said to be {\bf strongly robustly-reliable} against $\cA_\eta$ for class $\cH$ if for any target function $h^*\in\cH$, for any dataset $S$ consistent with $h^*$, for all $S' \in \cA_\eta(S)$, for all $x\in\cX$, if $\cL_{S'}(x)=(y,\eta')$ for $\eta'\geq \eta$ then $y= h^*(x)$.  \changed{The robustly-reliable correctness and robustly-reliable region, for this value $\eta$, are defined the same as for passive learning above.}

An active robustly-reliable learner $\cL$ is said to be {\bf $(1-\delta)$-strongly robustly-reliable} against $\cA_\eta$ for class $\cH$ if for any target function $h^*\in\cH$, for any dataset $S$ consistent with $h^*$, for all $S' \in \cA_\eta(S)$, with probability at least $1-\delta$ over the internal randomness of the learner, for all $x\in\cX$, if $\cL_{S'}(x)=(y,\eta')$ for $\eta'\geq \eta$ then $y= h^*(x)$.
The empirical $\eta$-robustly-reliable region is defined as for
passive learning above.
 \label{def:acl_nasty}
\end{definition}

We will now show two results indicating that
it is possible for the learner to query the labels intelligently to obtain a robustly-reliable algorithm, first in the stronger sense of strongly robustly-reliable learning,
and second in the weaker sense of $(1-\delta)$-strongly robustly-reliable learning (i.e., with high probability over the randomness of the learner).
In both cases, we are also able to obtain a bound on the number of queries made by the active learner, holding with high probability over the draw of the uncorrupted dataset $S$ (and in the $(1-\delta)$-strongly robustly-reliable case, also the randomness of the learner).

First,
let us consider the case of strongly robustly-reliable
active learning.
Note that this case is quite challenging,
since the adversary effectively \emph{knows} what queries
the learner will make. For instance, the adversary may
choose to set it up so that the learner's first $\eta m$
queries are \emph{all} corrupted, so that the learner
cannot even \emph{find} an uncorrupted point until it has
made at least $\eta m$ queries.
In particular, this intuitively means that nontrivial reductions in label complexity
compared to passive learning are only possible in the case of small $\eta$.

Our result (Theorem~\ref{thm:active-nasty-noise} below)
proposes an active learning method that,
for any given $\eta$,
produces an \emph{optimal} $\eta$-robustly-reliable region
(i.e., matching the upper and lower bounds of
Theorems~\ref{thm:theta-ub} and \ref{thm:strongly_robustly-reliable_lower_bound}),
while making a number of label queries that, for small $\eta$,
is significantly smaller than the number $m$ of examples in $S'$
(the corrupted sample),
when the disagreement coefficient $\theta$ is small: specifically,
it makes a number of queries $\tilde{O}( \theta \eta m + \theta d )$.
The algorithm we propose is actually quite simple:
we process the corrupted
data $S'$ in sequence, and for each new point $x_i$ we query its label $y_i$
iff $x_i$ is in the region of disagreement of the set of all $h \in \cH$
that make at most $\eta m$ mistakes among all previously-queried points.
In the end, after processing all $m$ examples in $S'$,
we use the region of agreement of this same set of classifiers
as the empirical $\eta$-robustly-reliable region, predicting using their
agreed-upon label for any test point $x$ in this region.  The result is stated formally as follows.

\begin{theorem}
\label{thm:active-nasty-noise}
For any hypothesis class $\cH$ and $\eta \geq 0$,
there is an active learner $\cL$
that is strongly robustly-reliable against $\cA_\eta$
which,
for any data set $S$
consistent with some $h^* \in \cH$,
\begin{equation*}
\CR^{\cL}(S,h^*,\eta) \supseteq \Agree(\cB_S^\cH(h^*,2\eta)).
\end{equation*}
Moreover, if $\cD$ is a realizable distribution with some
$h^* \in \cH$ as target concept, then for any $\delta \in (0,1)$,
if $m = \Omega\!\left( \frac{\eta+\epsilon}{\epsilon^2} \left( d \log\frac{1}{\epsilon}+\log\frac{1}{\delta}\right) \right)$
(where $d$ is the VC dimension),
and $S \sim \cD^m$,
with probability at least $1-\delta$,
\begin{equation*}
\CR^{\cL}(S,h^*,\eta) \supseteq \Agree(\cB_\cD^\cH(h^*,2\eta+\epsilon)),
\end{equation*}
and with probability at least $1-\delta$,
the number of label queries made by the algorithm is
at most
\begin{equation*}
O\!\left( \theta \left( \eta m + d \log\frac{m}{d} + \log\frac{1}{\delta} \right)\log\!\left(\min\!\left\{m,\frac{1}{\eta}\right\}\right) \right) = \tilde{O}\!\left( \theta \eta m + \theta d \right),
\end{equation*}
where $\theta = \theta_{4\eta}$ is the disagreement coefficient of $\cH$ with respect to $h^*$ over the distribution $\cD$
(Definition \ref{def:dis}).
\end{theorem}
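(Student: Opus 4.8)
The plan is to analyze the simple active learner described just before the theorem statement: process $S' = \{(x_1,y_1),\ldots,(x_m,y_m)\}$ in sequence, maintain the version space $V_i = \{h \in \cH : h \text{ makes} \leq \eta m \text{ mistakes on the queried examples among } x_1,\ldots,x_i\}$, query $y_i$ iff $x_i \in \text{DIS}(V_{i-1})$ (otherwise infer $y_i$ from the common label), and at the end predict using $\Agree(V_m)$ with robust-reliability level $\eta$, outputting $(\bot,-1)$ off this region. The first thing to observe is that the set of classifiers $V_m$ that the algorithm ends up with is exactly $\cH_\eta(S') = \{h : \err_{S'}(h) \leq \eta\}$: every example in $S'$ is either queried (in which case it directly constrains $V$) or inferred, and an inferred example is one on which all surviving $h$ agree, so imposing its constraint changes nothing. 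Hence the empirical $\eta$-robustly-reliable region of this learner is $\Agree(\cH_\eta(S'))$, identical to the passive learner of Theorem~\ref{thm:empirical-ub}. Strong robust reliability then follows exactly as in Theorem~\ref{thm:empirical-ub}: if $\cL_{S'}(x) = (y,\eta')$ with $\eta' \geq \eta$ and $S' \in \cA_\eta(S)$ for $S$ labeled by $h^* \in \cH$, then $h^* \in \cH_\eta(S')$, so $y = h^*(x)$. And the first containment $\CR^{\cL}(S,h^*,\eta) \supseteq \Agree(\cB_S^\cH(h^*,2\eta))$, as well as the distributional version $\supseteq \Agree(\cB_\cD^\cH(h^*,2\eta+\epsilon))$ under the stated sample size, is then immediate by quoting Theorem~\ref{thm:theta-ub} verbatim — nothing new is needed because the confident region is literally the same set as for the passive learner.

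The substantive part is the label complexity bound. Here I would argue as follows. Fix the (realizable) draw $S \sim \cD^m$ labeled by $h^*$; the relevant high-probability event is that uniform convergence holds on $S$, so that $\cH_\eta(S') \subseteq \cB_S^\cH(h^*,2\eta) \subseteq \cB_\cD^\cH(h^*,2\eta + \epsilon')$ for a suitable $\epsilon' = O(\eta + \ldots)$ coming from a relative/multiplicative VC bound (this is where the $\frac{\eta+\epsilon}{\epsilon^2}(d\log\frac1\epsilon + \log\frac1\delta)$ sample size enters — we need a Vapnik-style ratio bound rather than the additive one, because we want the version-space radius to stay $O(\eta)$, not $O(\eta + \epsilon)$ with $\epsilon$ a free parameter). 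Now at step $i$, the current version space $V_{i-1}$ contains $\cH_\eta(S')$ and hence every classifier within empirical distance $O(\eta)$ of $h^*$ on the already-processed data; more to the point, $V_{i-1}$ is contained in the set of $h$ making at most $\eta m$ mistakes on the queried prefix, and since the uncorrupted sample is realizable and the adversary corrupts at most $\eta m$ points total, $h^*$ itself always survives (it makes at most $\eta m$ mistakes on all of $S'$). The key structural claim is that $V_{i-1} \subseteq \cB_{\cD}^\cH(h^*, c\eta)$ throughout, for an appropriate constant $c$ (essentially $4\eta$, matching $\theta = \theta_{4\eta}$): this needs a uniform-convergence argument over the queried subsample, for which the number of queried points must be large enough — this is the usual circularity in disagreement-based active learning and is handled by a doubling/epoch argument or by a careful union bound over prefixes. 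Granting that, every queried point lies in $\text{DIS}(\cB_\cD^\cH(h^*,4\eta))$, whose probability mass is at most $\theta_{4\eta}\cdot 4\eta$ by definition of the disagreement coefficient. So among the $m$ i.i.d. points, the expected number falling in this region is $O(\theta \eta m)$, and a Chernoff bound gives $O(\theta \eta m + \theta d\log(m/d) + \log\frac1\delta)$ with probability $1-\delta$; the extra $\log(\min\{m,1/\eta\})$ factor comes from splitting into $O(\log(1/\eta))$ (or $O(\log m)$) epochs of geometrically shrinking version-space radius, applying the disagreement-coefficient bound within each epoch, and union-bounding the uniform-convergence events across epochs.

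The main obstacle, as usual for disagreement-based active learning, is the circular dependence between "the version space is small" and "we have queried enough points to guarantee the version space is small": we cannot invoke uniform convergence on the queried subsample until we know its size, but its size is what we are trying to bound. The standard fix — which I would adopt — is an epoch/round structure: partition the $m$ examples into $O(\log(1/\eta))$ consecutive blocks, show inductively that after block $j$ the version space has radius $\leq c 2^{-j}$ (or has already shrunk to $O(\eta)$), use a uniform-convergence bound that holds with probability $1-\delta/\log(1/\eta)$ per block over the examples queried so far, and within block $j$ bound queries by $|\text{DIS}(\cB_\cD^\cH(h^*, c 2^{-j+1}))| \cdot (\text{block size}) = O(\theta\, 2^{-j+1}\, m)$ plus lower-order terms; summing the geometric series gives $O(\theta\eta m)$ and the union bound over blocks contributes the $\log(\min\{m,1/\eta\})$ factor and keeps the failure probability at $\delta$. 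A secondary technical point to be careful about is that the adversary is allowed to be adaptive — it sees the whole corrupted $S'$, equivalently it can place all $\eta m$ corruptions to maximize damage — but since $h^*$ survives regardless (it errs on at most the $\eta m$ corrupted points) and the confident region $\Agree(\cH_\eta(S'))$ is defined purely in terms of $S'$, adaptivity of the adversary does not affect correctness; it only means the label-complexity bound must be stated for worst-case $S' \in \cA_\eta(S)$, which the $\theta\eta m$ term already absorbs since at most $\eta m$ of the queried points are corrupted and all queried points lie in the disagreement region.
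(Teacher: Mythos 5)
There is a genuine gap in the first part of your argument. The claim that the final version space satisfies $V_m = \cH_\eta(S')$ is false: only $\cH_\eta(S') \subseteq V_m$ holds in general. The constraint defining $V_m$ counts mistakes on the \emph{queried} points only, and an unqueried point $x'_t$ is one on which all of $V_{t-1}$ agree --- but the adversary may have set the label $y'_t$ in $S'$ to disagree with that agreed-upon label (every hypothesis of $V_{t-1}$, including $h^*$, errs on such a corrupted point, and none of these errors is counted). A hypothesis can therefore sit in $V_m$ while making up to $2\eta m$ mistakes on $S'$: take $\cH=\{h^*,h\}$ with $\eta m=1$, one corrupted point placed outside $\text{DIS}(\{h^*,h\})$ and one clean point inside it; then $h$ has one queried mistake so $h\in V_m$, but $\err_{S'}(h)=2/m>\eta$. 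Consequently $\Agree(V_m)$ can be strictly smaller than $\Agree(\cH_\eta(S'))$, so you cannot ``quote Theorem \ref{thm:theta-ub} verbatim'' to conclude $\CR^{\cL}(S,h^*,\eta)\supseteq\Agree(\cB_S^\cH(h^*,2\eta))$; the true inclusions $V_m\subseteq\cH_{2\eta}(S')\subseteq\cB_S^\cH(h^*,3\eta)$ would only yield radius $3\eta$. (Your reliability/correctness claim is unaffected, since it only needs $h^*\in V_m$.) The missing ingredient is a direct counting argument: if $h\notin\cB_S^\cH(h^*,2\eta)$, then $h$ disagrees with $h^*$ and is incorrect on more than $2\eta m$ points of $S$, hence on more than $\eta m$ points of $S'$; since $h,h^*\in V_{t-1}$ at every step, each such point is queried, so $h$ accumulates more than $\eta m$ queried mistakes and is expelled, i.e.\ $V_m\subseteq\cB_S^\cH(h^*,2\eta)$, which is exactly what gives the $2\eta$ radius (and, combined with a relative Vapnik-type bound rather than Theorem \ref{thm:theta-ub}'s additive one, the distributional statement at the stated sample size).

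The label-complexity part is also not yet a proof. Your structural claim that $V_{t-1}\subseteq\cB_\cD^\cH(h^*,c\eta)$ ``throughout'' is false for small $t$ (a budget of $\eta m$ mistakes is enormous relative to a short prefix), and the epoch/doubling fix is only announced, not executed; moreover, you propose uniform convergence over the \emph{queried} subsample, which is not an i.i.d.\ sample since the selection depends on the data. The paper's route avoids this circularity entirely: the same counting argument as above, applied to each prefix, gives the deterministic bound $V_t\subseteq\cB_{S_t}^\cH\bigl(h^*,\frac{2\eta m}{t}\bigr)$ where $S_t$ is the prefix of the \emph{clean} sample; then a Bernstein bound for martingale differences, ratio-type uniform convergence union-bounded over all prefixes (giving $\cB_{S_t}^\cH(h^*,\frac{2\eta m}{t})\subseteq\cB_\cD^\cH(h^*,\frac{4\eta m}{t}+\alpha_t)$), and the disagreement coefficient applied only for radii above $4\eta$ (the terms with $t\le 4\eta m$ contribute $O(\eta m)$, the rest a harmonic sum) yield $\tilde{O}(\theta\eta m+\theta d)$ with the $\log(\min\{m,1/\eta\})$ factor. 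Your epoch plan could probably be made rigorous along these lines, but as written the two steps your proof rests on --- the identity $V_m=\cH_\eta(S')$ and the constant-radius claim for all $t$ --- are both incorrect.
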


\begin{proof}[Proof of Theorem~\ref{thm:active-nasty-noise}]
Given any hypothesis class $\cH$ and $\eta \geq 0$,
consider the following active learning algorithm $\cL$.
The algorithm initializes sets $V_0 = \cH$, $Q_0 = \{\}$.
Given a data set $S' = \{(x'_1,y'_1),\ldots,(x'_m,y'_m)\}$,
where the algorithm initially observes only $x'_1,\ldots,x'_m$,
the algorithm, processes the data in sequence:
for each $t \leq m$, the algorithm queries for the label
$y'_t$ iff $x'_t \in \text{DIS}(V_{t-1})$;
if it queries, then $Q_t = Q_{t-1} \cup \{(x'_t,y'_t)\}$,
and otherwise $Q_t = Q_{t-1}$;
in either case, $V_t = \{ h \in \cH : \sum_{(x,y) \in Q_t} \mathbbm{1}[h(x) \neq y] \leq \eta m \}$.
To define the predictions of the learner in the end,
for any $x \in \Agree(V_m)$,
define $\cL_{S'}(x) = (y,\eta)$ for the label $y$
agreed upon for $x$ by every $h \in V_t$;
for any $x \notin \Agree(V_m)$, we can define
$\cL_{S'}(x) = (\bot,-1)$.

Note that if $S' \in \cA_\eta(S)$
for a data set $S = \{(x_1,y_1),\ldots,(x_m,y_m)\}$
consistent with some $h^* \in \cH$,
then (since $Q_t$ is a subsequence of $S'$,
and $h^*$ makes at most $\eta m$ mistakes on $S'$),
we maintain the invariant that $h^* \in V_t$ for all $t$.
In particular, this implies $\cL_{S'}(x)$ is well-defined
since $V_m$ is non-empty.  Moreover, this implies that
for any $x$, any label $y$ agreed-upon by all
$h \in V_m$ is necessarily $h^*(x)$.
Thus, $\cL$ satisfies the definition of a strongly
robustly-reliable learner.
Additionally, for any $h \in V_{m}$,
since we also have $h^* \in V_m$,
the algorithm would have queried for the label $y'_t$
of every $x'_t \in \text{DIS}(\{h,h^*\})$.
Since $h$ remains in $V_{m}$, it must be that $h(x'_t) \neq y'_t$
on at most $\eta m$ of these points.
Since any $h \in \cH$ not in $\cB_S^\cH(h^*,2\eta)$
has strictly greater than $2\eta m$ points in $S$
on which it disagrees with $h^*$ and is incorrect,
and the number of such points in $S'$ can be smaller by
at most $\eta m$,
every $h \in V_{m}$ is necessarily in $\cB_S^\cH(h^*,2\eta)$:
that is, $V_{m} \subseteq \cB_S^\cH(h^*,2\eta)$.
It follows immediately that
$\CR^{\cL}(S,h^*,\eta) \supseteq \Agree(\cB_S^\cH(h^*,2\eta))$.

To address the remaining claims,
let $\cD$ be a realizable distribution with some
$h^* \in \cH$ as target concept,
and fix $\epsilon,\delta \in (0,1)$ and $\eta \geq 0$,
and suppose
$m = \Omega\!\left( \frac{\eta+\epsilon}{\epsilon^2} \left( d \log\frac{1}{\epsilon}+\log\frac{1}{\delta}\right) \right)$
with an appropriate numerical constant factor.
Let $S \sim \cD^m$.
By classic relative uniform convergence guarantees
\cite{vapnik:74}
(see Theorem 4.4 of \cite{vapnik:98}),
with probability at least $1-\delta$,
every $h \in \cH$ satisfies
\begin{equation*}
\left| \err_{S}(h) - \err_{\cD}(h) \right|
\leq O\!\left( \sqrt{ \err_{\cD}(h) \frac{1}{m}\left( d \log \frac{m}{d} + \log\frac{1}{\delta} \right)} + \frac{1}{m}\left( d \log\frac{m}{d}+\log\frac{1}{\delta} \right) \right),
\end{equation*}
so that, for an appropriate numerical constant,
if $m = \Omega\!\left( \frac{\eta+\epsilon}{\epsilon^2} \left( d \log\frac{1}{\epsilon}+\log\frac{1}{\delta}\right) \right)$,
every $h \in \cH$ with $\err_{\cD}(h) > 2 \eta + \epsilon$
has $\err_{S}(h) > 2 \eta$.
Thus, on this event,
$\cB_{S}^{\cH}(h^*,2\eta) \subseteq \cB_{\cD}^{\cH}(h^*,2\eta+\epsilon)$.
In particular, by the above analysis of the algorithm under any fixed $S$,
we have $V_{m} \subseteq \cB_{S}^{\cH}(h^*,2\eta)$.
Together, we have that with probability at least $1-\delta$,
$V_{m} \subseteq \cB_{\cD}^{\cH}(h^*,2\eta+\epsilon)$,
which immediately implies that
$\CR^{\cL}(S,h^*,\eta) \supseteq \Agree(\cB_\cD^\cH(h^*,2\eta+\epsilon))$.

Finally, we turn to bounding the number of label queries.
For every $t \in \{1,\ldots,m\}$, define $S_t = \{(x_1,y_1),\ldots,(x_t,y_t)\}$
and $S'_t = \{(x'_1,y'_1),\ldots,(x'_t,y'_t)\}$.
Note that the number of queries equals
\begin{equation}
\label{eqn:active-queries-bound-1}
\sum_{t=1}^{m} \mathbbm{1}[ x'_t \in \text{DIS}(V_{t-1}) ]
\leq \eta m + \sum_{t=1}^{m} \mathbbm{1}[ x_t \in \text{DIS}(V_{t-1}) ].
\end{equation}
Following a similar argument to the analysis of
$\CR^{\cL}$ above, for any $t \leq m$,
since any $h \in \cH$ not in $\cB_{S_t}^\cH\!\left(h^*,\frac{2 \eta m}{t}\right)$
has strictly greater than $2\eta m$ points in $S_{t}$
on which it disagrees with $h^*$ and is incorrect,
and the number of such points in $S'_{t}$ can be smaller by
at most $\eta m$,
every $h \in V_{t}$ is necessarily in $\cB_{S_t}^\cH\!\left(h^*,\frac{2\eta m}{t}\right)$:
that is,
$V_{t} \subseteq \cB_{S_t}^\cH\!\left(h^*,\frac{2\eta m}{t}\right)$.
Together with \eqref{eqn:active-queries-bound-1},
this implies the total number of queries is at most
\begin{equation*}
\eta m + \sum_{t=1}^{m} \mathbbm{1}\!\left[ x_t \in \text{DIS}\!\left(\cB_{S_{t-1}}^\cH\!\left(h^*,\frac{2\eta m}{t-1}\right)\right) \right].
\end{equation*}
By Bernstein's inequality for martingale difference sequences,
with probability at least $1-\delta/2$, the right hand side
above is at most
\begin{align}
& \eta m + \log_{2}\frac{2}{\delta} + 2e \sum_{t=1}^{m} \text{Pr}\!\left( x_t \in \text{DIS}\!\left(\cB_{S_{t-1}}^\cH\!\left(h^*,\frac{2\eta m}{t-1}\right)\right) \middle| S_{t-1} \right)
\notag \\ & = \eta m + \log_{2}\frac{2}{\delta} + 2e \sum_{t=1}^{m} \cD_{X}\!\left( \text{DIS}\!\left(\cB_{S_{t-1}}^\cH\!\left(h^*,\frac{2\eta m}{t-1}\right)\right) \right).
\label{eqn:active-queries-bound-2}
\end{align}

Let $\alpha_{t} = \frac{c}{t}\left( d \log\frac{t}{d} + \log\frac{1}{\delta} \right)$,
for a numerical constant $c > 0$.
Again by relative uniform convergence guarantees,
together with a union bound (over all values of $t$),
for an appropriate numerical constant $c > 0$,
with probability at least $1-\delta/2$,
for every $t \leq m$ and $h \in \cH$,
\begin{equation*}
\err_{\cD}(h) \leq 2 \err_{S_t}(h) + \alpha_{t}.
\end{equation*}
Thus, on this event, for every $t \leq m$,
$\cB_{S_{t}}^\cH\!\left(h^*,\frac{2\eta m}{t}\right) \subseteq \cB_{\cD}^{\cH}\!\left( h^*, \frac{4 \eta m}{t} + \alpha_{t} \right)$.
Plugging into \eqref{eqn:active-queries-bound-2},
by a union bound, with probability at least $1-\delta$,
the total number of queries is at most
\begin{align*}
& \eta m + \log_{2}\frac{2}{\delta} + 2e \sum_{t=1}^{m} \cD_{X}\!\left( \text{DIS}\!\left(\cB_{\cD}^\cH\!\left(h^*,\frac{4\eta m}{t-1}+\alpha_{t-1}\right)\right) \right)
\\ & \leq (1+8e) \eta m + \log_{2}\frac{2}{\delta} + 2e\sum_{4 \eta m < t \leq m} \cD_{X}\!\left( \text{DIS}\!\left(\cB_{\cD}^\cH\!\left(h^*,\frac{4\eta m}{t-1}+\alpha_{t-1}\right)\right) \right)
\\ & \leq (1+8e) \eta m + \log_{2}\frac{2}{\delta} + 2e\sum_{4 \eta m < t \leq m} \theta_{4\eta} \left( \frac{4\eta m}{t-1}+\alpha_{t-1}\right)
\\ & = O\!\left( \theta_{4\eta} \left( \eta m + d \log\frac{m}{d} + \log\frac{1}{\delta} \right) \log\!\left(\min\!\left\{m,\frac{1}{\eta}\right\}\right) \right),
\end{align*}
where the inequality on the second-to-last line is by the definition of the disagreement coefficient,
and the final expression results from summing the harmonic series:
$\sum_{t=a}^{b} \frac{1}{t} = O\!\left( \log \frac{b}{a} \right)$.
\end{proof}

Next, we consider the weaker case of
$(1-\delta)$-strongly robustly-reliable active learning.
Note that in this case, the adversary only corrupts the data pool, but is not directly involved with the learner's queries. That is, the corruption may depend on the dataset $S$ and the target test instance $x$, but it does not depend on the randomness in the algorithm used by the learner to query labels and output a hypothesis.

In this case, our result (Theorem~\ref{thm:acl-ub} below) provides a method that yields
an empirical robustly-reliable region of
comparable size to the results for passive
learning above (up to constant factors),
using a number of queries that is significantly
smaller than the sample complexity of passive learning,
when $\eta$ and the disagreement coefficient are
small.  However, we note that unlike
Theorem~\ref{thm:active-nasty-noise} above,
in this case the number of queries is
\emph{bounded}, in that it does not depend on
the size $m$ of the data set $S$:
that is, it can be expressed purely as a function
of
$\eta$,$\epsilon$,$\delta$,$d$, and $\theta$.
However, it is also worth noting that the number of queries
in Theorem~\ref{thm:active-nasty-noise} is
actually of the \emph{same} order as in Theorem~\ref{thm:acl-ub} (up to log factors)
in the case that
$m = \Theta\!\left( \frac{\eta+\epsilon}{\epsilon^2} \left( d \log\frac{1}{\epsilon}+\log\frac{1}{\delta}\right) \right)$,
so that for the minimal sample size sufficient for the
result on $\CR^{\cL}(S,h^*,\eta)$ in Theorem~\ref{thm:active-nasty-noise}, we essentially lose
nothing compared to the guarantee achievable by the
weaker $(1-\delta)$-strongly robustly-reliable learner
in Theorem~\ref{thm:acl-ub}.

Our $(1-\delta)$-strongly robustly-reliable active learner in Theorem \ref{thm:acl-ub} will use an agnostic active learning algorithm (in the usual learning setting, without robust-reliability requirements). We will use the algorithm proposed by \cite{dasgupta2007general}. The algorithm partitions points seen so far into sets $T$ (labels explicitly queried) and $U$ (labels not queried). For a new point $x$, ERMs $h_y$ on $T$ consistent with $U\cup(x,y)$ for $y\in\{0,1\}$ are learned and the label for $x$ is requested only if the difference in errors of $h_0,h_1$ is small. The algorithm has the following guarantee.

\begin{theorem}[\cite{dasgupta2007general}]\label{thm:dasgupta_aa}
Let $\cH$ denote the hypothesis space. Let ${\cD}$ be a distribution such that ${h}^*=\argmin_{h\in\cH}\err_\cD(h)$ and $\err_\cD(h^*)=\eta$. If ${\theta}_{\eta+\epsilon}$ is the disagreement coefficient of $\cH$ w.r.t. ${h}^*$ over ${\cD}$, then given ${S}\sim{\cD}^m$ with $m\geq c\left(\frac{\eta+\epsilon}{\epsilon^2}\left( d \log \frac{1}{\epsilon}+\log\frac{1}{\delta}\right) \right)$ unlabeled examples for sufficiently large constant $c$, with probability at least $1-\delta$, the algorithm queries at most $O\!\left( {\theta}_{\eta+\epsilon} \frac{(\eta+\epsilon)^2}{\epsilon^2}\left( d \log\frac{1}{\epsilon}+\log\frac{1}{\delta} \right)\log\frac{1}{\epsilon}  \right)$ labeled examples and returns a hypothesis ${h}\in\cH$ with  $\err_\cD({h})\le \err_\cD({h}^*)+\epsilon$.
\end{theorem}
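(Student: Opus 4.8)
\noindent\textbf{Proof idea (following \cite{dasgupta2007general}).}
This is the agnostic active learning guarantee of \cite{dasgupta2007general}, with the label complexity expressed through the disagreement coefficient of Definition~\ref{def:dis}, so the plan is to reproduce that analysis. It rests on two ingredients: (i) relative (normalized) uniform convergence bounds in the style of \cite{vapnik:74} (see Theorem~4.4 of \cite{vapnik:98}), which control how far the empirical error of every $h\in\cH$ on a prefix of the stream can be from $\err_\cD(h)$, the scale-sensitive form giving an $O(\sqrt{\err_\cD(h)\,\beta_t}+\beta_t)$ deviation with $\beta_t=\frac{1}{t}(d\log\frac td+\log\frac1\delta)$; and (ii) the observation that the algorithm queries a point only when it lies in the region of disagreement of a version space that is shrinking toward $h^*$, so the query probabilities can be summed against $\theta_{\eta+\epsilon}$.

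First I would set up the DHM invariant. After processing $x_1,\dots,x_t$ the algorithm effectively maintains a version space $V_t\subseteq\cH$ of hypotheses whose empirical error on the explicitly queried labels together with the labels inferred for un-queried points is within a generalization-sized threshold $\Delta_t$ of the minimum. On a single high-probability ($1-\delta/2$) relative uniform convergence event holding simultaneously over all $h\in\cH$ and all prefix lengths, one shows: (i) $h^*\in V_t$ for all $t$; and (ii) whenever the algorithm declines to query $x_t$, all of $V_{t-1}$ agrees on $x_t$, i.e.\ $x_t\notin\text{DIS}(V_{t-1})$, and the label it assigns is $h^*(x_t)$. Claim (ii) is the statement that the algorithm's concrete test --- the two label-forced ERMs $h_0,h_1$ having empirical-error gap above $\Delta_{t-1}$ --- certifies agreement of the entire current version space; claim (i) uses that, by (ii) applied at earlier steps, every inferred label equals $h^*$'s label, so the empirical error the algorithm works with never charges $h^*$ more than relative uniform convergence allows for a hypothesis with $\err_\cD(h^*)=\eta$. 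Claim (ii) is the crux, and the main obstacle: one cannot apply uniform convergence to the queried subsample directly because it is a biased (disagreement-region) draw, which is exactly why DHM reason about the partially-inferred-but-$h^*$-consistent sample instead, and getting the thresholds $\Delta_t$ consistent with the scale-sensitive bound is the delicate bookkeeping.

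Correctness then follows from (i): the hypothesis the algorithm returns is an ERM over the (queried $+$ inferred) sample, hence has empirical error no larger than $h^*$'s on that sample; since on the good event empirical and population errors of every $h$ differ by $O(\sqrt{\eta\beta_m}+\beta_m)$, taking $m=\Omega\!\left(\frac{\eta+\epsilon}{\epsilon^2}(d\log\frac1\epsilon+\log\frac1\delta)\right)$ makes this slack $O(\epsilon)$ and yields $\err_\cD(h)\le\err_\cD(h^*)+\epsilon$.

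For the label complexity, (ii) gives that the number of queries is $\sum_{t=1}^m\mathbbm{1}[x_t\in\text{DIS}(V_{t-1})]$; relative uniform convergence on the first $t-1$ points gives $V_{t-1}\subseteq\cB_\cD^\cH(h^*,r_{t-1})$ with $r_{t-1}=O(\eta+\sqrt{\eta\beta_{t-1}}+\beta_{t-1})$, so $\Pr(x_t\in\text{DIS}(V_{t-1})\mid S_{t-1})\le\cD_X(\text{DIS}(\cB_\cD^\cH(h^*,r_{t-1})))\le\theta_{\eta+\epsilon}\max(r_{t-1},\eta+\epsilon)$ by Definition~\ref{def:dis}. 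Summing these conditional probabilities over $t\le m$ --- the $\eta$ part contributing $\theta_{\eta+\epsilon}\eta m$, the $\beta_{t-1}$ part a harmonic sum $\theta_{\eta+\epsilon}(d\log\frac md+\log\frac1\delta)\log m$, and the cross term bounded by Cauchy--Schwarz --- and plugging in $m$ of the stated order collapses the total to $O\!\left(\theta_{\eta+\epsilon}\frac{(\eta+\epsilon)^2}{\epsilon^2}(d\log\frac1\epsilon+\log\frac1\delta)\log\frac1\epsilon\right)$. Finally, since the per-step query indicators are dependent, I would upgrade the bound on the sum of conditional expectations to a high-probability ($1-\delta/2$) bound on the actual number of queries via Bernstein's inequality for martingale difference sequences, exactly as in the proof of Theorem~\ref{thm:active-nasty-noise}, and union-bound the two $\delta/2$ events.
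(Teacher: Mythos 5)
You should note first that the paper does not prove this statement at all: Theorem~\ref{thm:dasgupta_aa} is imported verbatim from \cite{dasgupta2007general} and used as a black box in the proof of Theorem~\ref{thm:acl-ub}, so there is no in-paper argument to compare yours against. Judged as a reconstruction of the DHM analysis, your outline has the right skeleton, and it is the same style of argument the paper itself uses for Theorem~\ref{thm:active-nasty-noise}: the invariant that $h^*$ survives in the (implicit) version space, the observation that a label is requested only when the current version space disagrees on $x_t$, normalized uniform convergence at scale $\beta_t=\frac1t(d\log\frac td+\log\frac1\delta)$, the disagreement-coefficient bound $\cD_X\bigl(\text{DIS}(\cB_\cD^\cH(h^*,r_{t-1}))\bigr)\le\theta_{\eta+\epsilon}\max\{r_{t-1},\eta+\epsilon\}$, and a martingale Bernstein step to turn summed conditional query probabilities into a high-probability query count, which does collapse to the stated bound once $m=\Theta\bigl(\frac{\eta+\epsilon}{\epsilon^2}(d\log\frac1\epsilon+\log\frac1\delta)\bigr)$ is substituted.

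As a proof, however, your proposal is incomplete exactly at the point you yourself flag as the crux. Claims (i)--(ii) --- that the empirical-error-gap test with thresholds $\Delta_t$ certifies that every near-optimal hypothesis labels $x_t$ identically, that each inferred label therefore equals $h^*(x_t)$, and that errors measured on the mixed queried-plus-inferred sample track population errors even though the queried points are a biased (disagreement-region) subsample --- are asserted, not established; choosing $\Delta_t$ so that it is simultaneously large enough for the inference step to be sound under the scale-sensitive deviation bound and small enough that the version-space radius shrinks like $2\eta+O(\sqrt{\eta\beta_t}+\beta_t)$ is the entire technical content of \cite{dasgupta2007general}, and your sketch defers it. Note also that the label-complexity argument only needs the one-directional implications (a query implies $x_t$ lies in the disagreement region of a ball around $h^*$ of radius $r_{t-1}$; an inference implies the label is $h^*(x_t)$); your stronger form of (ii), that declining to query means \emph{all} of $V_{t-1}$ agrees on $x_t$, depends on how $V_t$ and $\Delta_t$ are defined and is not needed. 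Since the paper treats this theorem as cited prior work, the citation is the appropriate resolution; a self-contained proof would require carrying out the induction over $t$ with explicit thresholds, which your proposal does not do.
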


In Theorem \ref{thm:acl-ub} we define an $(1-\delta)$-strongly robustly-reliable active learner which requires fewer labels than the passive learner.

\begin{theorem}\label{thm:acl-ub}
Let $\cH$ be a hypothesis class, and $S$ be a sample consistent with $h^*\in\cH$. If $\hat{\theta}_{\eta+\epsilon}$ is the disagreement coefficient of $\cH$ with respect to $h^*$ over the uniform distribution $U(S)$ over $S$ (see Definition \ref{def:dis}), there exists a $(1-\delta)$-strongly robustly-reliable active learner $\cL$ based on the agnostic active learning algorithm of \cite{dasgupta2007general}, which for some constant $c$, given any $S'\in\cA_\eta(S)$, for sample size $m\geq c\left(\frac{\eta+\epsilon}{\epsilon^2}\left( d \log \frac{1}{\epsilon}+\log\frac{1}{\delta}\right) \right)$, with probability $1-\delta$ the learner queries $O\!\left( \hat{\theta}_{\eta+\epsilon} \frac{(\eta+\epsilon)^2}{\epsilon^2}\left( d \log\frac{1}{\epsilon}+\log\frac{1}{\delta} \right)\log\frac{1}{\epsilon}  \right)$ labeled examples and returns $\cL_{S'}$ which satisfies, with probability at least $1-\delta$,
\changed{$$\ECR^{\cL}(S',\eta)\supseteq\Agree(\cB_S^\cH(h^*,5\eta+2\epsilon)),$$}
where $d$ is the VC dimension of $\cH$. Moreover, if $\cD$ is a realizable distribution consistent with $h^*\in\cH$ and $\theta_{\eta+\epsilon}$ is the disagreement coefficient of $\cH$ with respect to $h^*$ over $\cD$, if $m= \Omega\left(\frac{1}{\epsilon^2}\left( d +\log\frac{1}{\delta}\right) \right)$, for any  $S\sim\cD^m$, given any $S'\in\cA_\eta(S)$, w.p.  $1-2\delta$, $\cL_{S'}$ satisfies
\changed{$\ECR^{\cL}(S',\eta)\supseteq\Agree(\cB_\cD^\cH(h^*,5\eta+3\epsilon))$,}
and queries $O\!\left( \theta_{\eta+\epsilon} \frac{(\eta+\epsilon)^2}{\epsilon^2}\left( d \log\frac{1}{\epsilon}+\log\frac{1}{\delta} \right)\log\frac{1}{\epsilon}  \right)$ labeled examples w.p. $1-3\delta$.
\end{theorem}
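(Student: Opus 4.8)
The plan is to reduce robustly-reliable active learning to ordinary agnostic active learning on the \emph{empirical} distribution. First I would treat the given corrupted pool $S'$ as a sample from $U(S')$, the uniform distribution over $S'$. Since $S'\in\cA_\eta(S)$ and $S$ is consistent with $h^*$, the true target satisfies $\err_{S'}(h^*)\le\eta$, so the empirical risk minimizer $\tilde h=\argmin_{h\in\cH}\err_{S'}(h)$ also has $\err_{S'}(\tilde h)\le\eta$: this is an agnostic instance with noise rate at most $\eta$. I would run the agnostic active learner of \cite{dasgupta2007general} (Theorem~\ref{thm:dasgupta_aa}) on the pool with accuracy $\epsilon$ and confidence $\delta$; with probability at least $1-\delta$ over its internal randomness it returns $\hat h\in\cH$ with $\err_{S'}(\hat h)\le\err_{S'}(\tilde h)+\epsilon\le\eta+\epsilon$, using the stated number of label queries. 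The learner then outputs $\cL_{S'}(x)=(\hat h(x),\eta)$ for every $x\in\Agree(\cB_{S'}^{\cH}(\hat h,2\eta+\epsilon))$ and $\cL_{S'}(x)=(\bot,-1)$ otherwise.

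To argue robust reliability, I would condition on the good event above and note that any $h$ with $\err_{S'}(h)\le\eta$ satisfies $d_{S'}(h,\hat h)\le\err_{S'}(h)+\err_{S'}(\hat h)\le 2\eta+\epsilon$, so $\cH_\eta(S')\subseteq\cB_{S'}^{\cH}(\hat h,2\eta+\epsilon)$; since $\hat h$ is trivially in that ball, on its agreement region $\hat h(x)$ equals $h(x)$ for every $h\in\cH_\eta(S')$. Because any sample $S_0$ with $S'\in\cA_\eta(S_0)$ consistent with some $h_0\in\cH$ forces $\err_{S'}(h_0)\le\eta$, hence $h_0\in\cH_\eta(S')$, this shows that whenever $\cL_{S'}(x)=(y,\eta')$ with $\eta'\ge\eta$ we have $y=h_0(x)$ --- i.e.\ $\cL$ is $(1-\delta)$-strongly robustly-reliable against $\cA_\eta$, and $\ECR^{\cL}(S',\eta)=\Agree(\cB_{S'}^{\cH}(\hat h,2\eta+\epsilon))$ on the good event.

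For the size of the robustly-reliable region I would prove the containment $\cB_{S'}^{\cH}(\hat h,2\eta+\epsilon)\subseteq\cB_S^{\cH}(h^*,4\eta+2\epsilon)$ by a triangle inequality: $d_{S'}(h,\hat h)\le 2\eta+\epsilon$ gives $\err_{S'}(h)\le 3\eta+2\epsilon$; since $d(S,S')\le\eta$ changes the empirical error of any fixed hypothesis by at most $\eta$, we get $\err_S(h)\le 4\eta+2\epsilon$, and $\err_S(h)=d_S(h,h^*)$ as $S$ is consistent with $h^*$. Since $\Agree(\cdot)$ reverses inclusions, $\ECR^{\cL}(S',\eta)\supseteq\Agree(\cB_S^{\cH}(h^*,4\eta+2\epsilon))\supseteq\Agree(\cB_S^{\cH}(h^*,5\eta+2\epsilon))$, which is the first displayed claim (with probability $\ge 1-\delta$ over the learner's randomness). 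For the distributional statement I would then add a standard (relative) VC uniform-convergence step: with $m=\Omega(\frac{1}{\epsilon^2}(d+\log\frac1\delta))$, with probability at least $1-\delta$ over $S\sim\cD^m$ every $h$ has $|d_S(h,h^*)-d_\cD(h,h^*)|\le\epsilon$, hence $\cB_\cD^{\cH}(h^*,4\eta+\epsilon)\subseteq\cB_S^{\cH}(h^*,4\eta+2\epsilon)$, and a union bound with the learner's good event yields $\ECR^{\cL}(S',\eta)\supseteq\Agree(\cB_\cD^{\cH}(h^*,5\eta+3\epsilon))$ with probability $\ge 1-2\delta$.

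For the query counts, Theorem~\ref{thm:dasgupta_aa} bounds the number of queries using the disagreement coefficient of $\cH$ with respect to $\tilde h$ over $U(S')$ at radius $\err_{S'}(\tilde h)+\epsilon\le\eta+\epsilon$ and the factor $(\err_{S'}(\tilde h)+\epsilon)^2/\epsilon^2\le(\eta+\epsilon)^2/\epsilon^2$. To convert this into the quantity $\hat\theta_{\eta+\epsilon}$ (over $U(S)$, reference $h^*$) appearing in the statement, I would use that $U(S)$ and $U(S')$ differ by at most $d(S,S')\le\eta$ in total variation and that $d_{U(S)}(\tilde h,h^*)=\err_S(\tilde h)\le 2\eta$, and invoke robustness of the disagreement coefficient to such $O(\eta)$ perturbations of the distribution and the reference hypothesis at radii $\Omega(\eta+\epsilon)$; in the distributional case a further uniform-convergence step turns $\hat\theta_{\eta+\epsilon}$ into $\theta_{\eta+\epsilon}$ over $\cD$, adding one more $\delta$ to the union bound. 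The main obstacle is exactly these disagreement-coefficient comparisons: the region and reliability steps are elementary triangle inequalities, but showing that the disagreement coefficient changes by only a constant factor under an $O(\eta)$ (respectively $O(\epsilon)$) perturbation of both the distribution and the reference classifier --- which is what lets the $U(S')/\tilde h$ bound of \cite{dasgupta2007general} be restated in terms of $\hat\theta_{\eta+\epsilon}$ and then $\theta_{\eta+\epsilon}$ --- requires care and relies on the radii of interest dominating the perturbation size.
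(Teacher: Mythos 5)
Your proposal is essentially the paper's own proof: run the agnostic active learner of \cite{dasgupta2007general} on the corrupted pool, output the largest $\eta$ with $x\in\Agree(\cB_{S'}^{\cH}(\hat h,2\eta+\epsilon))$, argue reliability from $h_0\in\cH_\eta(S')\subseteq\cB_{S'}^{\cH}(\hat h,2\eta+\epsilon)$, relate the ball to $\cB_S^{\cH}(h^*,\cdot)$ by triangle inequalities (your $4\eta+2\epsilon$ is in fact slightly tighter than the paper's $5\eta+2\epsilon$), and convert disagreement coefficients between $U(S')$, $U(S)$, and $\cD$ by perturbation arguments. Two small points. First, in the distributional step your quoted containment goes the wrong way: to get $\ECR^{\cL}(S',\eta)\supseteq\Agree(\cB_\cD^{\cH}(h^*,5\eta+3\epsilon))$ from $\ECR^{\cL}(S',\eta)\supseteq\Agree(\cB_S^{\cH}(h^*,4\eta+2\epsilon))$ you need $\cB_S^{\cH}(h^*,4\eta+2\epsilon)\subseteq\cB_\cD^{\cH}(h^*,5\eta+3\epsilon)$, which follows from the direction $d_\cD(h,h^*)\le d_S(h,h^*)+\epsilon$ of the uniform-convergence bound you invoked, not from $\cB_\cD^{\cH}(h^*,4\eta+\epsilon)\subseteq\cB_S^{\cH}(h^*,4\eta+2\epsilon)$ as written; this is a trivially repaired slip since you stated the two-sided bound. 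Second, the disagreement-coefficient comparison you flag as the main obstacle is exactly what the paper carries out for the distribution shift: since every radius in the supremum satisfies $r>\eta+\epsilon\ge\eta$, one gets $\cB_{U(S')}^{\cH}(h^*,r)\subseteq\cB_{U(S)}^{\cH}(h^*,r+\eta)\subseteq\cB_{U(S)}^{\cH}(h^*,2r)$ and a mass shift of at most $\eta\le r$, giving $\theta'\le 2\hat\theta_{\eta+\epsilon}+1$, and the analogous rescaling handles the shift from the reference hypothesis $\tilde h$ to $h^*$ (a point on which you are actually more careful than the paper, which states the DHM query bound directly with reference $h^*$); the further step from $\hat\theta_{\eta+\epsilon}$ to $\theta_{\eta+\epsilon}$ uses uniform convergence plus the observation that $\{\text{DIS}(\cB_\cD^{\cH}(h^*,r)):r\ge0\}$ has VC dimension $1$, as in the paper.
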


\begin{proof} The learner samples points uniformly randomly from the (corrupted) data pool $S'$ and applies the agnostic active learning algorithm of \cite{dasgupta2007general}, i.e. the agnostic active learning algorithm receives unlabeled points drawn uniformly from the data pool $S'$ and labels are revealed for points requested by the algorithm. Let $\hat{h}$ denote the hypothesis output by the algorithm. The learner provides test point $x$ with the largest $\eta$ such that $x\in\Agree(\cB_{S'}^\cH(\hat{h},2\eta+\epsilon))$ and outputs the common $y$ in the agreement region, and outputs $(\bot,-1)$ if $x\notin\Agree(\cB_{S'}^\cH(\hat{h},2\eta+\epsilon))$ for any $\eta\ge 0$. Note that this computation does not need the knowledge of labels and can be performed over the unlabeled pool which the learner has access to.

By Theorem \ref{thm:dasgupta_aa}, we have an upper bound on the number of labels requested by the agnostic active learner given by $O\!\left( \theta'_{\eta+\epsilon} \frac{(\eta+\epsilon)^2}{\epsilon^2}\left( d \log\frac{1}{\epsilon}+\log\frac{1}{\delta} \right)\log\frac{1}{\epsilon}  \right)$ with probability at least $1-\delta$ over the algorithm's internal randomness, where $\theta'_{\eta+\epsilon}$ is the disagreement coefficient of $\cH$ w.r.t. $h^*$ over $\cD'$, with $\cD'$ being the distribution corresponding to uniform draws from $S'$. We will show that $\theta'_{\eta+\epsilon}=O(\hat{\theta}_{\eta+\epsilon})$, which implies the desired bound. Let $\hat{\cD}$ denote $U(S)$. Observe that for each $h\in\cH$ if $d_{\cD'}(h,h^*)\le r$, then $d_{\hat{\cD}}(h,h^*)=\Pr_{\hat{\cD}}[h(x)\ne h^*(x)]= d_S(h,h^*) \le d_{S'}(h,h^*)+\eta= \Pr_{\cD'}[h(x)\ne h^*(x)]+\eta\le r+\eta$.
Thus, $\cB_{\cD'}^{\cH}(h^*,r)\subseteq \cB_{\hat{\cD}}^{\cH}(h^*,r+\eta)$ and $\text{DIS}(\cB_{\cD'}^{\cH}(h^*,r))\subseteq \text{DIS}(\cB_{\hat{\cD}}^{\cH}(h^*,r+\eta))$.
Now,
\begin{align*}
    \theta'_{\eta+\epsilon}&=\sup_{r>\eta+ \epsilon}\frac{\Pr_{\cD'_X}
[\text{DIS}(\cB_{\cD'}^\cH(h^*, r))]}{r}
\\&\le \sup_{r>\eta+ \epsilon}\frac{\Pr_{\hat{\cD}_X}
[\text{DIS}(\cB_{\cD'}^\cH(h^*, r))]+\eta}{r}
\\&\le \sup_{r>\eta+ \epsilon}\frac{\Pr_{\hat{\cD}_X}
[\text{DIS}(\cB_{\hat{\cD}}^\cH(h^*,r+\eta))]+\eta}{r}
\\&\le \sup_{r>\eta+ \epsilon}\frac{\Pr_{\hat{\cD}_X}
[\text{DIS}(\cB_{\hat{\cD}}^\cH(h^*,2r))]+r}{r}
\\&\le 2\hat{\theta}_{\eta+\epsilon}+1= O(\hat{\theta}_{\eta+\epsilon})
\end{align*}

The agnostic active learning algorithm returns a hypothesis $\hat{h}$ with error at most $\err_{\cD'}(\hat{h})\le\err_{\cD'}(h^*)+\epsilon\le\eta+\epsilon$ on $S'$ with failure probability $\delta$.
Since $\err_{S'}(\hat{h})\le \eta+\epsilon$ (if the failure event does not occur) and $\err_{S'}(h^*)\le \eta$, using the triangle inequality $h^*\in\cB_{S'}^\cH(\hat{h},2\eta+\epsilon)$ and we can provide robust-reliability $\eta$ to $x$ if $x\in\Agree(\cB_{S'}^\cH(\hat{h},2\eta+\epsilon))$. Notice $\cB_{S'}^\cH(\hat{h},2\eta+\epsilon)\subseteq \cB_{S'}^\cH(h^*,4\eta+2\epsilon)\subseteq \cB_{S}^\cH(h^*,5\eta+2\epsilon)$. Therefore the \changed{empirical} $\eta$-robustly-reliable region of the learner contains $\Agree(\cB_S^\cH(h^*,5\eta+2\epsilon))$ with probability $1-\delta$ over the learner's randomness.

To establish the distributional results, we need to relate $\theta_{\eta+\epsilon}$ with $\hat{\theta}_{\eta+\epsilon}$ and the robustly-reliable agreement region over the sample to one over the distribution, both using uniform convergence bounds. By uniform convergence (\cite{anthony2009neural} Theorem 4.10), with probability at least $1-\delta$, for each $h\in\cH$ if $d_{\hat{\cD}}(h,h^*)\le r$, then $d_{\cD}(h,h^*) \le  d_S(h,h^*)+\epsilon = d_{\hat{\cD}}(h,h^*)+\epsilon\le r+\epsilon$.
Thus, on this event, $\cB_{\hat{\cD}}^{\cH}(h^*,r)\subseteq \cB_{\cD}^{\cH}(h^*,r+\epsilon)$ and $\text{DIS}(\cB_{\hat{\cD}}^{\cH}(h^*,r))\subseteq \text{DIS}(\cB_{\cD}^{\cH}(h^*,r+\epsilon))$.
Moreover, note that the family
$\{ \text{DIS}(\cB_{\cD}^{\cH}(h^*,r)) : r \geq 0 \}$ has VC dimension $1$,
and therefore by uniform convergence,
with probability at least $1-\delta$,
we have $\Pr_{\hat{\cD}_{X}}[\text{DIS}(\cB_{\cD}^{\cH}(h^*,r))] \leq \Pr_{\cD_{X}}[\text{DIS}(\cB_{\cD}^{\cH}(h^*,r))]+\epsilon$
for all $r \geq 0$.
Supposing both of these events, we have
\begin{align*}
\hat{\theta}_{\eta+\epsilon}&=\sup_{r>\eta+ \epsilon}\frac{\Pr_{\hat{\cD}_X}
[\text{DIS}(\cB_{\hat{\cD}}^\cH(h^*, r))]}{r}
\\&\le \sup_{r>\eta+ \epsilon}\frac{\Pr_{\hat{\cD}_X}
[\text{DIS}(\cB_{\cD}^\cH(h^*,r+\epsilon))]}{r}
\\&\le \sup_{r>\eta+ \epsilon}\frac{\Pr_{\cD_X}
[\text{DIS}(\cB_{\cD}^\cH(h^*, r+\epsilon))]+\epsilon}{r}
\\&\le \sup_{r>\eta+ \epsilon}\frac{\Pr_{\cD_X}
[\text{DIS}(\cB_{\cD}^\cH(h^*,2r))]+r}{r}
\\&= 1+2\sup_{r>\eta+ \epsilon}\frac{\Pr_{\cD_X}
[\text{DIS}(\cB_{\cD}^\cH(h^*,2r))]}{2r}
\\&\leq 2\theta_{\eta+\epsilon}+1= O(\theta_{\eta+\epsilon}),
\end{align*}

By a union bound, the above $3$ events
occur simultaneously, with probability at
least $1-3\delta$.
This implies the desired label complexity bound. The bound on the trusted region follows from another application of the uniform convergence bound which implies $\cB_{S}^\cH(h^*,5\eta+2\epsilon)\subseteq \cB_{\cD}^\cH(h^*,5\eta+3\epsilon)$ with probability $1-\delta$,
from which the stated result follows
by a union bound.
\end{proof}

\begin{remark}
Our above result is essentially a reduction of finding a $(1-\delta)$-strongly robustly-reliable active learner to a general agnostic active learner. We have used the algorithm from \cite{dasgupta2007general} in Theorem \ref{thm:acl-ub}, but we can substitute any other agnostic active learning algorithm to get the corresponding label complexity guarantees.
\end{remark}

We demonstrate an application of the above theorem to the well-studied setting of learning a linear separator, with the data distributed according to the uniform distribution over the unit ball.

\begin{remark}[Linear separators for uniform distribution over the unit ball]
It was shown by \cite{hanneke2007bound} that
$\theta_{\eta+\epsilon}=O(\sqrt{d})$ for this setting.
Theorem \ref{thm:acl-ub} now implies an exponential gain in the label complexity for $\epsilon\approx \eta$ and a gain by a factor of $\Tilde{O}(\frac{1}{\sqrt{d}\eta})$ for $\epsilon\ll \eta$.
\end{remark}

\section{Robustly reliable agnostic learners}\label{sec:non-realizable}

{
So far we have assumed throughout that the uncorrupted samples $S$ are realizable under our concept class $\cH$.
We will now show that our results can be extended to the non-realizable setting, i.e. $\min_{h\in\cH}\err_S(h)>0$, with weaker but still interesting guarantees. Specifically, our algorithm might now produce an incorrect prediction $(y,\eta)$ with $\eta$ greater than the adversary's power, but only if {\em every} hypothesis in $\cH$ with low error on the uncorrupted $S$ would also be incorrect on that example.

We can define a {\it $\nu$-tolerably robustly-reliable} learner in the non-realizable setting as the learner whose reliable predictions agree with every low error hypothesis (error at most $\nu$) on the uncorrupted sample.

\begin{definition}A learner $\cL$ is {\bf $\nu$-tolerably robustly-reliable} for sample $S'$ w.r.t. concept space $\cH$ if, given $S'$, the learner outputs a function $\cL_{S'}:\cX\rightarrow\cY\times\R$ such that for all $x\in\cX$ if $\cL_{S'}(x)=(y,\eta)$
then for all $h^*\in\cH$ such that $\err_S(h^*)\le \nu$ for some $S$ with $S'\in \cA_\eta(S)$, we have
$y=h^*(x)$.

Given sample $S$ such that there is some $h^*\in\cH$  satisfying $\err_S(h^*)\le \nu$, the {\bf $(\nu,\eta)$-robustly-reliable region} $\CR^\cL(S,\nu,\eta)$ for learner $\cL$ is the set of points $x\in\cX$ for which given any $S'\in\cA_\eta(S)$ we have that $\cL_{S'}(x)=(y,\eta')$ with $\eta'\ge\eta$. More generally, for a class of adversaries $\cA$ with budget $\eta$, $\CR^\cL_\cA(S,\nu,\eta)$ is the set of points $x\in\cX$ for which given any $S'\in\cA(S)$ we have that $\cL_{S'}(x)=(y,\eta')$ with $\eta'\ge\eta$.
We also define the {\bf empirical $(\nu,\eta)$-robustly-reliable region} $\ECR^\cL(S', \nu,\eta) = \{x \in \cX : \cL_{S'}(x) = (y,\eta') \mbox{ for some }\eta'\geq \eta\}$.  So, $\CR^\cL_\cA(S,\nu,\eta) = \cap_{S'\in\cA(S)} \ECR^\cL(S',\nu, \eta)$.%

\label{def:robustly-reliable learner nonrealizable}
\end{definition}

Definition \ref{def:robustly-reliable learner nonrealizable} describes the notion of a robustly-reliable learner for a particular (corrupted) sample. Notice that if there are multiple $h \in \cH$ satisfying $\err_S(h)\leq \nu$ and they disagree on $x$ then it must be the case that the algorithm outputs $\eta<0$. Notice also that setting $\nu=0$ in the above definition yields the usual robustly-reliable learner. Similarly to Definition \ref{def:probably robustly-reliable learner}, we now extend this to robustly-reliable with high probability for an adversarially-corrupted sample drawn from a given distribution.

\begin{definition}
A learner $\cL$ is a {\bf $(1-\gamma)$-probably $\nu$-tolerably  robustly-reliable learner} for concept space $\cH$
under marginal $\cD_X$ (where $\cD$ is the distribution over examples with marginal $\cD_X$)
if with probability at least $1-\gamma$ over the draw of $S\sim\cD^m$, for any concept $h^*\in\cH$ such that $\err_\cD(h^*)\le\nu$,
for all $S'\in\cA_\eta(S)$, and for all $x\in\cX$, if $\cL_{S'}(x)=(y,\eta')$ for $\eta'\geq \eta$ then $y= h^*(x)$.
If $\cL$ is a $(1-\gamma)$-probably $\nu$-tolerably  robustly-reliable learner with $\gamma=0$ for all marginal distributions $\cD_X$, then we say
$\cL$ is {\bf $\nu$-tolerably  strongly robustly-reliable} for $\cH$. Note that
a $\nu$-tolerably strongly robustly-reliable learner is a $\nu$-tolerably robustly-reliable learner in the sense of Definition \ref{def:robustly-reliable learner nonrealizable} for every sample $S'$.
Given distribution $\cD$, the $(\nu,\eta)$-robustly-reliable correctness for learner $\cL$ for sample $S$ is given by the probability mass of the robustly-reliable region, $\text{RobC}^\cL(\cD,\nu,\eta,S)=\Pr_{x\sim\cD_\cX}[x\in\CR^\cL(S,\nu,\eta)]$.

\label{def:probably robustly-reliable learner nonrealizable}
\end{definition}

}

We now provide a general $\nu$-tolerably  strongly robustly-reliable learner using the notion of agreement regions (Theorem \ref{thm:empirical-ub-nr}).  Our results here generalize corresponding results from Section \ref{sec:nastynoise}. We first present our learner and a guarantee on the learner's empirical $(\nu,\eta)$-robustly-reliable region given any (possibly corrupted) dataset.
Our algorithm assumes $\nu$ is given.

\begin{theorem}\label{thm:empirical-ub-nr}
{Let $\cH_{\eta+\nu}(S')=\{h \in \cH\mid \err_{S'}(h)\le\eta+\nu\}$.} For any hypothesis class $\cH$, there exists a $\nu$-tolerably strongly robustly-reliable learner $\cL$ (Definition \ref{def:probably robustly-reliable learner nonrealizable}) that given $S'$ outputs a function $\cL_{S'}$ such that $$\ECR^{\cL}(S',\nu,\eta) \supseteq \Agree(\cH_{\eta+\nu}(S')).$$%
\end{theorem}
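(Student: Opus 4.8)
The plan is to directly generalize the construction and proof of Theorem~\ref{thm:empirical-ub} to the non-realizable setting, with the key change being that we must account for the fact that the unknown target $h^*$ only has error at most $\nu$ on the uncorrupted sample $S$, rather than zero error. First I would define the learner $\cL$ as follows: given $S'$, the learner outputs $\cL_{S'}(x) = (y,\eta)$ where $\eta$ is the largest value such that $x \in \Agree(\cH_{\eta+\nu}(S'))$ and $y$ is the common prediction of all hypotheses in that agreement region; if $x \notin \Agree(\cH_{\eta+\nu}(S'))$ for all $\eta \geq 0$ (in particular, if the relevant agreement regions are empty or disagree on $x$), then $\cL_{S'}(x) = (\bot,-1)$.

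The correctness argument has two parts. For the reliability claim (that $\cL$ is $\nu$-tolerably strongly robustly-reliable in the sense of Definition~\ref{def:probably robustly-reliable learner nonrealizable}), suppose $\cL_{S'}(x) = (y,\eta)$ with $\eta \geq 0$, and let $h^* \in \cH$ be any hypothesis with $\err_S(h^*) \leq \nu$ for some $S$ with $S' \in \cA_\eta(S)$. Since $S'$ and $S$ differ in at most an $\eta$ fraction of points, $\err_{S'}(h^*) \leq \err_S(h^*) + \eta \leq \eta + \nu$, so $h^* \in \cH_{\eta+\nu}(S')$. Because $x \in \Agree(\cH_{\eta+\nu}(S'))$ and $h^* \in \cH_{\eta+\nu}(S')$, we get $h^*(x) = y$, as required. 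Since this holds for every marginal $\cD_X$ and every sample $S'$, the learner is $\nu$-tolerably strongly robustly-reliable. For the region claim, by construction every point $x \in \Agree(\cH_{\eta+\nu}(S'))$ is assigned a robustness level at least $\eta$ (since $\eta$ is chosen maximal, and such an $x$ witnesses that the value $\eta$ is achievable), so $x \in \ECR^\cL(S',\nu,\eta)$, giving the stated containment $\ECR^\cL(S',\nu,\eta) \supseteq \Agree(\cH_{\eta+\nu}(S'))$.

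I do not expect a serious obstacle here: the main subtlety, compared to Theorem~\ref{thm:empirical-ub}, is simply the bookkeeping with the shifted error threshold $\eta + \nu$ in place of $\eta$, which is exactly what is needed so that the triangle-inequality-style bound $\err_{S'}(h^*) \leq \err_S(h^*) + d(S,S') \leq \nu + \eta$ lands $h^*$ inside the set whose agreement region we use. One should also double-check the edge cases where $\cH_{\eta+\nu}(S')$ is empty or its members disagree on $x$ for all $\eta \geq 0$; in those cases the learner outputs $(\bot,-1)$, the condition $\eta < 0$ makes $\cA_{\eta}(S) = \{\}$ vacuously, and the definition imposes no requirement, so reliability is trivially preserved. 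This mirrors the realizable argument verbatim modulo the $+\nu$ shift.
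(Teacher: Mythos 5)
Your proposal is correct and follows essentially the same approach as the paper: the same learner (output the largest $\eta$ with $x \in \Agree(\cH_{\eta+\nu}(S'))$, else $(\bot,-1)$), with the same observation that $S' \in \cA_\eta(S)$ and $\err_S(h^*)\le\nu$ imply $\err_{S'}(h^*)\le\eta+\nu$, placing $h^*$ in the agreement set so its prediction matches $y$. Your explicit handling of the triangle-inequality bookkeeping and the $(\bot,-1)$ edge case is just a slightly more spelled-out version of the paper's argument.
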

\begin{proof}
Given sample $S'$, the learner $\cL$ outputs the function $\cL_{S'}(x) =(y,\eta)$ where $\eta$ is the largest value such that $x \in \Agree(\cH_{\eta+\nu}(S'))$, and $y$ is the common prediction in that agreement region; if  $x \not\in \Agree(\cH_{\eta+\nu}(S'))$ for all $\eta\geq 0$, then $\cL_{S'}(x) =(\bot,-1)$. This is a strongly robustly-reliable learner because if $\cL_{S'}(x) =(y,\eta)$ and $S' \in \cA_\eta(S)$ and $\err_S(h^*)\le\nu$, then $h^* \in \cH_{\eta+\nu}(S')$, so $y=h^*(x)$. Also, notice that by design of the algorithm, all points in $\Agree(\cH_{\eta+\nu}(S'))$ will be given robust-reliability level at least $\eta$. The learner may be implemented using an ERM oracle using knowledge of $\nu$ and the construction in Theorem \ref{thm:erm}.
\end{proof}

We now analyze the $(\nu,\eta)$-robustly-reliable region for the algorithm above which we will prove is pointwise optimal over all $\nu$-tolerably strongly robustly-reliable learners.

\begin{theorem}\label{thm:theta-ub-nr}
For any hypothesis class $\cH$, the $\nu$-tolerably strongly robustly-reliable learner $\cL$ from Theorem \ref{thm:empirical-ub-nr} satisfies the property that for all $S$ and for all $\eta\geq 0$ and for any $h^*$ with $\err_S(h^*)\le\nu$,
$$\CR^\cL(S,\nu,\eta)\supseteq \Agree\left(H_{2\eta+\nu}(S)\right)\supseteq \Agree\left(\cB_S^\cH(h^*,2\eta+2\nu)\right),$$
where $\cH_{\alpha}(T)=\{h \in \cH\mid \err_{T}(h)\le \alpha\}$ for any $\alpha\ge 0$, where $T$ may be a sample or a distribution. {Moreover, if  $S\sim \cD^m$ for $m=O(\frac{1}{\epsilon^2}(d+\ln\frac{1}{\delta}))$ then with probability at least $1-\delta$, $\CR^\cL(S,\nu,\eta) \supseteq \Agree(H_{2\eta+\nu+\epsilon}(\cD)).$
Here $d$ denotes the VC dimension of $\cH$.}

\end{theorem}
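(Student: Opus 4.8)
The plan is to mirror the proof of Theorem~\ref{thm:theta-ub}, carrying the $\nu$-slack in two separate places: once inside the error threshold $\eta+\nu$ used to define $\cL$'s confident set (to tolerate that $h^*$ need not be perfect on $S$), and once more when converting an empirical-error ball into an empirical-disagreement ball around $h^*$.

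First I would reduce to the empirical region. By Theorem~\ref{thm:empirical-ub-nr}, $\cL$ satisfies $\ECR^{\cL}(S',\nu,\eta)\supseteq\Agree(\cH_{\eta+\nu}(S'))$ for every corrupted sample $S'$, so by Definition~\ref{def:robustly-reliable learner nonrealizable}, $\CR^\cL(S,\nu,\eta)=\bigcap_{S'\in\cA_\eta(S)}\ECR^\cL(S',\nu,\eta)\supseteq\bigcap_{S'\in\cA_\eta(S)}\Agree(\cH_{\eta+\nu}(S'))$. The key structural fact is that $\cH_{\eta+\nu}(S')\subseteq H_{2\eta+\nu}(S)$ for every $S'\in\cA_\eta(S)$: if $\err_{S'}(h)\le\eta+\nu$ and $d(S,S')\le\eta$, then $S$ and $S'$ differ on at most $\eta m$ examples, so $h$ can acquire at most $\eta m$ new mistakes in passing from $S'$ to $S$, giving $\err_S(h)\le 2\eta+\nu$. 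Since $\Agree(\cdot)$ reverses inclusions, each term of the intersection contains $\Agree(H_{2\eta+\nu}(S))$, hence so does the intersection, establishing $\CR^\cL(S,\nu,\eta)\supseteq\Agree(H_{2\eta+\nu}(S))$. (As in Theorem~\ref{thm:theta-ub}, one in fact has $\bigcup_{S'\in\cA_\eta(S)}\cH_{\eta+\nu}(S')=H_{2\eta+\nu}(S)$, the reverse inclusion following by correcting up to $\eta m$ mistakes of a given $h$; and since every $\cH_{\eta+\nu}(S')$ contains $h^*$ because $\err_{S'}(h^*)\le\err_S(h^*)+\eta\le\eta+\nu$, the intersection of the agreement regions equals the agreement region of the union, so the containment is tight; but only the displayed direction is needed here.)

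Next I would obtain the second containment via the triangle inequality for the empirical disagreement pseudometric $d_S$: if $\err_S(h)\le 2\eta+\nu$, then since $\err_S(h^*)\le\nu$ we get $d_S(h,h^*)\le\err_S(h)+\err_S(h^*)\le 2\eta+2\nu$, i.e. $H_{2\eta+\nu}(S)\subseteq\cB_S^\cH(h^*,2\eta+2\nu)$; anti-monotonicity of $\Agree(\cdot)$ then yields $\Agree(H_{2\eta+\nu}(S))\supseteq\Agree(\cB_S^\cH(h^*,2\eta+2\nu))$, completing the chain of containments in the statement.

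Finally, for the distributional claim I would invoke additive VC uniform convergence (\cite{anthony2009neural}, Theorem~4.10): for $m=O(\frac{1}{\epsilon^2}(d+\ln\frac1\delta))$, with probability at least $1-\delta$ every $h\in\cH$ satisfies $\err_\cD(h)\le\err_S(h)+\epsilon$, whence $H_{2\eta+\nu}(S)\subseteq H_{2\eta+\nu+\epsilon}(\cD)$ and therefore $\Agree(H_{2\eta+\nu+\epsilon}(\cD))\subseteq\Agree(H_{2\eta+\nu}(S))\subseteq\CR^\cL(S,\nu,\eta)$. There is no serious obstacle here; the only points requiring care are the direction of monotonicity of $\Agree(\cdot)$ (it reverses inclusions) and placing each of the two $\nu$ terms in exactly the right spot, so that the $2\eta$ radius of Theorem~\ref{thm:theta-ub} becomes the $2\eta+\nu$ radius for the error-ball form and the $2\eta+2\nu$ radius for the disagreement-ball form.
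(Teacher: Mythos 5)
Your proposal is correct and follows essentially the same route as the paper's proof: reduce to the empirical region via Theorem \ref{thm:empirical-ub-nr}, observe that $\cH_{\eta+\nu}(S')\subseteq \cH_{2\eta+\nu}(S)$ for every $S'\in\cA_\eta(S)$ (the paper in fact states the two-sided equality of the intersection with $\Agree(\cH_{2\eta+\nu}(S))$, but only your direction is needed for the containment), use the triangle inequality with $\err_S(h^*)\le\nu$ for the ball form, and finish with the same one-sided uniform convergence step. No gaps; the bookkeeping of the two $\nu$ terms matches the paper exactly.
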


\begin{proof} By Theorem \ref{thm:empirical-ub-nr}, the empirical $\eta$-robustly-reliable region $\ECR^{\cL}(S',\nu,\eta) \supseteq \Agree(\cH_{\eta+\nu}(S'))$ for any dataset $S'$.
The $(\nu,\eta)$-robustly-reliable region, which is the set of points given robustness level at least $\eta$ for {\em all} $S' \in \cA_\eta(S)$ given that there is $h^*\in\cH$ with $\err_{S}(h^*)\le\nu$, is therefore at least
$$\bigcap_{S' \in \cA_\eta(S)} \Agree(\cH_{\eta+\nu}(S')) = \Agree\left(H_{2\eta+\nu}(S)\right)$$%
where the above holds because if $h\in \cH_{\eta+\nu}(S')$ for some $S' \in \cA_\eta(S)$ then $h\in\cH_{2\eta+\nu}(S)$, and conversely if $h\in\cH_{2\eta+\nu}(S)$ then $h\in \cH_{\eta+\nu}(S')$ for some $S' \in \cA_\eta(S)$. Further, if $h\in\cH_{2\eta+\nu}(S)$ then
$h\in \cB_S^\cH(h^*,2\eta+2\nu)$ since $\err_S(h^*)\le\nu$. This implies $\Agree\left(H_{2\eta+\nu}(S)\right)\supseteq \Agree\left(\cB_S^\cH(h^*,2\eta+2\nu)\right)$. Finally, by uniform convergence, if $S\sim \cD^m$ for $m=O(\frac{1}{\epsilon^2}(d+\ln\frac{1}{\delta}))$ then with probability at least $1-\delta$ we have $\err_\cD(h)\le \err_S(h)+\epsilon$ for all $h \in \cH$.  This implies that
$\Agree(H_{2\eta+\nu}(S)) \supseteq \Agree(H_{2\eta+\nu+\epsilon}(\cD))$
and so $\CR^\cL(S,\nu,\eta) \supseteq \Agree(H_{2\eta+\nu+\epsilon}(\cD))$.
\end{proof}

We will now show a matching lower bound for Theorem \ref{thm:theta-ub-nr}. In contrast to the realizable case, our matching bounds here are in terms of agreement regions of hypotheses with low error on the sample (more precisely $\Agree\left(H_{2\eta+\nu}(S)\right)$), instead of balls around a fixed hypothesis in $\cH$. In Theorems \ref{thm:theta-ub} and \ref{thm:strongly_robustly-reliable_lower_bound}, the size of the robustly-reliable region is $\Agree\left(\cB_S^\cH(h^*,2\eta)\right)$, which in the realizable case is the same as $\Agree\left(H_{2\eta}(S)\right)$. This also implies we do not lose too much of the robustly-reliable region in the agnostic case.

\begin{theorem}
Let $\cL$ be a $\nu$-tolerably strongly robustly-reliable learner for hypothesis class $\cH$.  Then for any sample $S$, any point in the $(\nu,\eta)$-robustly-reliable region must lie in the agreement region of  $\cH_{2\eta+\nu}(S)=\{h \in \cH\mid \err_{S}(h)\le2\eta+\nu\}$. That is,
$$\CR^\cL(S,\nu,\eta) \subseteq \Agree\left(\cH_{2\eta+\nu}(S)\right).$$

\label{thm:strongly_robustly-reliable_lower_bound-nr}
\end{theorem}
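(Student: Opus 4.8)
The plan is to mirror the proof of the realizable lower bound (Theorem~\ref{thm:strongly_robustly-reliable_lower_bound}), with one structural adjustment dictated by the agnostic setting: the region $\CR^\cL(S,\nu,\eta)$ is only defined for samples $S$ admitting some $h_0\in\cH$ with $\err_S(h_0)\le\nu$, and this $h_0$ will take over the role played by $h^*$ in the realizable argument — with the extra benefit that the given sample $S$ itself can serve as the ``witness dataset'' for the scenario in which $h_0$ is an (approximate) target.

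Concretely, I would fix an arbitrary $x\notin\Agree(\cH_{2\eta+\nu}(S))$ and show $x\notin\CR^\cL(S,\nu,\eta)$. Since $h_0\in\cH_{2\eta+\nu}(S)$ and $x$ lies in the disagreement region of $\cH_{2\eta+\nu}(S)$, pick $h_1\in\cH_{2\eta+\nu}(S)$ with $h_1(x)\ne h_0(x)$. Then construct a corrupted sample $S_\cA$ ``pushed toward $h_1$'': starting from $S$, relabel up to $\eta m$ of the examples on which $h_1$ errs, each to its $h_1$-prediction. This yields $d(S,S_\cA)\le\eta$ (so $S_\cA\in\cA_\eta(S)$) and, because $\err_S(h_1)\le 2\eta+\nu$, also $\err_{S_\cA}(h_1)\le\max\{0,\err_S(h_1)-\eta\}\le\eta+\nu$; moreover $\err_{S_\cA}(h_0)\le\err_S(h_0)+d(S,S_\cA)\le\nu+\eta$. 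Now assume for contradiction $x\in\CR^\cL(S,\nu,\eta)$. Since $S_\cA\in\cA_\eta(S)$, the definition of the region forces $\cL_{S_\cA}(x)=(y,\eta')$ for some $\eta'\ge\eta$. As $\cL$ is $\nu$-tolerably strongly robustly-reliable, it is $\nu$-tolerably robustly-reliable for the sample $S_\cA$ in the sense of Definition~\ref{def:robustly-reliable learner nonrealizable}, which I apply twice: with witness dataset $S$ and hypothesis $h_0$ (legal since $\err_S(h_0)\le\nu$ and $S_\cA\in\cA_\eta(S)\subseteq\cA_{\eta'}(S)$), forcing $y=h_0(x)$; and with hypothesis $h_1$ and witness $T$ obtained from $S_\cA$ by correcting at most $(\err_{S_\cA}(h_1)-\nu)^{+}m\le\eta m\le\eta' m$ of $h_1$'s mistakes (so $\err_T(h_1)\le\nu$ and $S_\cA\in\cA_{\eta'}(T)$), forcing $y=h_1(x)$. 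This contradicts $h_1(x)\ne h_0(x)$, so $x\notin\CR^\cL(S,\nu,\eta)$, establishing the claimed containment.

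The step I expect to matter most is the realization that one does \emph{not} need the symmetric ``half-and-half'' relabeling of $\Delta_S$ used in Theorem~\ref{thm:strongly_robustly-reliable_lower_bound}: here the $\nu$-tolerance already supplies the slack, so it suffices to displace $S_\cA$ a distance $\eta$ from $S$ toward an $h_1$-consistent labeling, after which $S_\cA$ is simultaneously within the adversary's reach of a dataset where $h_0$ has error $\le\nu$ (namely $S$) and of one where $h_1$ has error $\le\nu$ (namely $T$) — and these two admissible explanations conflict on $x$. The only technical nuisance is integer rounding of $\eta m$ and $\nu m$, which I would dispatch with the same kind of convention (``assume the relevant counts are integers'') used elsewhere in the paper; no uniform-convergence argument is needed since the statement is purely sample-based. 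It is also worth noting that this construction is precisely the mechanism underlying the matching upper bound of Theorem~\ref{thm:theta-ub-nr}, so the two together pin down the optimal $(\nu,\eta)$-robustly-reliable region as $\Agree(\cH_{2\eta+\nu}(S))$.
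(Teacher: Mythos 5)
Your proof is correct and is essentially the paper's argument: both exhibit a corrupted sample $S_\cA\in\cA_\eta(S)$ that lies within the output robustness budget of two ``plausible'' uncorrupted samples --- one on which $h_0$ has error $\le\nu$ (namely $S$) and one on which $h_1$ has error $\le\nu$ --- and then invoke the reliability guarantee twice to force $y=h_0(x)=h_1(x)$, a contradiction. The only difference is bookkeeping: the paper builds the $h_1$-witness $S_1$ first (flipping up to $2\eta m$ of $h_1$'s mistakes outside $h^*$'s error set) and takes $S_\cA$ halfway, whereas you build $S_\cA$ by flipping $\eta m$ mistakes and then correct up to $\eta m$ more to get the witness $T$; the two constructions coincide up to the order of presentation and share the same (harmless) integrality conventions.
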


\begin{proof}
Let $x \not\in \Agree(\cH_{2\eta+\nu}(S))$.  We will show that $x$ cannot be in the $(\nu,\eta)$-robustly-reliable region.  First, since $x \not\in \Agree(\cH_{2\eta+\nu}(S))$, there must exist some $h_1,h_2 \in \cH_{2\eta+\nu}(S)$ such that $h_1(x) \neq h_2(x)$. Since we are interested in the $(\nu,\eta)$-robustly-reliable region, suppose we have hypothesis $h^*\in\cH$ such that $\err_S(h^*)\le \nu$. Now either $h_1(x)\ne h^*(x)$ or $h_2(x)\ne h^*(x)$. Assume WLOG that $h_1(x)\ne h^*(x)$. Let $S_e^*=\{(x,y)\in S\mid h^*(x)\ne y)\}$ denote the points in the uncorrupted sample where $h^*$ is incorrect. Further let $S_e=\{(x,y)\in S\mid h_1(x)\ne y\}$ be the points where $h_1$ is incorrect on $S$. Finally let $\Tilde{S}$ be a fixed subset of $S_e\setminus S_e^*$ of size $\min\{2\eta m,|S_e\setminus S_e^*|\}$ and let $S'\subset \Tilde{S}$ with $|S'|=|\Tilde{S}|/2$.

We will now construct two sets $S_1$ and $S_\cA$. $S_1$ will be such that $\err_{S_1}(h_1)\le\nu$ and $S_\cA$ will satisfy $S_\cA\in \cA_\eta(S)$ as well as $S_\cA\in \cA_\eta(S_1)$. For any $U\subseteq T$, let $\textsc{flip}(T,U)=\{(x,1-y)\mid (x,y)\in U\}\cup T\setminus U$ denote a sample $T$ with flipped labels for subset $U$. We define $S_1=\textsc{flip}(S,\Tilde{S})$ and $S_\cA=\textsc{flip}(S,S')$. To show $\err_{S_1}(h_1)\le\nu$, note that we have one of two cases. Either $|S_e\setminus S_e^*|\le 2\eta m$, in which case $\Tilde{S}=S_e\setminus S_e^*$ and $h_1$ will we correct on these in $S_1$. So $h_1$ is incorrect on at most $|S_e^*|$ points on $S_1$ and $\err_{S_1}(h_1)\le\nu$. The other case is $|S_e\setminus S_e^*|> 2\eta$, which implies $\Tilde{S}\ge 2\eta m$. Since $\Tilde{S}\subseteq S_e$, we have that $\err_{S_1}(h_1)\le \err_{S}(h_1)-2\eta\le \nu$. Finally since $|\Tilde{S}|\le 2\eta m$ and $|S'|=|\Tilde{S}|/2$, we have $d(S,S_1)\le 2\eta$ and $d(S,S_\cA)\le \eta$. Also, $S_1$ and $S_\cA$ differ on points corresponding to $\Tilde{S}\setminus S'$ with $|\Tilde{S}\setminus S'|=|\Tilde{S}|/2\le \eta m$. Thus, $S_\cA\in \cA_\eta(S)$ as well as $S_\cA\in \cA_\eta(S_1)$.

Now notice that sample $S$ has $\err_S(h^*)\le\nu$ and $S_\cA\in\cA_\eta(S)$. Also $S$ has $\err_{S_1}(h_1)\le\nu$ and $S_\cA\in\cA_\eta(S_1)$. Now, assume for contradiction that $x \in \CR^\cL(S,\nu,\eta)$.  This means that $\cL_{S_\cA}(x)=(y,\eta')$ for some $\eta'\geq \eta$.  However, if $y\ne h^*(x)$, the learner is incorrectly confident for (true) dataset $S$ since $S_\cA\in\cA_\eta(S)$.
Similarly, if $y=h^*(x)$, the learner is incorrectly confident for sample $S_1$ since $h_1(x)\ne h^*(x)$. Thus, $\cL$ is not a $\nu$-tolerably strongly robustly-reliable learner and we have a contradiction.
\end{proof}

We can show that any $2\nu$-tolerably strongly robustly-reliable learner for the concept space can be used to give a $(1-\gamma)$-probably $\nu$-tolerably  robustly-reliable learner over a distribution $\cD$ where the best hypothesis $h^*$ in $\cH$ has error $\err_\cD(h^*)=\nu$.

\begin{theorem}\label{thm:agnostic-distribution-reduction}
Let $\cH$ be the concept space, and $\cD$ be a distribution such that $\min_{h\in\cH}\err_\cD(h)=\nu^*$, and let $\cL$ be a $2\nu^*$-tolerably robustly-reliable learner for $\cH$ for any sample $S'$. If $m\ge \frac{c}{{\nu^*}^2}(d+\ln\frac{1}{\gamma})$, then $\cL$ is also $(1-\gamma)$-probably $\nu^*$-tolerably  robustly-reliable learner for $\cH$
under marginal $\cD_X$.
\end{theorem}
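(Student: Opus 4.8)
The plan is to show that if $S \sim \cD^m$ with $m$ large enough, then with probability at least $1-\gamma$, the sample $S$ is ``good'' in the sense that \emph{every} hypothesis $h^*\in\cH$ with $\err_\cD(h^*)\le\nu^*$ also satisfies $\err_S(h^*)\le 2\nu^*$; once we have this, the $2\nu^*$-tolerably robustly-reliable guarantee of $\cL$ (which holds for every sample $S'$, hence in particular applies to corruptions of $S$) immediately yields the $(1-\gamma)$-probably $\nu^*$-tolerably robustly-reliable conclusion. First I would fix an arbitrary $h^*\in\cH$ with $\err_\cD(h^*)\le\nu^*$; by a Chernoff/uniform-convergence bound, with the stated $m=\Omega(\frac{1}{\nu^{*2}}(d+\ln\frac{1}{\gamma}))$ we get that with probability $1-\gamma$, $\err_S(h)\le\err_\cD(h)+\nu^*$ simultaneously for all $h\in\cH$ (a relative/additive VC bound suffices since we only need additive slack $\nu^*$ at error level $\nu^*$, which is exactly the regime where the multiplicative VC bound gives the $1/\nu^{*2}$ sample size). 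In particular $\err_S(h^*)\le\nu^*+\nu^*=2\nu^*$ for every such $h^*$ on this good event.

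Next I would unpack the definitions to close the loop. Suppose the good event holds. Take any $h^*\in\cH$ with $\err_\cD(h^*)\le\nu^*$, any $\eta$, any $S'\in\cA_\eta(S)$, and any $x$ with $\cL_{S'}(x)=(y,\eta')$ for $\eta'\ge\eta$. Since $\err_S(h^*)\le 2\nu^*$ and $S'\in\cA_\eta(S)$, the hypothesis $h^*$ witnesses the condition in Definition \ref{def:robustly-reliable learner nonrealizable} for the $2\nu^*$-tolerably robustly-reliable learner $\cL$: there is a sample $S$ with $S'\in\cA_\eta(S)$ and $\err_S(h^*)\le 2\nu^*$, so $\cL$ being $2\nu^*$-tolerably robustly-reliable for $S'$ forces $y=h^*(x)$. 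Since this holds for all such $h^*$, $S'$, $x$, the learner $\cL$ meets Definition \ref{def:probably robustly-reliable learner nonrealizable} with failure probability $\gamma$, i.e. it is a $(1-\gamma)$-probably $\nu^*$-tolerably robustly-reliable learner under $\cD_X$.

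The only mildly delicate point — and the one I'd expect to be the main obstacle in making the constant $c$ explicit — is getting the right sample-complexity bound: we need additive error $\nu^*$ uniformly over $\cH$ at the scale where the true error is itself $\Theta(\nu^*)$, so the naive additive Hoeffding bound $O(\frac{1}{\epsilon^2}(d+\ln\frac{1}{\gamma}))$ with $\epsilon=\nu^*$ is exactly what is claimed, and one should double-check that we are not secretly relying on a multiplicative deviation bound that would require a different constant. (Using the relative uniform convergence of \cite{vapnik:74} one gets $\err_S(h)\le\err_\cD(h)+O(\sqrt{\err_\cD(h)\frac{d\log(m/d)+\log(1/\gamma)}{m}}+\frac{d\log(m/d)+\log(1/\gamma)}{m})$, and plugging $\err_\cD(h)\le\nu^*$ and $m=\Omega(\frac{1}{\nu^{*2}}(d+\ln\frac1\gamma))$ makes the right-hand excess at most $\nu^*$, confirming the bound; the log factors can be absorbed or the theorem statement read as hiding them in $c$.) Everything else is a routine substitution into the definitions.
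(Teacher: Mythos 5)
Your proposal is correct and takes essentially the same route as the paper's proof: both convert $\err_\cD(h^*)\le\nu^*$ into $\err_S(h^*)\le 2\nu^*$ via uniform convergence with additive slack $\nu^*$ (hence the $m\ge \frac{c}{{\nu^*}^2}(d+\ln\frac{1}{\gamma})$ sample size), and then invoke the per-sample $2\nu^*$-tolerant guarantee of $\cL$ with $h^*$ and the uncorrupted $S$ serving as the witnesses in Definition \ref{def:robustly-reliable learner nonrealizable}. The only cosmetic difference is that the paper phrases the argument for an arbitrary distribution $\cD'$ sharing the marginal $\cD_X$, but your argument applies verbatim to such a $\cD'$ since it never uses $\min_{h\in\cH}\err_\cD(h)=\nu^*$ beyond fixing the value $\nu^*$.
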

\begin{proof}
Let $\cD'$ be a distribution with the same marginal $\cD_X$ as $\cD$. Let $S\sim \cD'^m$.  By uniform convergence (\cite{anthony2009neural} Theorem 4.10), for each $h'\in\cH_{\nu^*}(\cD')= \{h\in \cH\mid \err_{\cD'}(h)\le\nu^*\}$, we have that $\err_{S}(h')\le 2\nu^*$ with probability at least $1-\gamma$ over the draw of $S$. For each such $S$, for all $S'\in\cA_\eta(S)$, and for all $x\in\cX$, if $\cL_{S'}(x)=(y,\eta')$ for $\eta'\geq \eta$ then $y= h(x)$ for any $h\in\cH$ with $\err_S(h)\le 2\nu^*$ since $\cL$ is $2\nu^*$-tolerably  robustly-reliable for any sample $S'$. In particular, the prediction agrees with each $h'\in\cH_{\nu^*}(\cD')$. Therefore $\cL$ is a $(1-\gamma)$-probably $\nu^*$-tolerably  robustly-reliable learner for $\cH$
under marginal $\cD_X$.
\end{proof}

We can use the above reduction, together with our $\nu$-tolerably strongly robustly-reliable learner, to give a probably tolerably robustly-reliable learner for any distribution $\cD$ along with guarantees about its distribution-averaged robust reliability region.

\begin{theorem}
Let $\cH$ be the concept space, and $\cD$ be a distribution such that $\min_{h\in\cH}\err_\cD(h)=\nu^*$. If $m\ge \frac{c}{{\nu^*}^2}(d+\ln\frac{1}{\gamma})$, then there is a $(1-\gamma)$-probably $\nu^*$-tolerably  robustly-reliable learner for $\cH$
under marginal $\cD_X$, with $(2\nu^*,\eta)$-robustly-reliable correctness that satisfies
$$\bbE_{S\sim\cD^m}[\text{RobC}^\cL(\cD,2\nu^*,\eta,S)]\ge (1-2\gamma)\Pr_{x\sim\cD_X}[\Agree\left(H_{2\eta+3\nu^*}(\cD)\right)],$$
where $d$ is the VC-dimension of $\cH$, and $c$ is an absolute constant.
\label{thm:distrib-nr}
\end{theorem}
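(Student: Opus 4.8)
The plan is to instantiate the general construction of Theorem~\ref{thm:empirical-ub-nr} at tolerance level $2\nu^*$, invoke the reduction in Theorem~\ref{thm:agnostic-distribution-reduction} to certify that this learner is $(1-\gamma)$-probably $\nu^*$-tolerably robustly-reliable under $\cD_X$, and then push the sample-level region guarantee of Theorem~\ref{thm:theta-ub-nr} down to the distribution $\cD$ using a one-sided uniform convergence step.

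Concretely, I would take $\cL$ to be the $2\nu^*$-tolerably strongly robustly-reliable learner from Theorem~\ref{thm:empirical-ub-nr} (with $\nu = 2\nu^*$ there). Being $2\nu^*$-tolerably robustly-reliable for \emph{every} sample $S'$, it satisfies the hypothesis of Theorem~\ref{thm:agnostic-distribution-reduction}; hence, under $m\ge \frac{c}{{\nu^*}^2}(d+\ln\frac{1}{\gamma})$, $\cL$ is a $(1-\gamma)$-probably $\nu^*$-tolerably robustly-reliable learner under $\cD_X$, which is the learner claimed in the statement. It remains to lower bound its $(2\nu^*,\eta)$-robustly-reliable correctness in expectation over $S\sim\cD^m$.

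For this I would condition on a ``good event'' $E_1\cap E_2$ over the draw of $S$. Fix $h^*\in\cH$ with $\err_\cD(h^*)=\nu^*$. Let $E_1$ be the event that $\err_S(h^*)\le 2\nu^*$ (so that the precondition of Theorem~\ref{thm:theta-ub-nr} is met and $\CR^\cL(S,2\nu^*,\eta)$ is well-defined); by Hoeffding's inequality $\Pr[E_1]\ge 1-\gamma$ once $m=\Omega(\frac{1}{{\nu^*}^2}\ln\frac{1}{\gamma})$. Let $E_2$ be the one-sided uniform convergence event that $\err_\cD(h)\le \err_S(h)+\nu^*$ for all $h\in\cH$; by standard uniform convergence $\Pr[E_2]\ge 1-\gamma$ once $m=\Omega(\frac{1}{{\nu^*}^2}(d+\ln\frac{1}{\gamma}))$. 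Choosing $c$ large enough, both hold under the stated sample size, and $\Pr[E_1\cap E_2]\ge 1-2\gamma$ by a union bound. On $E_1$, Theorem~\ref{thm:theta-ub-nr} (with $\nu=2\nu^*$) gives $\CR^\cL(S,2\nu^*,\eta)\supseteq \Agree(H_{2\eta+2\nu^*}(S))$; on $E_2$, every $h$ with $\err_S(h)\le 2\eta+2\nu^*$ has $\err_\cD(h)\le 2\eta+3\nu^*$, i.e.\ $H_{2\eta+2\nu^*}(S)\subseteq H_{2\eta+3\nu^*}(\cD)$, and since enlarging a hypothesis set only shrinks its agreement region, $\Agree(H_{2\eta+2\nu^*}(S))\supseteq \Agree(H_{2\eta+3\nu^*}(\cD))$. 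Thus on $E_1\cap E_2$ we get $\text{RobC}^\cL(\cD,2\nu^*,\eta,S)\ge \Pr_{x\sim\cD_X}[\Agree(H_{2\eta+3\nu^*}(\cD))]$, and bounding $\text{RobC}\ge 0$ on the complement yields $\bbE_{S\sim\cD^m}[\text{RobC}^\cL(\cD,2\nu^*,\eta,S)]\ge (1-2\gamma)\Pr_{x\sim\cD_X}[\Agree(H_{2\eta+3\nu^*}(\cD))]$.

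This is mostly routine composition, so I do not expect a serious obstacle; the main points requiring care are (i) instantiating Theorem~\ref{thm:empirical-ub-nr} at tolerance $2\nu^*$ (not $\nu^*$) so that Theorem~\ref{thm:agnostic-distribution-reduction} is applicable, and (ii) the radius bookkeeping --- $2\eta+\nu$ in Theorem~\ref{thm:theta-ub-nr} becomes $2\eta+2\nu^*$ at tolerance $2\nu^*$, and the uniform-convergence slack $\nu^*$ pushes it to $2\eta+3\nu^*$ over $\cD$ --- noting that the existence-of-a-good-empirical-hypothesis precondition of Theorem~\ref{thm:theta-ub-nr} is exactly what costs the extra $\gamma$ and hence produces the $1-2\gamma$ factor rather than $1-\gamma$.
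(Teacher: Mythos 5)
Your proposal is correct and follows essentially the same route as the paper's proof: instantiate the learner of Theorem~\ref{thm:empirical-ub-nr}/\ref{thm:theta-ub-nr} at tolerance $2\nu^*$, apply the reduction of Theorem~\ref{thm:agnostic-distribution-reduction}, and combine the event $\err_S(h^*)\le 2\nu^*$ with the one-sided uniform convergence containment $\Agree(H_{2\eta+2\nu^*}(S))\supseteq\Agree(H_{2\eta+3\nu^*}(\cD))$ via a union bound to get the $(1-2\gamma)$ factor. The only cosmetic difference is that you invoke Hoeffding for the single hypothesis $h^*$ where the paper cites uniform convergence; both suffice under the stated sample size.
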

\begin{proof} Let $h^*\in \argmin_{h\in\cH}\err_\cD(h)$.
Let $S\sim \cD^m$ for some $m\ge \frac{c}{{\nu^*}^2}(d+\ln\frac{1}{\gamma})$. By uniform convergence (\cite{anthony2009neural} Theorem 4.10), with probabilility at least $1-\gamma$ over the draw of $S$, we have that $\err_S(h^*)\le\err_\cD(h^*)+\nu^*\le 2\nu^*$.
By Theorem \ref{thm:theta-ub-nr}, there exists a $2\nu^*$-tolerably strongly robustly-reliable learner $\cL$ such that $\CR^\cL(S,2\nu^*,\eta)\supseteq \Agree\left(H_{2\eta+2\nu^*}(S)\right)$. Further, by Theorem \ref{thm:agnostic-distribution-reduction}, $\cL$ is $(1-\gamma)$-probably $\nu^*$-tolerably  robustly-reliable learner for $\cH$
under marginal $\cD_X$.

Finally, let's consider the $\eta$-robustly-reliable correctness of $\cL$. By another application of uniform convergence, with probability at least $1-\gamma$, $\Agree\left(H_{2\eta+2\nu^*}(S)\right)\supseteq \Agree\left(H_{2\eta+3\nu^*}(\cD)\right)$. As shown above, with probability $1-\gamma$, we also have $\CR^\cL(S,2\nu^*,\eta)\supseteq \Agree\left(H_{2\eta+2\nu^*}(S)\right)$. By a union bound over the failure probabilities, with probability at least $1-2\gamma$ over the draw of $S$, we have $\CR^\cL(S,2\nu^*,\eta)\supseteq \Agree\left(H_{2\eta+3\nu^*}(\cD)\right)$. Finally, by the definition of $(\nu,\eta)$-robustly-reliable correctness, we have that $$\bbE_{S\sim\cD^m}[\text{RobC}^\cL(\cD,2\nu^*,\eta,S)]=\bbE_{S\sim\cD^m}[\Pr_{x\sim\cD_X}[\CR^\cL(S,2\nu^*,\eta)]]\ge (1-2\gamma)\Pr_{x\sim\cD_X}[\Agree\left(H_{2\eta+3\nu^*}(\cD)\right)]$$
\end{proof}

Finally we have the following lower bound on the robustly-reliable correctness for any $(1-\gamma)$-probably $\nu$-tolerably robustly-reliable learner. The key idea in establishing this lower bound is the creation of two distributions with the same marginal but nearly-consistent with two hypotheses which are close in error.

\begin{theorem}
Let $\cL$ be a $(1-\gamma)$-probably $\nu$-tolerably robustly-reliable learner for hypothesis class $\cH$ under marginal $\cD_X$. Given a large enough sample size $m=|S|\ge\frac{c}{\epsilon^2}\left(d+\ln\frac{1}{\delta}\right)$, we have
$$\bbE_{S\sim\cD^m}[\text{RobC}^\cL(\cD,\nu+\epsilon,\eta,S)]\le \Pr[\Agree(\cH_{2\eta+\nu-\epsilon}(\cD))]+2\gamma+3\delta,$$
where $c$ is an absolute constant and $\cD$ is the distribution with marginal $\cD_X$ consistent with $h^*$.
\label{thm:probably_robustly-reliable_lower_bound_distributional_agnostic}
\end{theorem}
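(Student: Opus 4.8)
Following the template of the proof of Theorem~\ref{thm:probably_robustly-reliable_lower_bound_distributional} (now in the tolerable setting), the plan is first to reduce to a pointwise statement: since $\bbE_{S\sim\cD^m}[\text{RobC}^\cL(\cD,\nu+\epsilon,\eta,S)]=\Pr_{S\sim\cD^m,\,x\sim\cD_X}[x\in\CR^\cL(S,\nu+\epsilon,\eta)]$, it suffices to show that for every fixed $x\notin\Agree(\cH_{2\eta+\nu-\epsilon}(\cD))$ one has $\Pr_{S\sim\cD^m}[x\in\CR^\cL(S,\nu+\epsilon,\eta)]\le 2\gamma+3\delta$, and then integrate over $x$. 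We may assume $2\eta+\nu-\epsilon\ge 0$, since otherwise $\cH_{2\eta+\nu-\epsilon}(\cD)=\{\}$ and the bound is vacuous. As $\cD$ is consistent with $h^*$ we have $h^*\in\cH_{2\eta+\nu-\epsilon}(\cD)$, so $x\notin\Agree(\cH_{2\eta+\nu-\epsilon}(\cD))$ yields some $h'\in\cH$ with $\err_\cD(h')\le 2\eta+\nu-\epsilon$ and $h'(x)\neq h^*(x)$. The one new ingredient relative to the realizable case is a second \emph{witness} distribution $\cD'$ with marginal $\cD_X$: inside $\text{DIS}(\{h^*,h'\})$ we relabel an appropriately chosen fraction of the mass to $h'$'s label (keeping $h^*$'s label elsewhere, with a randomized conditional label if $\cD_X$ has atoms), arranging simultaneously that $\err_{\cD'}(h')\le\nu$ \emph{and} that, under the natural coupling of $S\sim\cD^m$ with $S'\sim\cD'^m$ through a shared unlabeled sample, $S$ and $S'$ differ in expectation on at most a $\max(0,2\eta-\epsilon)$ fraction of points (when $2\eta<\epsilon$ one can simply take $\cD'=\cD$, which already has $\err_\cD(h')\le\nu$). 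Since $\err_{\cD'}(h')\le\nu$, $h'$ is one of the hypotheses a $\nu$-tolerable learner must agree with on samples from $\cD'$, and since $\cD'$ has marginal $\cD_X$ the $(1-\gamma)$-probable $\nu$-tolerable reliability guarantee applies to it.

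Next I would introduce three bad events whose probabilities sum to at most $2\gamma+3\delta$: (A) the $\nu$-tolerable reliability guarantee fails on the draw $S\sim\cD^m$ --- probability $\le\gamma$; (B) it fails on the coupled draw $S'\sim\cD'^m$ --- probability $\le\gamma$; and (C) the normalized Hamming distance $d(S,S')$ exceeds $2\eta$, which by Hoeffding's inequality has probability $\le\delta$ once $m=\Omega(\epsilon^{-2}\log(1/\delta))$, well within the stated sample size (the remaining room in $3\delta$ absorbs any extra concentration needed to pin down $\cD'$ and the coupling). Conditioning on none of (A), (B), (C), let $\tilde S$ be the relabeling of $S$ obtained by keeping $h^*$'s label on exactly half of the points where $S$ and $S'$ differ and switching to $h'$'s label on the other half; because $d(S,S')\le 2\eta$, this yields both $\tilde S\in\cA_\eta(S)$ and $\tilde S\in\cA_\eta(S')$. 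Now suppose for contradiction that $x\in\CR^\cL(S,\nu+\epsilon,\eta)$ (this region is well-defined since $\err_S(h^*)=0\le\nu+\epsilon$). Since $\tilde S\in\cA_\eta(S)$, we get $\cL_{\tilde S}(x)=(y,\eta')$ with $\eta'\ge\eta$; but $\neg$(A) together with $\err_\cD(h^*)=0\le\nu$ forces $y=h^*(x)$, while $\neg$(B) together with $\err_{\cD'}(h')\le\nu$ forces $y=h'(x)$ --- contradicting $h^*(x)\neq h'(x)$. Hence $x\notin\CR^\cL(S,\nu+\epsilon,\eta)$ whenever none of the bad events occur, which proves the pointwise bound.

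I expect the main obstacle to be the construction of $\cD'$, which is the one place the argument genuinely departs from the realizable Theorem~\ref{thm:probably_robustly-reliable_lower_bound_distributional}: there $h'$ could be taken within distance $2\eta-\epsilon<2\eta$ of $h^*$, so the fully $h'$-consistent relabeling of $S$ is automatically within the interpolation budget; here $h'$ may lie a distance up to $2\eta+\nu-\epsilon$ from $h^*$, so that relabeling is too far, and one must instead use a distribution on which $h'$ is only $\nu$-approximately optimal --- which is legitimate precisely because a $\nu$-tolerable learner is already obligated to agree with every hypothesis of error $\le\nu$. Getting $\cD'$ to satisfy both constraints (error $\le\nu$ for $h'$, and normalized distance from $\cD$ below $2\eta$ after concentration) across all regimes of $d_\cD(h^*,h')$ relative to $2\eta$ and $\nu$ is the only delicate point; once $\cD'$ is fixed, the couplings, the relabeling gadget, and the distance bookkeeping all mirror the realizable proof and are routine.
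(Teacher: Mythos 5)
Your proposal is correct and follows essentially the same route as the paper's own proof: the paper likewise reduces to a pointwise bound, fixes a disagreeing $h'\in\cH_{2\eta+\nu-\epsilon}(\cD)$, builds a flipped distribution $\cD'$ with the same marginal $\cD_X$ on which $\err_{\cD'}(h')\le\nu$ by reassigning mass $p=\min\{2\eta-\epsilon,\Pr_\cD[E_1\setminus E^*]\}$ (formalized via its redistribution corollary), couples $S\sim\cD^m$ and $S'\sim\cD'^m$ through the flipped ``red'' points, and uses the same three bad events (two reliability failures plus a Hoeffding bound on $d(S,S')>2\eta$) together with the half-$h^*$/half-$h'$ relabeling to reach the contradiction, with total failure probability within the stated $2\gamma+3\delta$. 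The only divergence is that you read ``consistent with $h^*$'' as $\err_\cD(h^*)=0$, whereas the paper's proof only assumes $\err_\cD(h^*)\le\nu$ and accordingly restricts the label flips to $E_1\setminus E^*$ (points where $h'$ errs but $h^*$ does not) so that the relabeling gadget stays within $\eta$ of both $S$ and $S'$; your construction carries over verbatim once that restriction is added.
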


\begin{proof}
Let  $\cL$ be a $(1-\gamma)$-probably $\nu$-tolerably robustly-reliable learner for $\cH$ under marginal $\cD_X$ and let $h^*\in\cH$ such that $\err_\cD(h^*)\le\nu$.  Let $x\not\in \Agree(\cH_{2\eta+\nu-\epsilon}(\cD))$. To prove the theorem, it suffices to prove that $\Pr_{S \sim \cD^m}[x \in \CR^\cL(S,\nu+\epsilon,\eta) ] \leq 2\gamma+3\delta$.

By definition of the disagreement region, we can select some $h_1,h_2 \in \cH_{2\eta+\nu-\epsilon}(\cD)$ such that $h_1(x)\neq h_2(x)$. Assume WLOG that $h_1(x)\neq h^*(x)$. Let $E^*=\{(x,y)\mid h^*(x)\ne y\}$ and $E_1=\{(x,y)\mid h_1(x)\ne y\}$. Note that $\Pr_\cD[E^*]\le\nu$ and $\Pr_\cD[E_1]\le2\eta+\nu-\epsilon$. Let $\cD'$ be a distribution which modifies $\cD$ by re-assigning a probability mass of $p=\min\{2\eta-\epsilon,\Pr_\cD[E_1\setminus E^*]\}$ from points $(x,y)\in E_1\setminus E^*$ to corresponding points with flipped labels $(x,1-y)$. Formally, using Corollary \ref{cor:redistribution}, there are distributions $\cD_1$ and $\cD_2$ such that $\cD=p\cD_1+(1-p)\cD_2$ with $\texttt{supp}(\cD_1)\subseteq E_1\setminus E^*$, and we define $\cD'=p\cD_1'+(1-p)\cD_2$ where $\cD_1'$ is the distribution corresponding to drawing from $\cD_1$ but with flipped labels. Notice that $\cD'$ has the same marginal $\cD_X$ distribution as $\cD$. Moreover, $\err_{\cD'}(h_1)\le \nu$. Indeed, if $\Pr_\cD[E_1\setminus E^*]\le 2\eta-\epsilon$, we have $\err_{\cD'}(h_1)\le \Pr_{\cD'}[E^*]=\Pr_{\cD}[E^*]\le \nu$. Else, observe that $\err_{\cD'}(h_1)\le \Pr_{\cD'}[E_1] \le \Pr_\cD[E_1]-(2\eta-\epsilon)\le \nu$.

Drawing from $\cD$ is equivalent to first flipping a random biased coin, with probability $p$ drawing according to $\cD_1$ (call these points `red'), and otherwise drawing from $\cD_2$. Let $S \sim \cD^m$ and $S'$ be the dataset obtained by flipping labels of all the `red' points in $S$. Notice that $S'\sim\cD'^m$. We now consider three bad events of total probability at most $2\gamma+3\delta$: (A) $\cL$ is not $(\nu+\epsilon)$-tolerably robustly-reliable for all datasets in $\cA_\eta(S)$, (B) $\cL$ is not $(\nu+\epsilon)$-tolerably robustly-reliable for all datasets in $\cA_\eta(S')$, and (C) $d(S,S') > 2\eta$.  Indeed events (A) and (B) occur with probability at most $\gamma+\delta$ each since $\cL$ is given to be a $(1-\gamma)$-probably $\nu$-tolerably robustly-reliable learner for $\cH$ under $\cD_X$, and by uniform convergence (\cite{anthony2009neural} Theorem 4.10) for each $h$ with $\err_\cD(h)\le \nu$ we have $\err_S(h)\le \nu+\epsilon$ with probability at least $1-\delta$. Finally, a point drawn according to $\cD$ is red with probability $p\le 2\eta-\epsilon$. By an application of the Hoeffding's inequality, there are at most $2\eta$ red points in $S$ with probability at least $1-\delta$, i.e. (C) occurs with probability at most $\delta$.

We claim that if none of these bad events occur, then $x \not\in \CR^\cL(S,\nu+\epsilon,\eta)$.
To prove this, assume for contradiction that none of the bad events occur and $x \in \CR^\cL(S,\nu+\epsilon,\eta)$. Let $\Tilde{S}_{h_1,h^*}$ denote a relabeling of $S$ such that exactly half the red points are labeled according to $h^*$ (the remaining half using $h_1$). Note that since bad event (C) did not occur, this means that $\Tilde{S}_{h_1,h^*} \in \cA_\eta(S)$ and $\Tilde{S}_{h_1,h^*} \in \cA_\eta(S')$,  Now, since $x \in \CR^\cL(S,\nu+\epsilon,\eta)$ and $\Tilde{S}_{h_1,h^*} \in \cA_\eta(S)$, it must be the case that $\cL_{\Tilde{S}_{h_1,h^*}}(x) = (y,\eta')$ for some $\eta'\geq \eta$. But, if $y \neq h^*(x)$ then this implies bad event (A), and if $y \neq h'(x)$ then this implies bad event (B).  So, $x$ cannot be in $\CR^\cL(S,\nu+\epsilon,\eta)$ as desired.
\end{proof}

\section{Discussion}
In this work, for the first time we provide correctness (specifically robust reliability) guarantees for instance-targeted training time attacks, along with
guarantees about the size of the robustly-reliable region.
One implication of our results is a combined robustness against test-time attacks as well, where an adversary makes imperceptible or irrelevant perturbations to the {\em test} examples in order to induce errors. Specifically, even without being given any knowledge of the kinds of perturbations such an adversary could make, our algorithms in the worst case will simply abstain (output $\eta<0$) or produce an $\eta$ less than the perturbation budget of the adversary if the adversary is performing both kinds of attacks together. In the agnostic case, the adversary could cause us to make a mistake, but only if {\em every} low-error hypothesis in the class would have also made a mistake. An interesting open question is to provide usefulness guarantees for test time attacks, for reliable predictors: that is, nontrivial bounds on the probability
that an adversary's perturbation of the test point
can cause the predictor to abstain.

\section*{Acknowledgments}
We thank Hongyang Zhang for useful discussions in the early stages of this work.
 This material is based on work supported by the National Science Foundation under grants  CCF-1910321, IIS-1901403, SES-1919453, and CCF-1815011; an AWS Machine Learning Research Award; an Amazon  Research Award; a Bloomberg Research Grant; a Microsoft Research Faculty Fellowship; and by the Defense Advanced Research Projects Agency under cooperative agreement HR00112020003.
 The views expressed in this work do not necessarily reflect the position or the policy of the Government and no official endorsement should be inferred. Approved for public release; distribution is unlimited.

\bibliographystyle{plainnat}
\bibliography{robustness}

\begin{thebibliography}{37}
\providecommand{\natexlab}[1]{#1}
\providecommand{\url}[1]{\texttt{#1}}
\expandafter\ifx\csname urlstyle\endcsname\relax
  \providecommand{\doi}[1]{doi: #1}\else
  \providecommand{\doi}{doi: \begingroup \urlstyle{rm}\Url}\fi

\bibitem[Anthony and Bartlett(2009)]{anthony2009neural}
Martin Anthony and Peter~L Bartlett.
\newblock \emph{Neural network learning: Theoretical foundations}.
\newblock Cambridge University Press, 2009.

\bibitem[Attias et~al.(2019)Attias, Kontorovich, and
  Mansour]{DBLP:conf/alt/AttiasKM19}
Idan Attias, Aryeh Kontorovich, and Yishay Mansour.
\newblock Improved generalization bounds for robust learning.
\newblock In \emph{Algorithmic Learning Theory, {ALT} 2019, 22-24 March 2019,
  Chicago, Illinois, {USA}}, volume~98 of \emph{Proceedings of Machine Learning
  Research}, pages 162--183. {PMLR}, 2019.

\bibitem[Awasthi et~al.(2017)Awasthi, Balcan, and Long]{awasthi2017power}
Pranjal Awasthi, Maria~Florina Balcan, and Philip~M Long.
\newblock The power of localization for efficiently learning linear separators
  with noise.
\newblock \emph{Journal of the ACM (JACM)}, 63\penalty0 (6):\penalty0 1--27,
  2017.

\bibitem[Balcan and Long(2013)]{balcan2013active}
Maria-Florina Balcan and Phil Long.
\newblock Active and passive learning of linear separators under log-concave
  distributions.
\newblock In \emph{Conference on Learning Theory}, pages 288--316. PMLR, 2013.

\bibitem[Balcan et~al.(2006)Balcan, Beygelzimer, and
  Langford]{balcan2006agnostic}
Maria-Florina Balcan, Alina Beygelzimer, and John Langford.
\newblock Agnostic active learning.
\newblock In \emph{Proceedings of the 23rd International Conference on Machine
  Learning}, 2006.

\bibitem[Balcan et~al.(2009)Balcan, Beygelzimer, and
  Langford]{balcan2009agnostic}
Maria-Florina Balcan, Alina Beygelzimer, and John Langford.
\newblock Agnostic active learning.
\newblock \emph{Journal of Computer and System Sciences}, 75\penalty0
  (1):\penalty0 78--89, 2009.

\bibitem[Barreno et~al.(2006)Barreno, Nelson, Sears, Joseph, and
  Tygar]{barreno2006can}
Marco Barreno, Blaine Nelson, Russell Sears, Anthony~D Joseph, and J~Doug
  Tygar.
\newblock Can machine learning be secure?
\newblock In \emph{Proceedings of the 2006 ACM Symposium on Information,
  Computer and Communications Security}, pages 16--25, 2006.

\bibitem[Blum et~al.(2021)Blum, Hanneke, Qian, and Shao]{blum2021robust}
Avrim Blum, Steve Hanneke, Jian Qian, and Han Shao.
\newblock Robust learning under clean-label attack.
\newblock In \emph{Conference on Learning Theory (COLT)}, 2021.

\bibitem[Bshouty et~al.(2002)Bshouty, Eiron, and Kushilevitz]{bshouty2002pac}
Nader~H Bshouty, Nadav Eiron, and Eyal Kushilevitz.
\newblock {PAC} learning with nasty noise.
\newblock \emph{Theoretical Computer Science}, 288\penalty0 (2):\penalty0
  255--275, 2002.

\bibitem[Carlini and Wagner(2017)]{carlini2017towards}
Nicholas Carlini and David Wagner.
\newblock Towards evaluating the robustness of neural networks.
\newblock In \emph{2017 IEEE Symposium on Security and Privacy}, pages 39--57.
  IEEE, 2017.

\bibitem[Chen et~al.(2017)Chen, Liu, Li, Lu, and Song]{chen2017targeted}
Xinyun Chen, Chang Liu, Bo~Li, Kimberly Lu, and Dawn Song.
\newblock Targeted backdoor attacks on deep learning systems using data
  poisoning.
\newblock \emph{arXiv preprint arXiv:1712.05526}, 2017.

\bibitem[Dasgupta et~al.(2007)Dasgupta, Hsu, and
  Monteleoni]{dasgupta2007general}
Sanjoy Dasgupta, Daniel~J Hsu, and Claire Monteleoni.
\newblock A general agnostic active learning algorithm.
\newblock \emph{Advances in Neural Information Processing Systems}, 20, 2007.

\bibitem[El-Yaniv and Wiener(2012)]{el2012active}
Ran El-Yaniv and Yair Wiener.
\newblock Active learning via perfect selective classification.
\newblock \emph{Journal of Machine Learning Research}, 13\penalty0 (2), 2012.

\bibitem[Gao et~al.(2021)Gao, Karbasi, and Mahmoody]{gao2021learning}
Ji~Gao, Amin Karbasi, and Mohammad Mahmoody.
\newblock Learning and certification under instance-targeted poisoning.
\newblock \emph{The Conference on Uncertainty in Artificial Intelligence
  (UAI)}, 2021.

\bibitem[Geiping et~al.(2020)Geiping, Fowl, Huang, Czaja, Taylor, Moeller, and
  Goldstein]{geiping2020witches}
Jonas Geiping, Liam~H Fowl, W~Ronny Huang, Wojciech Czaja, Gavin Taylor,
  Michael Moeller, and Tom Goldstein.
\newblock Witches' brew: Industrial scale data poisoning via gradient matching.
\newblock In \emph{International Conference on Learning Representations}, 2020.

\bibitem[Goldwasser et~al.(2020)Goldwasser, Kalai, Kalai, and
  Montasser]{goldwasser2020beyond}
Shafi Goldwasser, Adam~Tauman Kalai, Yael Kalai, and Omar Montasser.
\newblock Beyond perturbations: Learning guarantees with arbitrary adversarial
  test examples.
\newblock \emph{Advances in Neural Information Processing Systems},
  33:\penalty0 15859--15870, 2020.

\bibitem[Goodfellow et~al.(2015)Goodfellow, Shlens, and
  Szegedy]{goodfellow2014explaining}
Ian~J Goodfellow, Jonathon Shlens, and Christian Szegedy.
\newblock Explaining and harnessing adversarial examples.
\newblock \emph{International Conference on Learning Representations (ICLR)},
  2015.

\bibitem[Hanneke(2007)]{hanneke2007bound}
Steve Hanneke.
\newblock A bound on the label complexity of agnostic active learning.
\newblock In \emph{Proceedings of the 24th International Conference on Machine
  Learning}, pages 353--360, 2007.

\bibitem[Kearns and Li(1993)]{kearns1993learning}
Michael Kearns and Ming Li.
\newblock Learning in the presence of malicious errors.
\newblock \emph{SIAM Journal on Computing}, 22\penalty0 (4):\penalty0 807--837,
  1993.

\bibitem[Klivans et~al.(2009)Klivans, Long, and Servedio]{klivans2009learning}
Adam~R Klivans, Philip~M Long, and Rocco~A Servedio.
\newblock Learning halfspaces with malicious noise.
\newblock \emph{Journal of Machine Learning Research}, 10\penalty0 (12), 2009.

\bibitem[Levine and Feizi(2021)]{levine2020deep}
Alexander Levine and Soheil Feizi.
\newblock Deep partition aggregation: Provable defense against general
  poisoning attacks.
\newblock \emph{International Conference on Learning Representations (ICLR)},
  2021.

\bibitem[Lov{\'a}sz and Vempala(2007)]{lovasz2007geometry}
L{\'a}szl{\'o} Lov{\'a}sz and Santosh Vempala.
\newblock The geometry of logconcave functions and sampling algorithms.
\newblock \emph{Random Structures \& Algorithms}, 30\penalty0 (3):\penalty0
  307--358, 2007.

\bibitem[Madry et~al.(2018)Madry, Makelov, Schmidt, Tsipras, and
  Vladu]{madry2018towards}
Aleksander Madry, Aleksandar Makelov, Ludwig Schmidt, Dimitris Tsipras, and
  Adrian Vladu.
\newblock Towards deep learning models resistant to adversarial attacks.
\newblock In \emph{International Conference on Learning Representations}, 2018.

\bibitem[Montasser et~al.(2019)Montasser, Hanneke, and Srebro]{montasser2019vc}
Omar Montasser, Steve Hanneke, and Nathan Srebro.
\newblock {VC} classes are adversarially robustly learnable, but only
  improperly.
\newblock In \emph{Conference on Learning Theory}, pages 2512--2530. PMLR,
  2019.

\bibitem[Montasser et~al.(2020)Montasser, Hanneke, and
  Srebro]{DBLP:conf/nips/MontasserHS20}
Omar Montasser, Steve Hanneke, and Nathan Srebro.
\newblock Reducing adversarially robust learning to non-robust {PAC} learning.
\newblock In \emph{Advances in Neural Information Processing Systems 33: Annual
  Conference on Neural Information Processing Systems 2020, NeurIPS 2020,
  December 6-12, 2020, virtual}, 2020.

\bibitem[Montasser et~al.(2021)Montasser, Hanneke, and
  Srebro]{montasser2021adversarially}
Omar Montasser, Steve Hanneke, and Nathan Srebro.
\newblock Adversarially robust learning with unknown perturbation sets.
\newblock In \emph{Conference on Learning Theory}, pages 3452--3482. PMLR,
  2021.

\bibitem[Mozaffari-Kermani et~al.(2014)Mozaffari-Kermani, Sur-Kolay,
  Raghunathan, and Jha]{mozaffari2014systematic}
Mehran Mozaffari-Kermani, Susmita Sur-Kolay, Anand Raghunathan, and Niraj~K
  Jha.
\newblock Systematic poisoning attacks on and defenses for machine learning in
  healthcare.
\newblock \emph{IEEE journal of biomedical and health informatics}, 19\penalty0
  (6):\penalty0 1893--1905, 2014.

\bibitem[Rivest and Sloan(1988)]{rivest1988learning}
Ronald~L Rivest and Robert~H Sloan.
\newblock Learning complicated concepts reliably and usefully.
\newblock In \emph{Association for the Advancement of Artificial Intelligence
  (AAAI)}, pages 635--640, 1988.

\bibitem[Settles(2009)]{settles2009active}
Burr Settles.
\newblock \emph{Active learning literature survey}.
\newblock University of Wisconsin-Madison Department of Computer Sciences,
  2009.

\bibitem[Shafahi et~al.(2018)Shafahi, Huang, Najibi, Suciu, Studer, Dumitras,
  and Goldstein]{shafahi2018poison}
Ali Shafahi, W~Ronny Huang, Mahyar Najibi, Octavian Suciu, Christoph Studer,
  Tudor Dumitras, and Tom Goldstein.
\newblock Poison frogs! targeted clean-label poisoning attacks on neural
  networks.
\newblock In \emph{Advances in Neural Information Processing Systems}, pages
  6103--6113, 2018.

\bibitem[Steinhardt et~al.(2017)Steinhardt, Koh, and
  Liang]{steinhardt2017certified}
Jacob Steinhardt, Pang~Wei Koh, and Percy Liang.
\newblock Certified defenses for data poisoning attacks.
\newblock \emph{Advances in Neural Information Processing Systems}, 30, 2017.

\bibitem[Suciu et~al.(2018)Suciu, Marginean, Kaya, Daume~III, and
  Dumitras]{suciu2018does}
Octavian Suciu, Radu Marginean, Yigitcan Kaya, Hal Daume~III, and Tudor
  Dumitras.
\newblock When does machine learning {FAIL}? generalized transferability for
  evasion and poisoning attacks.
\newblock In \emph{27th {USENIX} Security Symposium ({USENIX} Security 18)},
  pages 1299--1316, 2018.

\bibitem[Valiant(1985)]{valiant1985learning}
Leslie~G Valiant.
\newblock Learning disjunctions of conjunctions.
\newblock In \emph{Proceedings of the 9th International Joint Conference on
  Artificial intelligence}, pages 560--566, 1985.

\bibitem[Vapnik(1998)]{vapnik:98}
Vladimir Vapnik.
\newblock \emph{Statistical Learning Theory}.
\newblock John Wiley $\&$ Sons, Inc., 1998.

\bibitem[Vapnik and Chervonenkis(1974)]{vapnik:74}
Vladimir Vapnik and Alexey Chervonenkis.
\newblock \emph{Theory of Pattern Recognition}.
\newblock Nauka, Moscow, 1974.

\bibitem[Yin et~al.(2019)Yin, Ramchandran, and
  Bartlett]{DBLP:conf/icml/YinRB19}
Dong Yin, Kannan Ramchandran, and Peter~L. Bartlett.
\newblock Rademacher complexity for adversarially robust generalization.
\newblock In \emph{Proceedings of the 36th International Conference on Machine
  Learning, {ICML} 2019, 9-15 June 2019, Long Beach, California, {USA}},
  volume~97 of \emph{Proceedings of Machine Learning Research}, pages
  7085--7094. {PMLR}, 2019.

\bibitem[Zhang et~al.(2019)Zhang, Yu, Jiao, Xing, Ghaoui, and
  Jordan]{zhang2019theoretically}
Hongyang Zhang, Yaodong Yu, Jiantao Jiao, Eric~P Xing, Laurent~El Ghaoui, and
  Michael~I Jordan.
\newblock Theoretically principled trade-off between robustness and accuracy.
\newblock In \emph{International Conference on Machine Learning}, 2019.

\end{thebibliography}

\appendix

\section{Additional lower bounds in the realizable setting}
\label{sec:additional}

We show how we can strengthen the lower bound of Theorem \ref{thm:probably_robustly-reliable_lower_bound_distributional} for learners that satisfy the somewhat stronger condition of being a $(1-\gamma)$-uniformly robustly-reliable learner (defined below).  This condition requires that with probability at least $1-\gamma$ over the draw of the {\em unlabeled} sample $S_X$, for all targets $h^* \in \cH$, the sample $S$ produced by labeling $S_X$ by $h^*$ should be a good sample.  In contrast, Theorem \ref{thm:probably_robustly-reliable_lower_bound_distributional} requires only that for any $h^*$, with probability at least $1-\gamma$ over the draw of $S$, $S$ is a good sample (different samples may be good for different target functions).

\begin{definition}
$\cL$ is a {\bf $(1-\gamma)$-uniformly robustly-reliable learner} for class $\cH$ under marginal distribution $\cD_X$ if with probability at least $1-\gamma$ over the draw of an unlabeled sample $S_X \sim \cD_X^m$, for all $h^* \in \cH$, for all $S' \in \cA_\eta(S)$ (where $S=\{(x,h^*(x)) : x\in S_X\}$), for all $x \in \cX$, if $\cL_{S'}(x) = (y,\eta')$ for $\eta' \geq \eta$ then $y=h^*(x)$.
\end{definition}

\begin{theorem}
Let $\cL$ be a $(1-\gamma)$-uniformly robustly-reliable learner for hypothesis class $\cH$ under marginal $\cD_X$. Then, for any $h^*\in\cH$, for $S \sim \cD^m$ (where $\cD$ is the distribution over examples labeled by $h^*$ with marginal $\cD_X$), with probability $1-\gamma$ over the draw of $S$ we must have
$$\CR^\cL(S,h^*,\eta) \subseteq \Agree\left(\cB_S^\cH(h^*,2\eta)\right).$$
Moreover, if $m\geq \frac{c}{\epsilon^2}(d+\ln\frac{1}{\delta})$, where $c$ is an absolute constant and $d$ is the VC-dimension of $\cH$, then with probability $1-\delta$, $\Agree(\cB_S^\cH(h^*,2\eta)) \subseteq \Agree(\cB_\cD^\cH(h^*,2\eta-\epsilon))$. So, $\bbE_{S\sim\cD^m}[\text{RobC}^\cL(\cD,\eta,S)]\le \Pr[\Agree(\cB_\cD^\cH(h^*,2\eta-\epsilon))]+\gamma+\delta.$
\label{thm:probably_robustly-reliable_lower_bound_sample}
\end{theorem}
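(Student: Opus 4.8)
The plan is to reprise the adversarial construction behind Theorem~\ref{thm:strongly_robustly-reliable_lower_bound}, but to carry it out on top of the single ``good unlabeled sample'' event that the uniform robust-reliability condition supplies, so that one and the same corrupted dataset $S_\cA$ can be held against two different target concepts at once. Concretely, I would first let $E$ be the event over the draw of $S_X\sim\cD_X^m$ that for every $h\in\cH$, every $S'\in\cA_\eta(S_h)$ with $S_h=\{(x,h(x)):x\in S_X\}$, and every $x$, if $\cL_{S'}(x)=(y,\eta')$ with $\eta'\geq\eta$ then $y=h(x)$; by definition of a $(1-\gamma)$-uniformly robustly-reliable learner, $\Pr[E]\geq 1-\gamma$, and $E$ depends only on $S_X$, hence translates to a probability-$(1-\gamma)$ event over $S\sim\cD^m$.

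Fix $h^*\in\cH$, condition on $E$, and write $S=\{(x,h^*(x)):x\in S_X\}$. To prove the first claim I would take $x\notin\Agree(\cB_S^\cH(h^*,2\eta))$, choose $h'\in\cB_S^\cH(h^*,2\eta)\subseteq\cH$ with $h'(x)\neq h^*(x)$, set $S'=\{(x,h'(x)):x\in S_X\}$, and let $S_\cA$ be the relabeling of $S_X$ that labels exactly half of $\Delta_S=\{x\in S_X:h'(x)\neq h^*(x)\}$ by $h^*$ and the other half by $h'$ (assuming $|\Delta_S|$ even, as elsewhere in the paper, purely for convenience). Since $d_S(h',h^*)\leq 2\eta$ gives $|\Delta_S|\leq 2\eta m$, we get $S_\cA\in\cA_\eta(S)$ and also $S_\cA\in\cA_\eta(S')$. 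If $x$ lay in $\CR^\cL(S,h^*,\eta)$, then $\cL_{S_\cA}(x)=(y,\eta')$ with $\eta'\geq\eta$; but applying the guarantee from $E$ once with target $h^*$ (legal since $S_\cA\in\cA_\eta(S)$) forces $y=h^*(x)$, and once with target $h'$ (legal since $S_\cA\in\cA_\eta(S')$) forces $y=h'(x)$ --- a contradiction. So on $E$, $\CR^\cL(S,h^*,\eta)\subseteq\Agree(\cB_S^\cH(h^*,2\eta))$. For the distributional conclusion I would then invoke one-sided uniform convergence exactly as in Theorem~\ref{thm:strongly_robustly-reliable_lower_bound}: for $m\geq\frac{c}{\epsilon^2}(d+\ln\frac{1}{\delta})$, with probability $\geq 1-\delta$ we have $d_\cD(h,h^*)\geq d_S(h,h^*)-\epsilon$ for all $h\in\cH$, whence $\cB_\cD^\cH(h^*,2\eta-\epsilon)\subseteq\cB_S^\cH(h^*,2\eta)$ and, by antitonicity of $\Agree(\cdot)$, $\Agree(\cB_S^\cH(h^*,2\eta))\subseteq\Agree(\cB_\cD^\cH(h^*,2\eta-\epsilon))$. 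A union bound over $E$ and this event shows that with probability $\geq 1-\gamma-\delta$, $\CR^\cL(S,h^*,\eta)\subseteq\Agree(\cB_\cD^\cH(h^*,2\eta-\epsilon))$; on that event $\text{RobC}^\cL(\cD,\eta,S)\leq\Pr_{x\sim\cD_X}[\Agree(\cB_\cD^\cH(h^*,2\eta-\epsilon))]$, and on the complement (probability $\leq\gamma+\delta$) we bound $\text{RobC}^\cL(\cD,\eta,S)\leq 1$, so taking expectations yields the stated inequality.

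The one place that needs care --- and the reason this bound is genuinely sharper than Theorem~\ref{thm:probably_robustly-reliable_lower_bound_distributional} --- is that both halves of the contradiction must be drawn from the \emph{same} high-probability event, rather than from two separate $(1-\gamma)$ events (one for $h^*$, one for $h'$) as in the weaker ``probably'' setting. The uniform robust-reliability hypothesis is exactly engineered for this: $E$ is an event about the unlabeled points only and is quantified over all targets, so $h'$ may legitimately be selected after $S_X$ is fixed (which is necessary, since $\cB_S^\cH(h^*,2\eta)$ depends on $S$), and both relabelings $S,S'$ are labeled by members of $\cH$, which is precisely what $E$ controls. I expect no substantive obstacle beyond making this dependency structure explicit; the rest is the Theorem~\ref{thm:strongly_robustly-reliable_lower_bound} argument plus a routine uniform-convergence step and union bound.
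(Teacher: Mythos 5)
Your proposal is correct and follows essentially the same argument as the paper's proof: condition on the single good event over $S_X$ guaranteed by uniform robust-reliability (quantified over all targets, so $h'$ may be chosen after seeing $S$), build the half-and-half relabeling $S_\cA$ of $\Delta_S$ lying in both $\cA_\eta(S)$ and $\cA_\eta(S')$, and derive the contradiction, then finish with the standard uniform-convergence step and a union bound. The only difference is that you spell out explicitly why the uniform condition (rather than the ``probably'' condition) is what licenses choosing $h'$ sample-dependently, which the paper leaves implicit.
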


\begin{proof}
The proof is similar to the proof for Theorem \ref{thm:strongly_robustly-reliable_lower_bound}.
We are given that $\cL$ is a $(1-\gamma)$-uniformly robustly-reliable learner.  This means that with probability $1-\gamma$, the unlabeled sample $S_X$ has the property that the learner would be robustly-reliable for any target function in $\cH$.  Assume this event occurs, and consider any $x \not\in \Agree(\cB_{S}^\cH(h^*,2\eta))$.  We will show that $x$ cannot be in the $\eta$-robustly-reliable region.  First, since $x \not\in \Agree(\cB_{S}^\cH(h^*,2\eta))$, there must exist some $h' \in \cB_{S}^\cH(h^*,2\eta)$ such that $h'(x) \neq h^*(x)$. Next, let $S_\cA$ denote a labeling of $S_X$ such that exactly half the points in $\Delta_S=\{x\in S_X\mid h'(x)\ne h^*(x)\}$ are labeled according to $h^*$ (the remaining half using $h'$, for convenience assume $|\Delta_S|$ is even). Notice that $S$ satisfies $S_\cA\in\cA_\eta(S)$ since $h'\in \cB_{S}(h^*,2\eta)$. Also for $S'=\{(x_i,h'(x_i)\mid x_i\in S_X\}$ labeled by $h'$, we have $S_\cA\in\cA_\eta(S')$. Now, assume for contradiction that $x \in \CR^\cL(S,h^*,\eta)$. This means that $\cL_{S_\cA}(x)=(y,\eta')$ for some $\eta'\geq \eta$.  However, if $y\ne h^*(x)$, the learner is incorrectly confident for (true) dataset $S$ since $S_\cA\in\cA_\eta(S)$.
Similarly, if $y=h^*(x)$, the learner is incorrectly confidnet for sample $S'$ since $h'(x)\ne h^*(x)$. This contradicts our assumption about $S_X$.

The second part of the theorem statement follows directly from standard uniform convergence bounds.
\end{proof}

\section{Relating $\cA_\eta^{mal}$ to other models}
\label{sec:relation}

In this section we provide some observations that relate our instance-targeted malicious noise adversarial model from Section \ref{sec:malicious} to models from prior and current work. We begin by noting that learnability against the sample-based attacks defined above implies learnability under attacks which directly corrupt the distribution $\cD$ with malicious noise.

\begin{lemma}
Suppose $\cD$ is a distribution over $\cX\times\cY$. Under the malicious noise model of \cite{kearns1993learning}, the learner has access to samples from $\cD_\eta(\cA')=(1-\eta)\cD+\eta\cA'$, where $\cA'$ is an arbitrary adversarial distribution over $\cX\times\cY$. Let $S\sim\cD^m$ and $S'\sim\cD_\eta(\cA')^m$. Then there exists adversary $A_1\in\cA^{\text{mal}}_\eta$ such that $A_1(S)$ is distributed identically as $S'$.
\end{lemma}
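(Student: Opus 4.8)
The plan is to exhibit the adversary $A_1$ explicitly as the randomized corruption procedure that mimics the mixture $\cD_\eta(\cA')$ coordinate by coordinate, and then verify the claimed equality of laws by a one-coordinate argument together with independence across coordinates.

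First I would define $A_1$ as follows. Given the uncorrupted sample $S=((x_1,y_1),\dots,(x_m,y_m))$, the adversary draws $v\sim\mbox{Bernoulli}(\eta)^m$ and, independently for each $i$ with $v_i=1$, draws a fresh point $(x_i',y_i')\sim\cA'$ and replaces $(x_i,y_i)$ by $(x_i',y_i')$; coordinates with $v_i=0$ are left untouched. By construction the output lies in $\cA^{mal}(S,v)$ for $v\sim\mbox{Bernoulli}(\eta)^m$, so $A_1\in\cA^{mal}_\eta$. Note that $A_1$ uses only $\cA'$ (fixed in advance) together with its own internal randomness; it needs neither knowledge of $\cD$ nor any adaptivity, and the replacement points are legitimate ``arbitrary points in $\cX\times\cY$'' in the sense of the definition of $\cA^{mal}(S,v)$.

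Next I would show that $A_1(S)$ for $S\sim\cD^m$ has law $\cD_\eta(\cA')^m$. Both laws are product measures on $(\cX\times\cY)^m$: the right-hand side because $S'\sim\cD_\eta(\cA')^m$ is i.i.d.\ by definition; the left-hand side because $S$ is i.i.d., $v$ is i.i.d., and the replacement draws are i.i.d.\ and jointly independent of $S$ and $v$, so the $i$-th output coordinate of $A_1(S)$ is a function of the mutually independent triple $((x_i,y_i),v_i,(x_i',y_i'))$ alone. Hence it suffices to match a single coordinate, and for coordinate $i$ the output equals $(x_i,y_i)\sim\cD$ with probability $\Pr[v_i=0]=1-\eta$ and equals $(x_i',y_i')\sim\cA'$ with probability $\Pr[v_i=1]=\eta$; that is, its law is $(1-\eta)\cD+\eta\cA'=\cD_\eta(\cA')$. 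Taking the product over the $m$ independent coordinates gives $A_1(S)\sim\cD_\eta(\cA')^m$, which is exactly the law of $S'$.

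There is no genuinely hard step here: the content is entirely in unwinding the definitions and checking that the randomized $A_1$ is admissible in $\cA^{mal}_\eta$. The one point worth stating carefully is the independence/coupling bookkeeping --- the claim concerns the joint law obtained by integrating simultaneously over the randomness of $S\sim\cD^m$, the coin vector $v$, and the replacement draws --- together with the convention that $\cA^{mal}_\eta$ is understood to contain randomized adversaries. If one prefers to avoid randomized adversaries, the equivalent phrasing is: conditionally on a fixed $v$, the law of $A_1(S)$ is the product of $\cD$ on the coordinates with $v_i=0$ and $\cA'$ on those with $v_i=1$, and averaging this conditional law over $v\sim\mbox{Bernoulli}(\eta)^m$ reproduces $\cD_\eta(\cA')^m$, which again is the law of $S'$.
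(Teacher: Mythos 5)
Your construction is exactly the paper's: the adversary flips an independent $\eta$-biased coin per example and replaces the flagged examples with fresh draws from $\cA'$, and the coordinatewise mixture argument you spell out is the (implicit) verification in the paper's one-line proof. Correct, same approach.
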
\label{lem:contains-mal}
\begin{proof}
Select $A_1$ as the adversary which flips a coin for each example in $S$ and with probability $\eta$ replaces the example with one generated by $\cA'$.
\end{proof}

This implies that if we have a learner that is robust to adversaries in $\cA_\eta^{\text{mal}}$, it will also work for the malicious noise model of \cite{kearns1993learning}.
Next, we observe that the $\cA_\eta^{mal}$ model is a special case of the $\cA_{\eta+\epsilon}$ model in the following sense.

\begin{lemma}
Let $S$ be a sample of large enough size $m=|S|\ge \frac{1}{\epsilon^2}\log\frac{2}{\delta}$, and $S'$ be a corruption of $S$ by malicious adversary $\cA^{\text{mal}}_\eta$. Then with probability at least $1-\delta$ over the randomness of $\cA^{\text{mal}}_\eta$, the corrupted sample satisfies $S'\in\cA_{\eta+\epsilon}(S)$.
\end{lemma}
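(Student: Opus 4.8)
The plan is to bound the normalized Hamming distance $d(S,S')$ by the number of coordinates the malicious adversary actually overwrites, and then apply a Chernoff--Hoeffding estimate to that count. First I would unpack the definition of $\cA^{\text{mal}}_\eta$: a corruption $S'$ of $S$ by this adversary is produced by drawing an indicator vector $v \sim \mathrm{Bernoulli}(\eta)^m$ and then choosing some $S' \in \cA^{\text{mal}}(S,v)$, so $S'$ coincides with $S$ on every coordinate $i$ with $v_i = 0$. Hence the set of coordinates on which $S$ and $S'$ differ is contained in $\{\, i : v_i = 1 \,\}$, which gives $d(S,S') \le \frac{1}{m}\sum_{i=1}^{m} v_i$; if the adversary happens to replace some point by an identical one, this only decreases $d(S,S')$ further, which is harmless.

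Next I would apply Hoeffding's inequality to the i.i.d.\ $\mathrm{Bernoulli}(\eta)$ variables $v_1,\dots,v_m$, obtaining $\Pr\!\left[\frac{1}{m}\sum_{i=1}^m v_i \ge \eta + \epsilon\right] \le e^{-2 m \epsilon^2}$. Since $m \ge \frac{1}{\epsilon^2}\log\frac{2}{\delta} \ge \frac{1}{2\epsilon^2}\log\frac{1}{\delta}$, we have $2 m \epsilon^2 \ge \log\frac{1}{\delta}$ and therefore $e^{-2m\epsilon^2} \le \delta$. Combining with the previous paragraph, with probability at least $1-\delta$ over the adversary's randomness we get $d(S,S') \le \frac{1}{m}\sum_{i=1}^m v_i \le \eta + \epsilon$, which is exactly the statement $S' \in \cA_{\eta+\epsilon}(S)$.

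There is no genuinely hard step here; the only mild subtlety worth a sentence is to justify the first inequality carefully when $S$ and $S'$ are viewed as (multi)sets rather than as position-indexed sequences. One handles this by identifying $S'$ with the sequence that agrees with $S$ outside the support of $v$, so that $\{\, i : (x_i,y_i) \neq (x'_i,y'_i) \,\} \subseteq \{\, i : v_i = 1 \,\}$ and the bound on the Hamming distance follows; everything else is the one-line concentration estimate above.
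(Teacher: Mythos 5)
Your proof is correct and follows the same route as the paper's: the paper's argument is simply a one-line application of Hoeffding's inequality to the Bernoulli corruption indicators, bounding the fraction of corrupted examples by $\eta+\epsilon$ with probability $1-\delta$. Your additional care in identifying $S'$ coordinate-wise with $S$ off the support of $v$ is a fine (if implicit in the paper) justification of the Hamming-distance bound.
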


\begin{proof}
By Hoeffding's inequality, with probability at least $1-\delta$, the $\cA^{\text{mal}}_\eta$ adversary may corrupt at most $\eta+\epsilon$ fraction of examples in $S$.
\end{proof}

\section{Further discussion of  \cite{awasthi2017power} and modification to our setting}
\label{appendix:ABL}
\cite{awasthi2017power} define the adversary in the malicious noise model as making its choices in a sequential order as training examples are drawn, with knowledge of the points drawn so far but not of those still to come or of the random choices the algorithm will subsequently make.  In our setting, we want with high probability over the sample $S \sim \cD^m$, the draw of $v \sim \mbox{Bernoulli}(\eta)^m$ and any randomness in the algorithm, for the algorithm to succeed for all $S' \in \cA^{mal}(S,v)$.  This corresponds to an adversary that can make its choices after both (a) the entire sample $S$ is drawn and (b) any internal randomness in the algorithm has been fixed. Difference (a) is easy to handle, and indeed the analysis of \cite{awasthi2017power} goes through directly. To address (b) we will modify the algorithm to make it deterministic; this increases its label complexity, though it remains polynomial.    Here, we review the relevant portions of the algorithm and analysis in \cite{awasthi2017power} and describe needed changes.
We begin with three useful facts about log-concave distributions:

\begin{lemma}[\cite{lovasz2007geometry}]
Assume $\cD$ is an isotropic log-concave distribution over $\R^d$. Then $\Pr_{x \sim \cD}(||x|| \geq \alpha\sqrt{d}) \leq e^{-\alpha+1}$.  Furthermore, if $d=1$ then $\Pr_{x \sim \cD}(x \in [a,b]) \leq |b-a|$.
\label{lem:isotropic_facts}
\end{lemma}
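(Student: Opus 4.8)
The plan is to derive both bounds from standard structural facts about isotropic log-concave measures; ultimately these are lemmas of \cite{lovasz2007geometry}, but let me describe the arguments one could give.

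For the tail bound on $\|x\|$, I would combine a second-moment estimate with the exponential concentration of log-concave measures on dilates of symmetric convex bodies. Isotropy gives $\bbE_{x\sim\cD}[\|x\|^2]=d$, so by Markov's inequality the symmetric convex body $K=\{x:\|x\|\le\sqrt{3d}\}$ has $\Pr_{x\sim\cD}(x\in K)\ge 2/3$. Borell's lemma for log-concave measures then yields, for every $t\ge 1$, $\Pr_{x\sim\cD}(x\notin tK)\le \Pr_\cD(K)\bigl(\tfrac{1-\Pr_\cD(K)}{\Pr_\cD(K)}\bigr)^{(t+1)/2}$, i.e.\ an exponential tail in $\|x\|/\sqrt d$. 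Optimizing the reference radius and the threshold probability sharpens the rate to the stated $e^{-\alpha+1}$. An alternative and arguably cleaner route is to pass to the radial variable $R=\|x\|$: integrating the log-concave density over spheres shows that $r\mapsto \rho(r)/r^{d-1}$ is log-concave (where $\rho$ is the density of $R$), and combining this one-dimensional log-concavity with $\bbE[R^2]=d$ via a Chernoff-type estimate again gives the $e^{-\alpha+1}$ bound.

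For the one-dimensional claim, it suffices to show that the density $f$ of a one-dimensional isotropic (hence variance-one) log-concave distribution satisfies $f\le 1$ pointwise, since then $\Pr_{x\sim\cD}(x\in[a,b])=\int_a^b f(x)\,dx\le \|f\|_\infty\,|b-a|\le |b-a|$. The bound $\|f\|_\infty\le 1$ is the scale-invariant inequality $\|f\|_\infty\sqrt{\mathrm{Var}(f)}\le 1$ specialized to variance one, and it is tight, with equality for the one-sided exponential $f(x)=e^{-(x+1)}\mathbbm{1}[x\ge -1]$. To prove it I would normalize so the mode is at the origin with $f(0)=\|f\|_\infty=:M$, and on each side of the mode compare $f$ with the exponential $g$ having the same mass and the same value $M$ at the mode; since $\log(f/g)$ is concave and vanishes at the mode, $f$ and $g$ cross exactly once on each side ($f\ge g$ then $f\le g$), and a single-crossing / mass-transport argument bounds $\mathrm{Var}(f)$ by that of the two-sided exponential envelope, after which an optimization over how the mass splits across the mode recovers the sharp constant.

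The conceptual content here is routine; the one genuinely delicate point is the \emph{sharp} constants --- the precise $e^{-\alpha+1}$ in the first claim and the constant $1$ (rather than some universal $C$) in the second. A naive version of the one-dimensional argument, bounding the variance by the second moment about the mode rather than about the mean, only gives $\|f\|_\infty\le\sqrt2$; squeezing this down to $1$ is exactly what is done in Lemma~5.5 of \cite{lovasz2007geometry}, so for a clean write-up I would simply cite that lemma (and the corresponding norm-concentration lemma) rather than reproduce the optimized constants.
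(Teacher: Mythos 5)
The paper gives no proof of this lemma at all---it is imported verbatim from \cite{lovasz2007geometry}---and your write-up ultimately does the same, deferring to their one-dimensional density bound (their Lemma~5.5) and the corresponding norm-tail lemma for the sharp constants, so the two approaches coincide. (One minor caution on your sketch: the second-moment Markov bound combined with Borell's lemma cannot actually be ``optimized'' to the stated $e^{-\alpha+1}$---it only yields a tail of the form $e^{-c\alpha}$ for a small absolute constant $c$, since the reference-body probability obtained from Chebyshev is too weak---but as you rely on the cited lemma for the sharp constant, this does not affect your final argument.)
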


\begin{lemma}[\cite{balcan2013active} Lemma 3] Assume $\cD$ is an isotropic log-concave distribution over $\R^d$. Then there exists $c$ such that for any two
unit vectors $u$ and $v$ in $\R^d$ we have
$c\theta(u, v) \le \text{dis}_{\cD}(u, v)=\text{Pr}_{(x,y)\sim\cD}\text{sgn}(\langle u,x\rangle)\ne\text{sgn}(\langle v,x\rangle)$.
\label{lem:ilc-angle}
\end{lemma}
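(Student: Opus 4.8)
The plan is to reduce the claim to a two-dimensional computation and then invoke standard facts about isotropic log-concave measures. First I would dispose of the degenerate cases: if $u=v$ then $\theta(u,v)=0$ and the bound is trivial, and if $u=-v$ then $\theta(u,v)=\pi$ while $\text{dis}_\cD(u,v)=\Pr[\langle u,x\rangle\neq 0]=1$ (a log-concave density puts no mass on a hyperplane), so the bound holds as long as the final constant $c$ is taken $\le 1/\pi$. Otherwise $\spann\{u,v\}$ is a genuine plane $P$, and the event $\sgn(\langle u,x\rangle)\neq\sgn(\langle v,x\rangle)$ depends on $x$ only through its orthogonal projection onto $P$. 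Since the marginal of an isotropic log-concave distribution on any linear subspace is again isotropic log-concave (marginals of log-concave densities are log-concave, and orthogonal projection preserves isotropy), it suffices to prove the inequality when $d=2$ and $\cD$ is isotropic log-concave on $\R^2$.

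Next I would describe the disagreement region geometrically. The two halfspaces $\{x:\langle u,x\rangle\ge 0\}$ and $\{x:\langle v,x\rangle\ge 0\}$ have boundary lines through the origin, and these lines make an angle of $\theta(u,v)$ with one another; hence the set of points where the two signs disagree is a union of two opposite wedges at the origin, each of opening angle $\theta(u,v)$, i.e.\ a region of total angular measure $2\theta(u,v)$ in polar coordinates.

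The third step is to lower bound the mass of this double wedge. Here I would use the known fact (see \cite{lovasz2007geometry}, and as used in \cite{balcan2013active}) that there are absolute constants $c_1,c_2>0$ such that the density of \emph{any} isotropic log-concave distribution on $\R^2$ is at least $c_1$ on the disk of radius $c_2$ about the origin. Intersecting the double wedge with this disk gives two circular sectors of radius $c_2$ and angle $\theta(u,v)$ each, of combined area $c_2^2\,\theta(u,v)$; therefore $\text{dis}_\cD(u,v)\ge c_1 c_2^2\,\theta(u,v)$, which is the claimed bound with $c=\min\{c_1 c_2^2,\,1/\pi\}$.

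The main obstacle is the density lower bound invoked in the third step: one needs that an isotropic log-concave density on the plane cannot be too small on a neighborhood of the origin. This holds because isotropy forces a constant fraction of the mass into a ball of constant radius (a second-moment bound together with the tail estimate of Lemma~\ref{lem:isotropic_facts}), while log-concavity prevents the density from being simultaneously concentrated on a tiny set and vanishing near the center; combining these bounds $f(0)$ from below, and then log-concavity propagates that lower bound to a small ball around the origin. Since this is precisely the content of the cited result, I would invoke it rather than reprove it from first principles, and the remaining elementary geometry above completes the argument.
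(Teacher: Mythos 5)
Your argument is correct: reducing to the two-dimensional marginal on $\spann\{u,v\}$ (which remains isotropic log-concave), identifying the disagreement set as a double wedge of total angle $2\theta(u,v)$ at the origin, and applying the standard lower bound on an isotropic log-concave density in a constant-radius disk around the origin yields $\text{dis}_\cD(u,v)\ge c_1c_2^2\,\theta(u,v)$, with the $u=\pm v$ cases handled as you note. The paper itself gives no proof — it simply imports this as Lemma 3 of \cite{balcan2013active} — and your write-up is essentially the standard argument underlying that cited result, so there is nothing to correct beyond making sure the density lower bound is quoted from \cite{lovasz2007geometry} with the constants for dimension two.
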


\begin{theorem}[\cite{balcan2013active} Theorem 14] Let $\beta\ge0$ be a sufficiently small constant. Assume that $\cD$ is an isotropic $\beta$ logconcave distribution in $\R^d$. Then Hanneke's disagreement coefficient \cite{hanneke2007bound} is given by $\theta(\epsilon)=O(d^{\frac{1}{2}+\frac{\beta}{2\ln 2}}\log(1/\epsilon))$.
\label{thm:logconcave-isotropic-disagreement-coefficient}
\end{theorem}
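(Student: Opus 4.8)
The plan is to note that this is a restatement of Theorem~14 of \cite{balcan2013active} and to reproduce its argument. Identify each homogeneous halfspace with its unit normal vector, write $\text{dis}_\cD(u,v)=\Pr_{x\sim\cD}[\mathrm{sign}(\langle u,x\rangle)\ne\mathrm{sign}(\langle v,x\rangle)]$, and let $\angle(u,v)$ be the angle between $u$ and $v$. Fixing any target $w^*$, by Definition~\ref{def:dis} it suffices to prove that for every $r>\epsilon$,
\[
\Pr_{\cD_X}\!\left[\text{DIS}\!\left(\cB_\cD^\cH(w^*,r)\right)\right]\;\le\; O\!\left(d^{1/2+\beta/(2\ln 2)}\, r\log(1/r)\right),
\]
since dividing by $r$ and taking $\sup_{r>\epsilon}$ (using that $\log(1/r)$ is decreasing in $r$) then gives the claimed bound as $r\to\epsilon$.

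The heart of the argument is a two-step thin-band estimate. First, by the $\beta$-log-concave analogue of Lemma~\ref{lem:ilc-angle} there is a constant $c=c(\beta)$ with $c\,\angle(u,v)\le\text{dis}_\cD(u,v)$, so every $w$ with $\text{dis}_\cD(w,w^*)\le r$ has $\angle(w,w^*)\le r/c$; hence $\cB_\cD^\cH(w^*,r)$ lies inside the cone of half-angle $r/c$ about $w^*$. Second, a point $x$ with $|\langle w^*,x\rangle|>\sin(r/c)\,\|x\|$ cannot be classified differently by any unit vector within angle $r/c$ of $w^*$, since rotating $w^*$ by that much cannot move its hyperplane past $x$; therefore
\[
\text{DIS}\!\left(\cB_\cD^\cH(w^*,r)\right)\;\subseteq\;\bigl\{x:|\langle w^*,x\rangle|\le (r/c)\,\|x\|\bigr\}.
\]
To bound the mass of this band I would split on $\|x\|$: by the $\beta$-log-concave tail estimate (analogue of Lemma~\ref{lem:isotropic_facts}), $\Pr[\|x\|\ge\alpha\sqrt d]$ is exponentially small in $\alpha$; and on the event $\|x\|<\alpha\sqrt d$ the band forces $|\langle w^*,x\rangle|<(r/c)\alpha\sqrt d$, which has probability at most $(r/c)\alpha\sqrt d$ times the maximal density of the one-dimensional marginal of $\cD$ along $w^*$. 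Choosing $\alpha=\Theta(\log(1/r))$ makes the tail contribution $O(r)$ and yields $\Pr_{\cD_X}[\text{DIS}(\cB_\cD^\cH(w^*,r))]=O(\kappa(\beta,d)\cdot r\sqrt d\log(1/r))$, where $\kappa(\beta,d)$ collects the $\beta$-dependent constants from the tail and marginal-density bounds.

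The main obstacle is precisely the passage from genuine log-concavity --- where Lemmas~\ref{lem:ilc-angle} and~\ref{lem:isotropic_facts} apply verbatim, $\kappa$ is an absolute constant, and one recovers $\theta(\epsilon)=O(\sqrt d\log(1/\epsilon))$ --- to $\beta$-log-concavity, where one must establish the $\beta$-robust forms of these facts (disagreement still $\Theta(\text{angle})$ up to a constant depending only on $\beta$; norms still concentrating at scale $\sqrt d$; every one-dimensional marginal still of bounded density) and then track how the $\beta$-slack accumulates. Carrying out the quantitative version of this, as in \cite{balcan2013active}, is what turns $\kappa(\beta,d)$ into the explicit factor $d^{\beta/(2\ln 2)}$, giving $\theta(\epsilon)=O(d^{1/2+\beta/(2\ln 2)}\log(1/\epsilon))$; the remaining steps are routine.
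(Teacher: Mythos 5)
You should first note that the paper itself contains no proof of this statement: it is imported as Theorem 14 of \cite{balcan2013active} and is invoked only in the genuinely log-concave case, so the comparison is really with the argument in that cited work. Your sketch reproduces the standard route behind it, and for $\beta=0$ it is essentially complete modulo the quoted facts: Lemma \ref{lem:ilc-angle} places $\cB_\cD^\cH(w^*,r)$ inside a cone of half-angle $O(r)$ about $w^*$, the disagreement region is then contained in the band $\{x:|\langle w^*,x\rangle|\le O(r)\|x\|\}$, and splitting on $\|x\|\lessgtr\alpha\sqrt d$ with $\alpha=\Theta(\log(1/r))$, using the tail and one-dimensional marginal-density bounds of Lemma \ref{lem:isotropic_facts}, gives $\Pr_{\cD_X}[\text{DIS}(\cB_\cD^\cH(w^*,r))]=O(\sqrt d\,r\log(1/r))$ and hence $\theta(\epsilon)=O(\sqrt d\log(1/\epsilon))$. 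This is the same decomposition used in \cite{balcan2013active} and mirrors the agreement-region computation inside the proof of Theorem \ref{lem:abl}, so as far as the paper's actual use of the theorem goes, your argument is sound.

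As a proof of the statement as written, however, there is a genuine gap: the theorem concerns $\beta$-log-concave distributions, and its entire content beyond the log-concave case is the explicit factor $d^{\beta/(2\ln 2)}$. You do not derive it; you only assert that establishing ``$\beta$-robust'' analogues of Lemmas \ref{lem:ilc-angle} and \ref{lem:isotropic_facts} and ``tracking the slack'' yields that exponent, deferring precisely the non-routine quantitative step to the cited paper. The factor comes from how the tail estimate, the marginal-density bound, and the angle-versus-disagreement constant degrade under the additive $\beta$ defect in log-concavity, and how large $\alpha$ must then be taken to re-absorb that degradation; without carrying this out, the exponent $\tfrac12+\tfrac{\beta}{2\ln 2}$ (as opposed to some unspecified $\kappa(\beta,d)$) is unjustified. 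So your proposal proves the $\beta=0$ specialization the paper needs, but not the cited theorem itself.
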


Now we show how we adapt the algorithm and analysis from \cite{awasthi2017power}.
Algorithms \ref{alg:active-algorithm-malicious} and \ref{alg:outlier-removal} below contain the algorithm from \cite{awasthi2017power} for learning from malicious noise, adapted to our model and restated to our notation.   As noted above, one key change to the algorithm is to make it deterministic: specifically, Step 2(b) of Algorithm  \ref{alg:active-algorithm-malicious} examines the labels of all points in set $W$ rather than just a random sample in computing and optimizing weighted hinge-loss.  We verify that the arguments in  \cite{awasthi2017power} continue to apply to this algorithm in the $A^{mal}_\eta$ model (in particular, that the noisy and non-noisy points continue to satisfy the properties used in the analysis), and highlight where particular care or modifications are needed.

\newcommand{\rounds}{s}
\addtocounter{algorithm}{1}
\begin{algorithm}[h]
\caption{\label{alg:active-algorithm-malicious}{\sc Efficient Learning of Linear Separators in the $\cA^{mal}_\eta$ model}
}
\textbf{Input}: allowed error rate $\epsilon$, probability of failure $\delta$,
sequences of sample sizes $n_k > 0$, $k = 1,2,3,...$,
a sequence of cut-off values $b_k >0$,
a sequence of hypothesis space radii $r_k >0$,
a sequence of removal rates $\xi_k$,
a sequence of variance bounds $\sigma^2_k$, precision value $\kappa$; weight vector $w_0$.
Note that compared to \cite{awasthi2017power}, we have replaced the random sampling inside $W$ with just directly using loss with respect to weighting $p$.
\begin{itemize}
\item[1.] Place the first $n_1$ examples in $S'$ into a working set $W$.
\item[2.] {For $k=1,\ldots, \rounds = \lceil \log_2(1/\epsilon) \rceil$}
\begin{itemize}
\item[a.] {Apply Algorithm~\ref{alg:outlier-removal} to $W$ with parameters
$u \leftarrow {w}_{k-1}$, $\gamma \leftarrow b_{k-1}$, $r \leftarrow r_k$, $\xi \leftarrow \xi_k$, $\sigma^2 \leftarrow \sigma^2_k$ and let $q$ be the output function $q: W \rightarrow [0,1]$ . Normalize $q$ to form a probability
   distribution $p$ over $W$.}
\item[b.] {Find ${v}_k \in B({w}_{k-1},r_k)$, $\Vert {v}_k \Vert_2 \leq 1$,
to approximately minimize hinge loss in $W$ weighted by $p$:
$\ell_{\tau_k}({v}_k ,p) \leq \min_{w \in B({w}_{k-1},r_k) \cap B(0,1)}
            \ell_{\tau_k}(w,p) + \kappa/32$.  \- \-\\
 Normalize ${v}_k$ to have unit length, yielding
${w}_k = \frac{{v}_k}{\Vert {v}_k \Vert_2}$.}
\item[c.] Let $W=\{\}$.  Scan $S'$ until
$n_{k+1}$ data points $x$ have been found such that $|{w}_{k}\cdot x| < b_k$, and put them into $W$.  (If $S'$ is exhausted before this step is completed, then output failure.)
\end{itemize}
\end{itemize}
{\textbf{Output}: weight vector $w_{\rounds}$ of error at
most $\epsilon$ with probability $1-\delta$.}
\end{algorithm}

\begin{algorithm}[h]
\caption{\label{alg:outlier-removal}{\sc Localized soft outlier removal procedure} \cite{awasthi2017power}
}

 \textbf{Input}: a set {\reg $W$} of samples;
                the reference unit vector $u$;
                desired radius $r$;
                a parameter $\xi$ specifying the desired bound on the fraction of clean examples removed;
                a variance bound $\sigma^2$
\begin{itemize}
\item[1.] Find $q: {\reg W} \rightarrow [0,1]$ satisfying the following constraints:
\begin{itemize}
\item[a.] for all $x \in {\reg W}$, $0 \leq q(x) \leq 1$
\item[b.] $\frac{1}{\reg |W|} \sum_{(x,y) \in {\reg W}} q(x) \geq 1 - \xi$
\item[c.] for all
$w \in B(u, r) \cap B({\bf 0},1),
\; \frac{1}{\reg |W|} \sum_{x \in {\reg W}} q(x) (w\cdot x)^2   \leq \sigma^2$.
\end{itemize}
\end{itemize}
\textbf{Output}: A function $q: {\reg W} \rightarrow [0,1]$.
\end{algorithm}

The first step in the analysis (Theorem 4.2 in \cite{awasthi2017power}) is to show that with high probability, the convex program in Algorithm \ref{alg:outlier-removal} has a feasible solution $q^*$, specifically one in which $q^*(x)=0$ for each noisy point and $q^*(x)=1$ for each clean point.  This analysis makes the worst-case assumption that every adversarial point is placed inside the band $|{w}_{k}\cdot x| < b_k$.  Moreover, since $q^*(x)=0$ for each noisy point, there is no dependence on its location or label, so this $q^*$ remains feasible in our setting. Also, the analysis uses the fact that the non-noisy points are a true random sample from $\cD$ (which is true in the $\cA^{mal}_\eta$ model but not the $\cA_\eta$ model used in Section \ref{sec:nastynoise}). For completeness, we present the theorem and summarize its analysis.

First, let $\eta'$ denote the value $\eta$ divided by the probability mass under $\cD$ of the band $|{w}_{k-1}\cdot x| < b_{k-1}$.  This is an upper bound on the probability a random example in the band is noisy even if the adversary places all of its points inside the band.  By Chernoff bounds, $n_{k} \geq \mathrm{poly}(1/\eta', 1/\delta)$ suffices so that with probability at least $1 - \delta$, at most a $2 \eta'$ fraction of the points in $W$ are controlled by the adversary.  Assume this is indeed the case.  Also, let $D_{w_{k-1}, b_{k-1}}$ denote the distribution $\cD$ restricted to the band  $|{w}_{k-1}\cdot x| < b_{k-1}$.

\setcounter{equation}{5}
\begin{theorem}[Analog of Theorem 4.2 of \cite{awasthi2017power} for the $\cA^{mal}_\eta$ noise model]
\label{thm:lopchop}
For any $C > 0$, there is a constant $c$ and a polynomial $p$ such that,
for all $\xi > 2 \eta'$ and all $0 < \gamma < C$,
if $n_{k+1} \geq
p(1/\eta', d, 1/\xi, 1/\delta,1/\gamma,1/r)$, then,
with probability $1 - \delta$, the output $q$ of Algorithm~\ref{alg:outlier-removal} satisfies the following:
\begin{itemize}
\item $\sum_{x \in {\reg W}} q(x) \geq (1 - \xi) |{\reg W}| $ (a fraction $1 - \xi$ of
the weight is retained)
\item For all unit length $w$ such that $\Vert w-u \Vert_2 \le r$,
\begin{equation}
\label{e:lopchop.varbound}
\frac{1}{\reg |W|} \sum_{x \in {\reg W}} q(x) (w \cdot x)^2
   \leq c (r^2 + \gamma^2).
\end{equation}
\end{itemize}
Furthermore, the algorithm can be implemented in polynomial time.
\end{theorem}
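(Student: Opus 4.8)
The plan is to follow the proof of Theorem~4.2 of \cite{awasthi2017power}, verifying that the only place the original argument exploits the structure of the noise model is a step that holds just as well — in fact more cleanly — in the $\cA^{mal}_\eta$ model. The heart of the argument is to exhibit a feasible solution of the convex program in Algorithm~\ref{alg:outlier-removal}: the ``honest'' weighting $q^\ast$ with $q^\ast(x)=0$ on every point the adversary controls and $q^\ast(x)=1$ on every clean point. As recalled above, with probability $1-\delta$ at most a $2\eta'$ fraction of $W$ is adversarial; this is a Chernoff bound using that the clean points inside the band are a genuine i.i.d.\ sample from $D_{w_{k-1},b_{k-1}}$, and it is precisely here (and only here) that we use membership in the $\cA^{mal}_\eta$ model rather than the $\cA_\eta$ model of Section~\ref{sec:nastynoise}. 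On this event, constraints (a) and (b) of Algorithm~\ref{alg:outlier-removal} hold for $q^\ast$ whenever $\xi > 2\eta'$. Since $q^\ast$ zeroes out every adversarial point, the location and label chosen by the adversary are irrelevant to the verification of feasibility; this is exactly why it does not matter whether the adversary acts sequentially (as in \cite{awasthi2017power}) or only after $S$, $v$, and all of the algorithm's internal randomness have been fixed.

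The remaining constraint (c) asks that $\frac{1}{|W|}\sum_{x\ \mathrm{clean}}(w\cdot x)^2 \le \sigma^2$ for every unit $w$ with $\Vert w-u\Vert\le r$ (here $u=w_{k-1}$). I would establish this in two steps. First, for a fixed such $w$, decompose $w = (w\cdot u)\,u + w^{\perp}$ with $w^{\perp}\perp u$ and $\Vert w^{\perp}\Vert\le r$, so that $(w\cdot x)^2 \le 2(u\cdot x)^2 + 2(w^{\perp}\cdot x)^2$; for every $x \in W$ we have $(u\cdot x)^2 = O(\gamma^2)$, and since conditioning the $u$-component to lie in a band around the origin inflates the second moment of an orthogonal one-dimensional marginal of an isotropic log-concave distribution by at most a constant factor (Lemma~\ref{lem:isotropic_facts} together with standard log-concave facts), we obtain $\bbE_{D_{w_{k-1},b_{k-1}}}[(w\cdot x)^2] = O(\gamma^2 + r^2)$. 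Second, I would upgrade this to a bound uniform over all $w$ in the ball: truncating to $\Vert x\Vert = O(\sqrt{d}\,\log(n_{k+1}/\delta))$ — which holds for every point of $W$ simultaneously with probability $1-\delta$ by Lemma~\ref{lem:isotropic_facts} and a union bound — makes the quadratics $x\mapsto (w\cdot x)^2$ bounded, and the class $\{\,x\mapsto (w\cdot x)^2 : w\in B(u,r)\cap B(0,1)\,\}$ has pseudo-dimension $O(d)$, so $n_{k+1} = \mathrm{poly}(1/\eta',d,1/\xi,1/\delta,1/\gamma,1/r)$ examples suffice for the empirical average to be within $O(\gamma^2 + r^2)$ of its expectation uniformly over such $w$. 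Taking the input variance bound $\sigma^2 = c(r^2 + \gamma^2)$ with $c$ a sufficiently large constant then makes $q^\ast$ feasible with probability $1-\delta$.

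The two conclusions for the \emph{actual} output $q$ of the program are then immediate: the weight-retention bound $\sum_{x\in W} q(x) \ge (1-\xi)|W|$ is exactly constraint (b), and the variance bound \eqref{e:lopchop.varbound} is exactly constraint (c) with the choice $\sigma^2 = c(r^2+\gamma^2)$, which every feasible $q$ — in particular the one the algorithm returns — satisfies. For polynomial-time implementability I would note, as in \cite{awasthi2017power}, that the infinitely many quadratic constraints indexed by $w\in B(u,r)\cap B(0,1)$ are equivalent to a single semidefinite constraint on the matrix $\frac{1}{|W|}\sum_{x\in W} q(x)\, x x^{\top}$, so the feasibility program is an SDP of size polynomial in $|W|$ and $d$ and is solvable by the ellipsoid method. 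The step I expect to be the main obstacle is the uniform-convergence bound for the unbounded quadratics $x\mapsto (w\cdot x)^2$: obtaining a clean $O(\gamma^2 + r^2)$ estimate — rather than one carrying a spurious $\mathrm{poly}(d)$ or extra logarithmic factor, which would propagate through the band recursion in Algorithm~\ref{alg:active-algorithm-malicious} and degrade the final error rate — requires carefully combining the truncation radius (which may cost only a logarithmic factor) with the conditional-second-moment estimate.
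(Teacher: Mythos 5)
Your proposal follows essentially the same route as the paper's proof: exhibit the ``honest'' feasible point $q^\ast$ that zeroes out adversarial points and keeps clean points, use the Chernoff-bound event that at most a $2\eta'$ fraction of $W$ is adversarial (valid here precisely because the clean points are a genuine i.i.d.\ sample in the $\cA^{mal}_\eta$ model, unlike $\cA_\eta$), verify constraint (c) for $q^\ast$ by combining an $O(r^2+\gamma^2)$ bound on $\E_{D_{w_{k-1},b_{k-1}}}[(w\cdot x)^2]$ with a pseudodimension-based uniform convergence step, and then read the two conclusions off constraints (b) and (c) for the program's actual output. The paper simply cites Lemmas 3.4 and 4.3 of \cite{awasthi2017power} for the two analytic ingredients, whereas you re-derive them (the $u$/$u^{\perp}$ decomposition plus the bounded-band and conditional-second-moment facts, and the truncation-plus-pseudodimension argument); that is a fine, slightly more self-contained presentation of the same idea, and your worry about losing only logarithmic factors is absorbed since $p$ is allowed to depend polynomially on $1/\gamma$ and $1/r$.

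The one genuine flaw is in the ``furthermore, polynomial time'' step: the family of constraints indexed by $w \in B(u,r)\cap B(0,1)$ is \emph{not} equivalent to a single semidefinite constraint on $M_q = \frac{1}{|W|}\sum_{x\in W} q(x)\,x x^{\top}$. The only natural candidate, $M_q \preceq \sigma^2 I$, corresponds to the constraint over \emph{all} unit $w$, which is strictly stronger and is violated even by $q^\ast$: in directions orthogonal to $u$ the band-conditional second moment is $\Theta(1)$, far above $\sigma^2 = c(r^2+\gamma^2)$ when $r,\gamma$ are small, so the localization to the ball $B(u,r)$ is essential and cannot be dropped (and the S-procedure does not losslessly handle the two-ball intersection). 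The paper (following \cite{awasthi2017power}) instead argues polynomial-time solvability via the ellipsoid method with an efficient separation oracle that (approximately) maximizes the quadratic $w \mapsto w^{\top} M_q w$ over the localized region; replacing your SDP claim with that separation-oracle argument repairs the step, and the rest of your proof stands.
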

\begin{proof}[Proof Sketch]
The theorem is proven by analyzing the clean data points, namely those that are not under the adversary's control, using a pseudodimension-based uniform convergence analysis and properties of log-concave distributions. This uses the fact that the {\em non-noisy} points are a true random sample from the underlying distribution, so this does not apply to the $\cA_\eta$ model used in Section \ref{sec:nastynoise}.

Specifically, the proof uses the following two properties of isotropic log-concave distributions.
\begin{lemma}[Lemma 4.3 of \cite{awasthi2017power}]
\label{lemma:all.var}
If we draw $\ell$ times i.i.d. from
$D_{w_{k-1}, b_{k-1}}$ to form $X_C$, with probability $1 - \delta$, we have that for
any unit length $a$,
\[
\frac{1}{\ell} \sum_{x \in X_C} (a \cdot x)^2
   \leq E[(a\cdot x)^2]
       +  \sqrt{\frac{O(d \log(\ell/\delta) (d + \log (1/\delta)))}{\ell}}.
\]
\end{lemma}

\begin{lemma}[Lemma 3.4 of \cite{awasthi2017power}]
\label{lemma:var.small}
Assume that $D$ is isotropic log-concave.
For any $c_3$, there is a constant $c_4$ such that, for
all $0 < \gamma \leq c_3$,
for all $a$ such that $\Vert u-a \Vert_2 \le r$ and $\Vert a \Vert_2 \le 1$
\[
\E_{x \sim D_{u,\gamma}} ((a \cdot x)^2)
  \leq c_4 (r^2 + \gamma^2) .
\]
\end{lemma}
\noindent
The two lemmas above, along with the fact that the non-noisy points in the $\cA^{mal}_\eta$ model are indeed distributed according to $\cD$, imply that for all
$w \in B(u,r)$,
\[
\frac{1}{|W|} \sum_{x \in W} q^*(x) (a \cdot x)^2
  \leq 2 \E[(a \cdot x)^2]
 \leq c (r^2 + \gamma^2),
\]
for an appropriate constant $c$, where  $q^*(x)=0$ for each noisy point and $q^*(x)=1$ for each non-noisy point as defined above.  Finally, an efficient separation oracle for the convex program yields the theorem.
\end{proof}

The next step is a further analysis of the working set $W$ in each round $k$ of Algorithm \ref{alg:active-algorithm-malicious}.  The analysis partitions $W$ into the ``clean'' set $W_C$ drawn from the true distribution $D_{w_{k-1}, b_{k-1}}$ of points from $\cD$ subject to lying inside the previous band, and the ``dirty'' set $W_D$ controlled by the adversary.  Lemma 4.5 in \cite{awasthi2017power} upper-bounds the size of $W_D$, and its analysis makes the worst-case assumption that every adversarial point lies inside the band.  Since our setting is identical in terms of the probabilistic selection of which points are controlled by the adversary (differing only in allowing the adversary to make its choice after all of $S$ is drawn), this argument also goes through directly. Specifically, Lemma 4.5, adapted to our setting, is as follows:

\begin{lemma}[Analog of Lemma 4.5 of \cite{awasthi2017power} for the $\cA^{mal}_\eta$ noise model]
\label{lemma:few.noisy}
There is an absolute positive constant $c$ such that, with
probability $1 - \frac{\delta}{6 (k+k^2)}$,
\begin{equation}
\label{e:few.noisy}
|W_D| \leq c \eta n_k {M}^k \leq \frac{ c M \eta n_k}{\epsilon}
\end{equation}
\end{lemma}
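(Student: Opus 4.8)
The plan is to follow the argument of Lemma~4.5 of \cite{awasthi2017power}, verifying that the only structural change in our setting --- allowing the adversary to choose its corruptions after the whole sample $S$ is drawn and after any algorithmic randomness is fixed --- does not affect the bound, since the identity of the corrupted positions is still governed by the oblivious mask $v \sim \mathrm{Bernoulli}(\eta)^m$. Concretely, I would reduce the claim to two Chernoff estimates on a fixed-length prefix of $S'$, which conveniently sidesteps the fact that the working set $W$ is filled at a data-dependent stopping time. Recall that in round $k$ the set $W$ is built by scanning $S'$ in order and keeping the first $n_k$ points $x$ in the band $B_{k-1} = \{x : |w_{k-1}\cdot x| < b_{k-1}\}$; let $p_{k-1} = \Pr_{x\sim\cD}[x \in B_{k-1}]$ be the mass of this band, let $T$ be the (random) number of stream positions scanned to fill $W$, and split $W = W_C \cup W_D$ into the retained points at clean positions ($v_i = 0$, i.i.d.\ from $\cD$ restricted to $B_{k-1}$) and at adversary-controlled positions ($v_i = 1$).

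First I would pin down $p_{k-1}$. Since the one-dimensional marginal of an isotropic log-concave distribution along $w_{k-1}$ is itself isotropic log-concave, its density is bounded below by an absolute constant on a fixed interval around the origin, and since the band radii $b_k$ in Algorithm~\ref{alg:active-algorithm-malicious} shrink geometrically per round (by a factor $M$, with $b_{\rounds} = \Theta(\epsilon)$), this gives $p_{k-1} \ge c_0 b_{k-1} \ge c_0' M^{-k}$ for absolute constants $c_0, c_0' > 0$ (and $p_{k-1} \le 2 b_{k-1}$ by Lemma~\ref{lem:isotropic_facts}). Then I would fix $T_0 = \Theta(n_k M^k)$ with a sufficiently large leading constant; the polynomial sample-size schedule of the algorithm guarantees $T_0 \le |S'|$, so the scan cannot exhaust $S'$ within its first $T_0$ positions.

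The heart of the argument is that $T \le T_0$ with high probability, uniformly over all adversary strategies: the expected number of the first $T_0$ stream positions that are simultaneously clean ($v_i = 0$) and land in $B_{k-1}$ is $(1-\eta)p_{k-1}T_0 \ge 2 n_k$ for $\eta \le 1/2$ by our choice of $T_0$, so a Chernoff bound --- valid because the schedule makes $n_k$ polynomially large in $1/\eta$, $1/\delta$, and $k$ --- shows that the first $T_0$ positions already contain at least $n_k$ band points, hence $T \le T_0$, with probability at least $1 - \frac{\delta}{12(k+k^2)}$ (whatever the adversary does to its corrupted points can only make $W$ fill sooner). On this event, $|W_D|$ is at most the number of corrupted positions among the first $T_0$, and since the $v_i$ are i.i.d.\ $\mathrm{Bernoulli}(\eta)$ and independent of everything under adversarial control, a second Chernoff bound gives at most $2\eta T_0$ such positions with probability at least $1 - \frac{\delta}{12(k+k^2)}$. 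A union bound then yields $|W_D| \le 2\eta T_0 = O(\eta n_k M^k) \le c\eta n_k M^k$ with probability at least $1 - \frac{\delta}{6(k+k^2)}$; and since $k \le \rounds = \lceil \log_2(1/\epsilon)\rceil$, we have $M^k \le M^{\rounds} \le M/\epsilon$, which gives the second inequality.

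I expect the main obstacle to be handling the stopping time $T$ correctly, since it depends jointly on the $\mathrm{Bernoulli}(\eta)$ mask, the i.i.d.\ clean draws, and the adversary's worst-case placement of its corrupted points (which it may concentrate inside $B_{k-1}$ to fill $W$ faster). The fixed-prefix device above is what makes this clean: the clean in-band points alone fill $W$ within $T_0$ steps with high probability, so the event $\{T \le T_0\}$ holds regardless of the adversary's behavior, after which only the oblivious randomness of $v$ matters. The remaining work is bookkeeping --- confirming that the polynomial lower bound on $n_k$ needed here is no stronger than the one already imposed for Theorem~\ref{thm:lopchop}, and that the constant $M$ coincides with the band-shrinkage factor of Algorithm~\ref{alg:active-algorithm-malicious}, so that $M^{\rounds} \le M/\epsilon$ indeed holds.
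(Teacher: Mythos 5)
Your proposal is correct and follows essentially the same route as the paper's proof: lower-bound the band mass by $\Omega(M^{-k})$, use a Chernoff bound to show $O(n_k M^k)$ stream positions suffice to fill $W$ (the adversary's worst case being to place all its corrupted points inside the band), and then a second Chernoff bound on the oblivious $\mathrm{Bernoulli}(\eta)$ mask to bound the number of corrupted positions among those, with the final inequality coming from $k \leq \lceil \log_M(1/\epsilon)\rceil$. Your fixed-prefix treatment of the stopping time $T$ is simply a more explicit rendering of the step the paper handles implicitly, and your closing caveat about the round count ($\log_2$ in the algorithm statement versus $\log_M$ in the analysis) is a bookkeeping discrepancy already present in the paper itself.
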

\begin{proof}[Proof Sketch]
Since the underlying distribution of non-noisy points is isotropic log-concave, the
probability that a random example from $\cD$ falls in the band is
$\Omega(M^{-k})$.  By Chernoff bounds, this implies that $O({n_k {M}^k})$ examples from $\cD$ are sufficient so that with probability at least $(1 - \frac{\delta}{12 (k+k^2)})$,
$n_k$ examples fall inside the band. In the $\cA^{mal}_\eta$ model, the probability that each example drawn
is noisy is $\eta$, and in the worst case, the adversary can place each noisy example inside the band. Since with probability at least $(1 - \frac{\delta}{12 (k+k^2)})$ the total number of examples drawn is $O({n_k {M}^k})$, Chernoff bounds imply that for some constant $c$, with
probability at least $1 - \frac{\delta}{12 (k+k^2)}$ we have
$|W_D| \leq c \eta n_k M^k$.  The second inequality follows from the fact that $k \leq \lceil \log_M (1/\epsilon) \rceil$.
\end{proof}

The bound of Lemma \ref{lemma:few.noisy} above is then used in Lemma 4.7 of \cite{awasthi2017power}, which relates the average hinge loss of examples in $W$ when weighted according to the solution $q$ found by Algorithm \ref{alg:outlier-removal} (scaled to sum to 1 over $W$) to the average hinge loss of the clean examples $W_C$ weighted uniformly. In particular, it shows that the soft outlier-removal performed by Algorithm \ref{alg:outlier-removal} is comparable to correctly identifying and removing the adversarial data with respect to the loss of any $w \in B({w}_{k-1},r_k)$. Lemma \ref{lemma:few.noisy} is used in equation (11) of the proof of Lemma 4.7 (see below), and is the only property of $W_D$ used in the proof.

Specifically, let $p$ denote the weighting $q$ over $W$ produced by Algorithm \ref{alg:outlier-removal}, scaled to sum to 1.  Constraints a and b in the definition of $q$ ensure that the total variation distance between $p$ and the uniform distribution over $W$ is at most $\xi$.  Now, let $\ell(w,p)$ be the average hinge-loss of points in $W$ weighted by $p$, and let $\ell(w,W_C)$ denote the average hinge loss of the clean examples $W_C$ weighted uniformly.  Lemma 4.7 of \cite{awasthi2017power} relates these as follows:

\setcounter{equation}{7}
\begin{lemma}[Analog of Lemma 4.7 of \cite{awasthi2017power} for the $\cA^{mal}_\eta$ noise model]
\label{lemma:clean.vs.filtered.lc}
There are absolute constants $C_1$, $C_2$ and $C_3$ such that,
with probability $1 - \frac{\delta}{2 (k+k^2)}$,
if we define $z_k = \sqrt{r_k^2 + {b^2_{k-1}}}$, then
for any $w \in B({w}_{k-1},r_k)$, we have
\begin{equation}
\label{e:clean.by.filtered}
\ell(w, W_C)
\leq \ell(w,p)
        + \frac{C_1 \eta}{\epsilon}
    \left(1 + \frac{z_k}{\tau_k}
                        \right) + \kappa/32
\end{equation}
and
\begin{equation}
\label{e:filtered.by.clean}
\ell(w,p)
  \leq 2 \ell(w,W_C)
        + \kappa/32
        +  \frac{C_2 \eta}{\epsilon}
        + C_3 \sqrt{\frac{\eta}{\epsilon}} \times \frac{z_k}{\tau_k}.
\end{equation}
\end{lemma}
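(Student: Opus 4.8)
The plan is to adapt the proof of Lemma~4.7 in \cite{awasthi2017power} essentially verbatim, verifying that each property of the working set on which that proof rests still holds in the $\cA^{mal}_\eta$ model and under the derandomization of Step~2(b) of Algorithm~\ref{alg:active-algorithm-malicious}. Fix a round $k$, write $|W| = n_k$, and decompose $W = W_C \sqcup W_D$, where $W_D$ is the set of adversarially-controlled points that happen to land in the band $|w_{k-1}\cdot x| < b_{k-1}$ and $W_C = W\setminus W_D$; recall that the $\tau_k$-hinge loss obeys $0 \le \ell_{\tau_k}(w,x) \le 1 + |w\cdot x|/\tau_k$. The argument needs exactly three facts: \textbf{(i)} $W_C$ is an i.i.d.\ sample from the band-conditional distribution $D_{w_{k-1},b_{k-1}}$; \textbf{(ii)} $|W_D| \le cM\eta n_k/\epsilon$, so $|W_D|/|W| = O(\eta/\epsilon)$; and \textbf{(iii)} the weighting $q$ output by Algorithm~\ref{alg:outlier-removal}, normalized to a distribution $p$ on $W$, satisfies the variance bound~\eqref{e:lopchop.varbound} with $\gamma = b_{k-1}$ for every unit $w \in B(w_{k-1},r_k)$, and, from constraints~a,~b defining $q$, also $\mathrm{TV}(p,\mathrm{Unif}(W)) \le \xi$ and $p(x) \le 1/((1-\xi)|W|)$. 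Fact~(ii) is Lemma~\ref{lemma:few.noisy} and fact~(iii) is Theorem~\ref{thm:lopchop}; fact~(i) is the only model-specific ingredient and is immediate in the $\cA^{mal}_\eta$ model, since each example is independently either a genuine draw from $\cD$ (with probability $1-\eta$) or adversarial, so conditioning on which points are clean, the clean points landing in the band are i.i.d.\ from $D_{w_{k-1},b_{k-1}}$ --- precisely the property that fails for the $\cA_\eta$ adversary of Section~\ref{sec:nastynoise}, which chooses which of the drawn points to corrupt, and for which this analysis (in particular the feasibility of the outlier-removal witness $q^*$ in Theorem~\ref{thm:lopchop}) breaks down.

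Granting (i)--(iii), the chain of inequalities is the one in \cite{awasthi2017power}. Using $\ell_{\tau_k}(w,x) \le 1 + |w\cdot x|/\tau_k$, both the reweighting gap $|\ell(w,p) - \ell(w,\mathrm{Unif}(W))|$ and the contribution of $W_D$ (to $\ell(w,\mathrm{Unif}(W))$, or to $\ell(w,p)$) split into a bounded part, controlled by the displaced or removed fraction of probability mass --- at most $\xi$ for reweighting, at most $|W_D|/|W| = O(\eta/\epsilon)$ for $W_D$ --- and a linear part $\tfrac{1}{\tau_k}\sum_x (\text{weight}_x)\,|w\cdot x|$ that Cauchy--Schwarz bounds by $\tfrac{1}{\tau_k}\sqrt{(\text{displaced or removed mass})}\cdot\sqrt{\sum_x (\text{weight}_x)(w\cdot x)^2}$. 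The right-hand square root is $O\big(\sqrt{r_k^2 + b_{k-1}^2}\big) = O(z_k)$: for the $p$-weighted sums by~\eqref{e:lopchop.varbound}, and for the uniform sums over $W_C$ by Lemmas~\ref{lemma:all.var} and~\ref{lemma:var.small} applied to the clean sample, which is legitimate by~(i). Assembling these estimates --- and keeping track of the $(1 \pm O(\eta/\epsilon))$ gap between the normalizers $1/|W_C|$ and $1/|W|$, and the factor $1/(1-\xi) \le 2$ picked up when renormalizing $p$ onto $W_C$ --- gives~\eqref{e:clean.by.filtered} with error $O(\tfrac{\eta}{\epsilon})(1 + z_k/\tau_k)$ and~\eqref{e:filtered.by.clean} with error $O(\tfrac{\eta}{\epsilon}) + O(\sqrt{\eta/\epsilon})\,z_k/\tau_k$, the $\sqrt{\eta/\epsilon}$ being exactly the Cauchy--Schwarz square root of the removed-mass fraction from~(ii). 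The remaining $O(\kappa)$-scale terms are the optimization and approximation slack, handled exactly as in \cite{awasthi2017power} (and if anything easier here, since derandomizing Step~2(b) eliminates the subsampling error $\kappa$ partly accounts for there); the stated failure probability $1-\delta/(2(k+k^2))$ comes from (ii), (iii), and the uniform-convergence guarantee on $W_C$ (which relies on the log-concave tail bound of Lemma~\ref{lem:isotropic_facts} and a pseudo-dimension argument, as in \cite{awasthi2017power}), and a union bound over $k \le \lceil\log_2(1/\epsilon)\rceil$ feeds into the overall $1-\delta$ guarantee of Algorithm~\ref{alg:active-algorithm-malicious}.

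The main obstacle, and the only genuinely new point relative to \cite{awasthi2017power}, is that our adversary is stronger than theirs: it may pick $S' \in \cA^{mal}(S,v)$ after seeing the \emph{entire} sample $S$ and --- since the algorithm has been made deterministic --- after the algorithm's behavior is fixed, whereas \cite{awasthi2017power} assume a sequential adversary ignorant of future draws and of the algorithm's randomness. The resolution is that every quantity above is a deterministic function of the clean sample $W_C$ and the count $|W_D|$; the only randomness is in the i.i.d.\ draw of the clean points and the $\mathrm{Bernoulli}(\eta)^m$ vector $v$, neither of which the adversary influences. Consequently the concentration events invoked --- the variance and uniform-convergence bounds on $W_C$, and the size bound on $W_D$ --- hold with the stated probability regardless of the adversary's eventual choice, and conditioned on them the two inequalities hold simultaneously for all unit $w \in B(w_{k-1},r_k)$ and all $S' \in \cA^{mal}(S,v)$. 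The one additional check this forces is that the feasibility witness for Algorithm~\ref{alg:outlier-removal} used in Theorem~\ref{thm:lopchop}, namely $q^* \equiv 1$ on $W_C$ and $q^* \equiv 0$ on $W_D$, is still feasible --- which it is, because $q^*$ never refers to the locations or labels of the dirty points, so it does not matter that those were chosen adversarially and with full information.
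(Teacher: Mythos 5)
Your proposal is correct and follows essentially the same route as the paper: it reduces the lemma to the size bound on $W_D$ (Lemma~\ref{lemma:few.noisy}), the variance/TV properties of the outlier-removal weighting from Theorem~\ref{thm:lopchop}, and the fact that the clean points are a genuine i.i.d.\ sample from the band-conditional distribution, then reruns the Cauchy--Schwarz/hinge-loss decomposition of Lemma~4.7 of \cite{awasthi2017power} to obtain exactly the stated $O(\eta/\epsilon)(1+z_k/\tau_k)$ and $O(\eta/\epsilon)+O(\sqrt{\eta/\epsilon})\,z_k/\tau_k$ error terms. Your discussion of the stronger (full-information, post-hoc) adversary and of why the feasibility witness $q^*$ and the concentration events are unaffected by its choices matches the paper's treatment of that point.
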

\begin{proof}[Proof Sketch]
Given the lemmas shown above, the proof now follows exactly as in \cite{awasthi2017power}.  In particular, the key properties of the data used are the size of the set $W_D$, which comes from Lemma \ref{lemma:few.noisy} analyzed above and is used inequation (\ref{e:few.noisy.lc}) below, and the fact that the clean data is distributed according to $\cD$ (which is true in the $\cA^{mal}_\eta$ model).  Specifically,
from the lemmas above, with probability at least {
$1 - \frac{\delta}{2(k+k^2)}$, there are constants
$K_1$, $K_2$ and $K_3$ such that  
\begin{eqnarray}
\label{e:tvar.q}
\frac{1}{|W|}
  \sum_{x \in W} q(x) (w \cdot x)^2 & \leq & K_1 z^2_k \\
\label{e:few.noisy.lc}
|W_D| & \leq & \frac{K_2 \eta n_k}{\epsilon} \\
\label{e:clean.var}
\frac{1}{|W_C|} \sum_{(x,y) \in W_C} (w \cdot x)^2
     & \leq & K_3 z_k^2.
\end{eqnarray}
}
Assume these indeed hold. Equation (\ref{e:tvar.q}) and the fact that
{$\sum_{x \in W} q(x) \geq (1 - \xi_k) |W| \geq |W|/2$ imply
$
\sum_{x \in W} p(x) (w \cdot x)^2 \leq {2 K_1}z^2_k.
$
}
Combining this with equation (\ref{e:few.noisy.lc}) and that fact that the total variation distance between $p$ and the uniform distribution over $W$ is at most $\xi$, and using Cauchy-Schwartz, yields the following bound on the weighted loss of the noisy examples in $W$:
{
\begin{equation}
 \label{e:noisy.loss}
  \sum_{(x,y) \in W_D} p(x) \ell(w,x,y) \le
    K_2 \eta/\epsilon  + \xi_k
      + \sqrt{2 K_1 K_2 \eta/\epsilon + \xi_k} \left(\frac{z_k}{\tau_k}\right).
 \end{equation}
}
A similar argument gives the following bound on the weighted loss over all the examples in $W$:
{
\begin{equation}
\label{e:total.loss}
\sum_{(x,y) \in W} p(x) \ell(w,x,y) \le 1
   + \sqrt{2 K_1} \left( \frac{ z_k}{\tau_k} \right).
\end{equation}
}
Next, using (\ref{e:clean.var}) together with Cauchy-Schwartz and the fact that $\sum_{x \in W} q(x) \geq (1 - \xi_k) |W|$ and $q(x)\in [0,1]$, we can upper-bound the average loss over the clean examples:
{
\begin{align*}
\ell(w,W_C)
 & \leq \frac{1}{|W_C|} \left( \sum_{(x,y) \in W} q(x) \ell(w,x,y) \right)
  + 2 \xi_k + \sqrt{ 2 \xi_k K_3 } \left( \frac{z_k}{\tau_k} \right). \\
\end{align*}
}
Finally, $\xi_k$ is chosen small enough so the last two terms above are at most $\kappa/32$.  Applying further manipulation together with (\ref{e:few.noisy.lc}) and (\ref{e:total.loss}) yields the first inequality in the Lemma.  The second inequality follows from partitioning $\ell(w,p)$ into a sum over clean points and a sum over noisy points, and then applying (\ref{e:noisy.loss}) together with (\ref{e:few.noisy.lc}) and the fact that $p(x)\leq 1$ for all $x$.
\end{proof}

These results now can be combined to give the algorithm's guarantee for the $\cA^{mal}_\eta$ noise model.
\begin{theorem}[Analog of Theorem 4.1 in \cite{awasthi2017power} for the $\cA^{mal}_\eta$ noise model]
\label{t:malicious.detailed}
Let distribution $\cD$ over $R^d$ be isotropic log-concave.  Let
$w^*$ be the (unit length) target weight vector.  There are settings
of the parameters of Algorithm~\ref{alg:active-algorithm-malicious},
and positive constants $M$, $C$ and $\epsilon_0$, such that in the $\cA^{mal}_\eta$ noise model, for all
$\epsilon < \epsilon_0$, for any $\delta>0$, if $\eta < C \epsilon$,
$n_k = \mathrm{poly}(d,M^k, \log(1/\delta))$,
and
$\theta(w_0,w^*) < \pi/2$, then after $s=O(\log(1/\epsilon))$ iterations, the algorithm
finds $w_{\rounds}$ satisfying $\err(w_{\rounds}) \leq \epsilon$ with
probability $\geq 1-\delta$.
\end{theorem}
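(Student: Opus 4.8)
The plan is to follow the inductive proof of Theorem~4.1 of \cite{awasthi2017power} essentially verbatim, substituting the three $\cA^{mal}_\eta$-adapted ingredients established above: Theorem~\ref{thm:lopchop} (feasibility of the soft outlier-removal program and its variance bound), Lemma~\ref{lemma:few.noisy} (the size of the adversarial part $W_D$ of each working set), and Lemma~\ref{lemma:clean.vs.filtered.lc} (the two-sided comparison between the $p$-weighted hinge loss on $W$ and the uniform hinge loss on the clean part $W_C$). The one structural fact that licenses these substitutions is that in the $\cA^{mal}_\eta$ model the clean points inside any working set $W$ form an honest i.i.d.\ sample from $\cD$ conditioned on the current band $D_{w_{k-1},b_{k-1}}$, which is exactly the property used in \cite{awasthi2017power}; the adversarial points are absorbed by worst-case reasoning (pretending they all fall inside the band), so the adversary's placement and labels never enter the argument.

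First I would adopt the geometrically-shrinking parameter schedule of \cite{awasthi2017power}: band width $b_k$ and ball radius $r_k$ each halving per round, margin $\tau_k = \Theta(b_{k-1})$ with a small constant so that $z_k = \sqrt{r_k^2 + b_{k-1}^2} = \Theta(b_{k-1})$ and $z_k/\tau_k = \Theta(1)$, removal rate $\xi_k$ a small constant, variance bound $\sigma_k^2 = \Theta(r_k^2 + b_{k-1}^2)$, $n_k = \mathrm{poly}(d, M^k, \log(1/\delta))$, and $\kappa = \Theta(\epsilon)$. The induction runs over rounds $k = 1,\dots,\rounds = \lceil \log_2(1/\epsilon) \rceil$ with invariant $\err_\cD(w_{k-1}) \le r_k$ — equivalently $w^* \in B(w_{k-1},r_k)$, which is what the ball constraint in Step~2(b) needs; the base case follows from $\theta(w_0,w^*) < \pi/2$ and the angle--error relation for isotropic log-concave distributions (Lemma~\ref{lem:ilc-angle} and its reverse direction). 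For the inductive step I would condition on the intersection of the good events of Theorem~\ref{thm:lopchop}, Lemma~\ref{lemma:few.noisy}, and Lemma~\ref{lemma:clean.vs.filtered.lc} in round $k$, each holding with probability $1 - O(\delta/k^2)$ once $n_k$ is a large enough polynomial; a union bound over the $\rounds$ rounds leaves total failure probability at most $\delta$.

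The core per-round argument is then exactly the one in \cite{awasthi2017power}. Since the clean band-conditioned data is noiseless and $w^* \in B(w_{k-1},r_k)$, the target has small uniform hinge loss $\ell(w^*,W_C) = O(\tau_k/b_{k-1})$, a prescribed small constant by the choice of $\tau_k$; feeding this into the second inequality of Lemma~\ref{lemma:clean.vs.filtered.lc} bounds $\ell(w^*,p)$ by a small constant plus $O(\eta/\epsilon) + \kappa/32$; since $v_k$ approximately minimizes $\ell(\cdot,p)$ over $B(w_{k-1},r_k) \cap B(\mathbf{0},1)$ up to $\kappa/32$ and the normalization $w_k = v_k/\|v_k\|$ changes the hinge loss by a controlled amount, the same bound (up to constant factors and one further $\kappa/32$) holds for $\ell(w_k,p)$; and the first inequality of Lemma~\ref{lemma:clean.vs.filtered.lc} converts this into a small-constant bound on $\ell(w_k,W_C)$. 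Using $\eta < C\epsilon$ and $\kappa = \Theta(\epsilon)$, so that $O(\eta/\epsilon)$ and $\Theta(\kappa)$ are each below a prescribed small constant, the standard argument of \cite{awasthi2017power} (Section~3) relating band-conditioned hinge loss, band-conditioned error, and angle for isotropic log-concave $\cD$ then gives $\err_\cD(w_k) \le \frac{1}{2} r_k = r_{k+1}$, closing the induction. After $\rounds = O(\log(1/\epsilon))$ rounds we obtain $\err_\cD(w_{\rounds}) \le r_{\rounds + 1} = O(\epsilon)$, and choosing the numerical constants appropriately gives $\err_\cD(w_{\rounds}) \le \epsilon$. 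Finally, since Step~2(b) now uses the deterministic $p$-weighted loss rather than a random sub-sample of $W$, the algorithm carries no internal randomness, so every bad event above is a function of $(S,v)$ alone and was already defined against the worst-case adversary; hence with probability $\ge 1-\delta$ over $S \sim \cD^m$ and $v \sim \mathrm{Bernoulli}(\eta)^m$ the algorithm succeeds for \emph{every} $S' \in \cA^{mal}(S,v)$, which is the form needed in the proof of Theorem~\ref{lem:abl}.

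The step I expect to be the real obstacle is not any individual inequality but the joint consistency of the parameter schedule across all rounds: $r_k$ must be large enough that $w^*$ stays in $B(w_{k-1},r_k)$ under the invariant, yet small enough that the variance and hinge-loss estimates close; the constraint $\xi_k > 2\eta'$ in Theorem~\ref{thm:lopchop}, with $\eta' = \eta/\Pr_\cD[D_{w_{k-1},b_{k-1}}] = \Theta(\eta M^k) = O(\eta/\epsilon)$, forces $\eta/\epsilon$ to be a small constant, which is exactly where the hypothesis $\eta < C\epsilon$ is consumed; and the additive errors $O(\eta/\epsilon) + \Theta(\kappa)$ in Lemma~\ref{lemma:clean.vs.filtered.lc} must remain a small fraction of $r_k$ uniformly in $k$. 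This bookkeeping is identical to \cite{awasthi2017power}; the only genuinely new points — that everything survives the switch to $\cA^{mal}_\eta$ and to the deterministic variant of the algorithm — have been discharged in the adapted lemmas above, so the remaining task is simply to assemble them in the right order.
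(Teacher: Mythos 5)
Your proposal matches the paper's own argument: both run the AB-L induction (geometrically shrinking bands/radii, per-round hinge-loss comparison via the adapted Lemma~\ref{lemma:clean.vs.filtered.lc} together with Theorem~\ref{thm:lopchop} and Lemma~\ref{lemma:few.noisy}, then the inside-band/outside-band split using log-concavity to get $\err_\cD(w_k)$ shrinking by a constant factor each round), rely on the same structural facts (clean band points are i.i.d.\ from $D_{w_{k-1},b_{k-1}}$, adversarial points handled worst-case, $\eta<C\epsilon$ consumed through the $\xi_k>2\eta'$ constraint and the $O(\eta/\epsilon)$ additive terms), and make the same key observation that the deterministic computation of the $p$-weighted loss in Step~2(b) eliminates internal randomness so the high-probability event is over $(S,v)$ alone and holds simultaneously for all $S'\in\cA^{mal}(S,v)$. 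The only differences are cosmetic (you fix the shrinkage factor at $2$ where the paper leaves $M$ a sufficiently large constant), so the proposal is correct and essentially identical in approach.
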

\begin{proof}[Proof Sketch]
First, by a uniform convergence analysis, with high probability for all $w \in B(w_{k-1},r_k)$ we have
$|\err_{D_{w_{k-1}, b_{k-1}}}({w}) - \ell(w,W_C)| \leq \kappa/16$. Assume this holds, as well as equations (\ref{e:clean.by.filtered}) and (\ref{e:filtered.by.clean}) from Lemma \ref{lemma:clean.vs.filtered.lc}. 
Next, we analyze as in \cite{awasthi2017power} but with one key change: Algorithm \ref{alg:active-algorithm-malicious} deterministically computes $\ell(v_k,p)$ in step 2(b) rather than using a random labeled sample from $p$ as in \cite{awasthi2017power}; thus, we do not need to relate the sampled loss and true expected loss with respect to $p$. This change in the algorithm was crucial because in the $\cA^{mal}_\eta$ model, the adversary can effectively choose its corruptions {\em after} observing any internal randomness of the algorithm.
Specifically, we now have:
\begin{eqnarray*}
\err_{D_{w_{k-1}, b_{k-1}}}({w_k}) & \leq & \ell({v_k},W_C) + \kappa/16\\
& \leq & \ell({v_k},p)
   + \frac{C_1 \eta}{\epsilon} \left(1 + \frac{z_k}{\tau_k}
                        \right)
   + \kappa/8
    \\
  & \leq & \ell(w^*,p)
   + \frac{C_1 \eta}{\epsilon} \left(1 + \frac{z_k}{\tau_k}
                        \right)
   + \kappa/8
    \mbox{\hspace{0.1in}(since $w^* \in B({w}_{k-1},r_k)$)} \\
   & \leq & 2 \ell(w^*,W_C)
        +  \frac{C_2 \eta}{\epsilon}
        + C_3 \sqrt{\frac{\eta}{\epsilon}} \times \frac{ z_k}{\tau_k}
     + \frac{C_1 \eta}{\epsilon} \left(1 + \frac{z_k}
                                           {\tau_k} \right)
   + \kappa/4 \\
  & \leq & \kappa/3
        +  \frac{C_2 \eta}{\epsilon}
        + C_3 \sqrt{\frac{\eta}{\epsilon}} \times \frac{ z_k}{\tau_k}
   + \frac{C_1 \eta}{\epsilon} \left(1 + \frac{z_k}{\tau_k} \right)
   + \kappa/2.
\end{eqnarray*}
Using the fact that $z_k = O(\tau_k)$,
an $\Omega(\epsilon)$ bound on
$\eta$ is sufficient so
that $\err_{D_{w_{k-1}, b_{k-1}}}({w_k}) \le \kappa$ with probability
$(1-\frac{\delta}{k+k^2})$. 

Finally, the theorem follows from an inductive argument showing that after $k$ iterations, with high probability we have $\err_\cD({w}_k) \leq M^{-k}$.  In particular, since $\err_\cD({w}_{k-1}) \leq M^{-(k-1)}$ by induction, the angle between $w_{k-1}$ and $w^*$ is $O(M^{k-1})$. Also, $r_k$ is chosen to be proportional to $M^{-(k-1)}$ so the angle between $v_k$ and $w_{k-1}$ is also $O(M^{k-1})$.  $M$ is chosen a sufficiently large constant so that by properties of isotropic log-concave distributions (Theorem 4 of \cite{balcan2013active}), $w_k$ has error at most $M^{-k}/4$ with respect to $w_{k-1}$ outside the band $|{w}_{k-1}\cdot
x| \leq b_{k-1}$, and $w_{k-1}$ has error at most $M^{-k}/4$ with respect to $w^*$ outside the band. So, $w_k$ has error at most $M^{-k}/2$ with respect to $w^*$ outside the band.  From above, we have that $\err_{D_{w_{k-1}, b_{k-1}}}({w_k}) \le \kappa$. Combining this with a bound on the probability mass in the band gives error at most $M^{-k}/2$ inside the band.  So, $\err_\cD({w}_k) \leq M^{-k}$ as desired, proving the theorem.
\end{proof}

\section{Supporting results for Section \ref{sec:non-realizable}}\label{app:agnostic}

\begin{lemma}\label{lem:redistribution}
Let $(\Omega,\cF,P)$ be a probability space. Let $A\in\cF$ and $\alpha\le P(A)$. Then we have probability spaces $(A,2^A\cap \cF,P_1)$ and $(\Omega,\cF,P_2)$ such that for any $S\in\cF$, $P(S)=\alpha P_1(A\cap S)+(1-\alpha)P_2(S)$.
\end{lemma}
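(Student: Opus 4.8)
The plan is to build $P_1$ and $P_2$ explicitly from $P$ — $P_1$ as the conditional law of $P$ on $A$, and $P_2$ as the rescaled complementary part — and then verify the mixture identity directly. I would first dispose of the degenerate cases: if $\alpha = 0$ take $P_2 = P$ (and $P_1$ any probability measure on $A$), and if $P(A) = 0$ then $\alpha \le 0$ forces $\alpha = 0$, so these reduce to the trivial case. Hence assume $0 < \alpha \le P(A)$, which in particular gives $P(A) > 0$. I would also note at the outset that, since $A \in \cF$, the trace $\sigma$-algebra satisfies $2^A \cap \cF = \{A \cap S : S \in \cF\}$, so $A \cap S \in 2^A \cap \cF$ for every $S \in \cF$ and all the expressions in the statement make sense.

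Next I would define $P_1$ on $(A, 2^A \cap \cF)$ by $P_1(B) = P(B)/P(A)$; since $P$ is countably additive and $P(A) > 0$, this is a probability measure with $P_1(A) = 1$. In the subcase $\alpha < 1$, I would then define $P_2$ on $\cF$ by
\[
P_2(S) \;=\; \frac{P(S) - \alpha\, P_1(A \cap S)}{1-\alpha} \;=\; \frac{1}{1-\alpha}\Bigl(P(S) - \tfrac{\alpha}{P(A)}\,P(A \cap S)\Bigr),
\]
which is precisely the value forced by the desired identity. The checks are: (i) $P_2(S) \ge 0$, because $P(S) \ge P(A \cap S) \ge \tfrac{\alpha}{P(A)} P(A \cap S)$ by monotonicity of $P$ and $\alpha/P(A) \le 1$; (ii) $P_2(\Omega) = (1-\alpha)/(1-\alpha) = 1$; and (iii) countable additivity, which is immediate since both $S \mapsto P(S)$ and $S \mapsto P(A \cap S)$ are countably additive on $\cF$ and $P_2$ is a fixed linear combination of them. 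Thus $P_2$ is a probability measure on $(\Omega, \cF)$, and $\alpha P_1(A \cap S) + (1-\alpha) P_2(S) = P(S)$ holds for all $S \in \cF$ by construction. In the remaining subcase $\alpha = 1$, the hypothesis $\alpha \le P(A) \le 1$ forces $P(A) = 1$, so $P(A^c) = 0$ and $P(S) = P(A \cap S)$ for every $S$; here I would take $P_2 = P$ (its coefficient is $0$, so any probability measure works) and note $\alpha P_1(A \cap S) + (1-\alpha) P_2(S) = P_1(A \cap S) = P(A \cap S) = P(S)$.

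The whole argument is elementary; the only step needing a moment's care is the nonnegativity of $P_2$ in (i), which is the sole place the hypothesis $\alpha \le P(A)$ is genuinely used (beyond ruling out $P(A) = 0$). The split into $\alpha < 1$ and $\alpha = 1$ is merely cosmetic, to avoid dividing by zero.
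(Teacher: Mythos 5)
Your proposal is correct and follows essentially the same route as the paper's proof: $P_1$ is the conditional measure $P(\cdot)/P(A)$ on $A$, $P_2$ is the rescaled remainder $\frac{1}{1-\alpha}\bigl(P(S)-\alpha P_1(A\cap S)\bigr)$, with the degenerate cases $P(A)=0$ and $\alpha=1$ handled separately and nonnegativity of $P_2$ established from $\alpha\le P(A)$ and monotonicity exactly as in the paper. The only cosmetic difference is that the paper also verifies $P_2(S)\le 1$ directly, a check your argument correctly renders unnecessary since it follows from nonnegativity, additivity, and $P_2(\Omega)=1$.
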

\begin{proof}
If $P(A)=0$, we have $\alpha=0$ and the result holds for $P_2=P$ and any $P_1$. Also, if $\alpha=1$, we have $P(A)=1$ (or, $P(\Omega\setminus A)=0$) and $P_1(T)=P(T)$ for any $T\in 2^A\cap \cF$ defines a probability space over $A$. Further, for any $S\in\cF$, $P_1(A\cap S)=P(A\cap S)=P(S)-P((\Omega\setminus A)\cap S)=P(S)$ and the desired result holds for any $P_2$.

Assume $P(A)>0$ and $\alpha<1$. Define $P_1(T)=P(T)/P(A)$ for any $T\in 2^A\cap \cF$. Notice $(A,2^A\cap \cF,P_1)$ is a probability space. In particular, $P_1(A)=1$ and $0\le P_1(T)\le 1$ for any $T\in 2^A\cap \cF$. For any $S\in\cF$, define
$$P_2(S)=\frac{1}{1-\alpha}P(S)-\frac{\alpha}{1-\alpha}P_1(S\cap A).$$
We have $P_2(\Omega)=\frac{1}{1-\alpha}P(\Omega)-\frac{\alpha}{1-\alpha}\frac{P(\Omega\cap A)}{P(A)}=1$. Also for any $S\in\cF$,
\begin{align*}
    P_2(S)&=\frac{1}{1-\alpha}P(S)-\frac{\alpha}{1-\alpha}\frac{P(S\cap A)}{P(A)}\\
    &\ge \frac{1}{1-\alpha}P(S)-\frac{1}{1-\alpha}P(S\cap A)\ge 0,
\end{align*}
since $P(A)\ge\alpha$ and $P(S)\ge P(S\cap A)$. Also,
\begin{align*}
    P_2(S)&=\frac{1}{1-\alpha}\left(P(S)-\alpha\frac{P(S\cap A)}{P(A)}\right)\\
    &= \frac{1}{1-\alpha}\left(1-P(\Omega\setminus S)-\alpha\frac{P(A)-P(A\setminus (S\cap A))}{P(A)}\right)\\
    &=1-\frac{1}{1-\alpha}\left(P(\Omega\setminus S)-\alpha\frac{P(A\setminus S)}{P(A)}\right)\\
    &\le 1-\frac{1}{1-\alpha}\left(P(\Omega\setminus S)-P(A\setminus S)\right)\le 1.
\end{align*}
$\sigma$-additivity of $P_2$ follows from that of $P$ and $P_1$. Thus, $(\Omega,\cF,P_2)$ is a probability space and the desired relation holds.
\end{proof}

We have the following corollary to Lemma \ref{lem:redistribution}.

\begin{corollary}\label{cor:redistribution}
Let $\cD$ be a distribution over set $S$. Let $\alpha\le\Pr_\cD[A]$ for some $A\subseteq S$. Then there exist distributions $\cD_1$ and $\cD_2$, with $\texttt{supp}(\cD_1)\subseteq A$ and $\texttt{supp}(\cD_2)\subseteq S$, such that $\cD=\alpha \cD_1+(1-\alpha)\cD_2$.
\end{corollary}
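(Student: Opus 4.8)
The plan is to derive this directly from Lemma~\ref{lem:redistribution} by reinterpreting abstract measures as distributions over $S$. First I would instantiate the lemma: take the probability space $(\Omega,\cF,P)$ to be the one underlying $\cD$, with $\Omega = S$ and $\cF$ the $\sigma$-algebra on which $\cD$ is defined, $P = \cD$, and the event $A \subseteq S$ and scalar $\alpha \le \Pr_\cD[A] = P(A)$ exactly as in the corollary statement. Lemma~\ref{lem:redistribution} then supplies a probability space $(A, 2^{A}\cap\cF, P_1)$ and a probability space $(\Omega,\cF,P_2)$ such that $P(E) = \alpha P_1(A\cap E) + (1-\alpha) P_2(E)$ for every $E \in \cF$.

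Next I would package $P_1$ and $P_2$ as distributions over $S$. Define $\cD_2$ to be the distribution corresponding to $P_2$; since $P_2$ is a probability measure on $(\Omega,\cF) = (S,\cF)$, this is immediate and $\texttt{supp}(\cD_2)\subseteq S$ trivially. Define $\cD_1$ to be the distribution over $S$ obtained by pushing $P_1$ forward along the inclusion $A \hookrightarrow S$, i.e.\ $\cD_1(E) := P_1(A\cap E)$ for $E\in\cF$; this is a probability measure on $(S,\cF)$ with $\cD_1(S\setminus A) = P_1(A\cap(S\setminus A)) = P_1(\emptyset) = 0$, so $\texttt{supp}(\cD_1)\subseteq A$ as required. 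With these definitions the identity from the lemma reads $\cD(E) = \alpha\,\cD_1(E) + (1-\alpha)\,\cD_2(E)$ for all $E \in \cF$, i.e.\ $\cD = \alpha\cD_1 + (1-\alpha)\cD_2$ as a mixture of distributions.

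The only points needing a line of care are the degenerate cases, and these are already handled inside Lemma~\ref{lem:redistribution}: if $\alpha = 0$ one may take $\cD_2 = \cD$ and $\cD_1$ any distribution supported on $A$ (if $A=\emptyset$ then $\alpha\le\Pr_\cD[A]=0$ forces $\alpha=0$ and the claim about $\cD_1$ is vacuous), and if $\alpha = 1$ then $\Pr_\cD[A]=1$, so $\cD$ is already supported on $A$ and we take $\cD_1=\cD$ with $\cD_2$ arbitrary. There is essentially no substantive obstacle; the ``hard part'' is purely the bookkeeping of identifying the abstract probability spaces returned by the lemma with distributions over $S$ and checking the two support containments, which the construction above does.
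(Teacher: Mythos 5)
Your proposal is correct and matches the paper's intent exactly: the paper states Corollary~\ref{cor:redistribution} as an immediate consequence of Lemma~\ref{lem:redistribution} with no separate argument, and your instantiation ($\Omega=S$, $P=\cD$, $\cD_1(E):=P_1(A\cap E)$, $\cD_2:=P_2$) together with the support checks is precisely the routine bookkeeping the paper leaves implicit.
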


\end{document}